\theoremstyle{plain}
\newtheorem{theorem}{Theorem}[section]
\newtheorem{proposition}[theorem]{Proposition}
\newtheorem{lemma}[theorem]{Lemma}
\theoremstyle{definition}
\newtheorem{definition}[theorem]{Definition}
\theoremstyle{remark}
\newtheorem{remark}[theorem]{Remark}
\NewDocumentEnvironment{alignb}{b}{
  \begin{align*}
  \refstepcounter{equation} #1 \tag{\theequation}
  \end{align*}
}
\newcommand{\myineq}[2]{\overset{(#2)}{#1}}
\newcommand{\mynorm}[1]{ \left\| #1 \right\| }
\newcommand{\norminf}[1]{ \left\| #1 \right\|_\infty }
\newcommand{\lbrac}[1]{ \left| #1 \right| }
\newcommand{\varbrac}[1]{ \left\{ #1 \right\} }
\newcommand{\brac}[1]{ \left( #1 \right) }
\newcommand{\Fbrac}[1]{ \left [ #1 \right] }
\newcommand{\E}{\mathbb{E}}
\newcommand{\prob}{\mathbb{P}}
\newcommand{\ssa}{{|\mathcal{S}||\mathcal{A}|}}
\newcommand*{\eqenv}[1]{\begin{alignb} #1 \end{alignb} }
\newcommand{\hatt}{\widehat {\mathcal{T}}_{N_{\max}}^{\rho(\sigma)}}
\newcommand{\hatT}{\boldsymbol{\bar {\mathcal{T}}}_{N_{\max}}^{\rho(\sigma)}}
\newcommand{\Varr}[1]{\text{Var}\brac{#1}}
\title{Model-Free Robust Reinforcement Learning with Sample Complexity Analysis}
\author[1]{Yudan Wang}
\author[1,2]{Shaofeng Zou}
\author[3]{Yue Wang}
\affil[1]{%
    Electrical Engineering\\
    University at Buffalo
}
\affil[2]{
Computer Science \& Engineering\\
University at Buffalo
}
\affil[3]{%
    Electrical and Computer Engineering\\
    University of Central Florida
}
\begin{document}
\maketitle

\begin{abstract}
 Distributionally Robust Reinforcement Learning (DR-RL) aims to derive a policy optimizing the worst-case performance within a predefined uncertainty set. Despite extensive research, previous DR-RL algorithms have predominantly favored model-based approaches, with limited availability of model-free methods offering convergence guarantees or sample complexities. This paper proposes a model-free DR-RL algorithm leveraging the Multi-level Monte Carlo (MLMC) technique to close such a gap. Our innovative approach integrates a threshold mechanism that ensures finite sample requirements for algorithmic implementation, a significant improvement than previous model-free algorithms. We develop algorithms for uncertainty sets defined by total variation, Chi-square divergence, and KL divergence, and provide finite sample analyses under all three cases. Remarkably, our algorithms represent the first model-free DR-RL approach featuring finite sample complexity for total variation and Chi-square divergence uncertainty sets, while also offering an improved sample complexity and broader applicability compared to existing model-free DR-RL algorithms for the KL divergence model. The complexities of our method establish the tightest results for all three uncertainty models in model-free DR-RL, underscoring the effectiveness and efficiency of our algorithm, and highlighting its potential for practical applications.
\end{abstract}

\section{ Introduction}
Reinforcement learning (RL)\citep{sutton2018reinforcement} aims to find the optimal policy that maximizes cumulative rewards through interactions with the environment and has witnessed demonstrated success in real applications, including robotics\citep{kober2013reinforcement}, finance, and computer vision. However, in more practical scenarios, direct interaction with the true environment is often unfeasible due to concerns such as safety, resource constraints, and ethical considerations. Consequently, a policy is initially learned within a simulated environment and subsequently transferred to the real environment. Yet due to reasons including unexpected external perturbations and adversarial attacks, a model mismatch between the simulation and the real environment exists, meaning the simulation may not be identical to the real environment. This model mismatch further leads to a degradation in performance when attempting to directly apply the learned policy in the real environment \citep{zhao2020sim}.

One promising framework to address this issue is the DR-RL \citep{iyengar2005robust,nilim2004robustness}. Unlike conventional RL which optimizes performance under a specific environment, DR-RL constructs an uncertainty set of environments and aims to optimize the worst-case performance within this set. If the uncertainty set is designed to encompass the true environment, DR-RL can learn a policy robust to the model uncertainty and provide an optimized lower bound on the true performance.

Numerous algorithms have been studied and proposed for DR-RL, which can be broadly categorized into two groups: model-based methods and model-free methods. Model-based approaches, e.g.,  \citep{shi2023curious,panaganti2022sample,yang2022toward,wang2023achieving}, involve the collection of samples from a simulation environment to estimate an empirical robust MDP. Subsequently, robust dynamic programming \citep{iyengar2005robust} is employed on the empirical MDP to derive the optimal policy. 
In contrast, model-free methods \citep{wang2021online,liu2022distributionally,wang2023model,liang2023single} directly learn the policy while collecting samples, bypassing the need for model estimation and storage. 

While model-based methods generally require fewer samples to derive an optimal policy, storing the entire model becomes prohibitively expensive or impractical for large-scale problems. Conversely, model-free methods offer an efficient alternative that adapts without the need to store the model, facilitating more practical applications. Despite extensive research on model-based methods, the model-free DR-RL approaches remain relatively understudied. This is primarily attributed to the challenge of the distribution shift between the simulation that generates samples and the worst-case environment within the uncertainty set. The utilization of such biased samples introduces errors in each updating step and can accumulate deviations from the accurate values through the model-free bootstrapping algorithms, thus posing challenges in ensuring convergence and accurately quantifying algorithmic complexity.



To address the challenge of biased estimated updating, \citep{liu2022distributionally, wang2023model} propose a Multi-level Monte Carlo (MLMC) operator, renowned for its unbiased estimation of worst-case performance, leading to asymptotically convergent model-free algorithms. However, implementing the MLMC estimator in these works necessitates an infinite number of samples. Specifically, to construct the vanilla MLMC estimator, the learner first generates a random level number $N$ following a geometric distribution and then generates $2^{N+1}$ samples. To ensure algorithm convergence, the parameter of the geometric distribution is set to be less than $\frac{1}{2}$, resulting in an infinite expected total number of samples required. Subsequently, in \citep{wang2023finite}, a modified MLMC algorithm is introduced, requiring finite samples for implementation under the KL divergence uncertainty set. Nevertheless, their findings are constrained by a restrictive assumption, limiting their applicability. In this paper, we present a novel MLMC-based DR-RL algorithm by incorporating a threshold design, referred to as the threshold-MLMC (T-MLMC) algorithm. This design ensures our implementation demands only a finite number of samples for any general uncertainty set models. Furthermore, we provide complexity analysis for our T-MLMC algorithm under three uncertainty sets without relying on any restrictive assumptions. Our contributions are outlined as follows.


\subsection{Major Contributions}
\textbf{We introduce a model-free T-MLMC algorithm for DR-RL with guaranteed implementation and convergence.} Unlike previous MLMC algorithms, which typically require an infinite number of samples for implementation, our approach incorporates a threshold design on the level number during the construction of our MLMC estimator. This design ensures that our estimator behaves similarly to the traditional MLMC estimator when the level number remains below the threshold. However, it adopts a simplified structure requiring fewer samples when the level number exceeds the threshold. By implementing this threshold design, we ensure that only a finite number of samples is necessary to construct the estimator, albeit with the trade-off of introducing bias. Nevertheless, we demonstrate that our algorithm converges to a close approximation of the optimal robust value function, where the approximation error exponentially diminishes as the threshold value increases. By setting a suitable threshold value, our algorithm represents the first model-free DR-RL algorithm applicable to general uncertainty sets, providing assurances of both sample finiteness and convergence. This characteristic renders our algorithm practical for implementation and highlights its potential for diverse applications.


\textbf{We establish that our algorithm achieves the tightest sample complexity across three distinct uncertainty sets among model-free methods.} Adapting our algorithm to accommodate three uncertainty set models—defined by total variation, Chi-square divergence, and KL divergence, we ascertain their respective sample complexities. By fine-tuning the threshold, we strike a balance between bias and sample complexity, demonstrating that our algorithms effectively identify the optimal robust policy with minimal samples. Specifically, for both total variation and Chi-square divergence uncertainty sets, our algorithms achieve $\epsilon$-optimality with $\widetilde{\mathcal{O }}\left(\frac{|\mathcal{S}||\mathcal{A}|}{(1-\gamma)^5\epsilon^2} \right)$ samples, where $|\mathcal{S}|$ and $|\mathcal{A}|$ denote the cardinality of the state and action space, respectively, and $\gamma$ represents the discount factor. For the KL divergence uncertainty set, our algorithm exhibits a sample complexity of $\widetilde{\mathcal{O }}\left(\frac{|\mathcal{S}||\mathcal{A}|}{(1-\gamma)^5\epsilon^2p_\wedge^2 } \right)$, where $p_\wedge$ signifies the minimal non-zero entry of the nominal transition kernel. Notably, all our results boast the most stringent parameter dependencies, marking the first model-free complexity results for the total variation and Chi-square divergence models, while significantly enhancing previous findings for the KL divergence model. Furthermore, our analysis requires no restrictive assumptions, underscoring the practical applicability of our model-free algorithms. A comprehensive comparison of our results with prior ones is presented in tables \Cref{table:11,table:2,table:3}\footnote{Due to space limitations, we only list part of the complexity results from \citep{shi2023curious} for comparison. The complete results can be found in Table 1 therein.}. Evidently, across all three uncertainty sets, our outcomes achieve the most favorable sample complexity among model-free methods.

\begin{table}[!h]
\vskip 0.15in
\begin{center}
\begin{small}
\begin{sc}
\begin{tabular}{lc}
\toprule
Reference  $\qquad\qquad$Model-Free & Sample Size \\
\midrule
\textsc{\citep{panaganti2022sample}}   \hfill{ $\times$}& $\widetilde{\mathcal{O }}\brac{\frac{|\mathcal{S}|^2|\mathcal{A}|}{(1-\gamma)^4 \epsilon^2} }$  \\
\citep{yang2022toward}    \hfill{ $\times$}& $\widetilde{\mathcal{O }}\brac{\frac{|\mathcal{S}|^2|\mathcal{A}|}{ (1-\gamma)^4\epsilon^2} }$ \\
\citep{clavier2023towards}    \hfill{ $\times$}&$\widetilde{\mathcal{O }}\brac{\frac{|\mathcal{S}||\mathcal{A}|}{ (1-\gamma)^4\epsilon^2} }$ \\
\citep{shi2023curious} 
\hfill{ $\times$}& $\widetilde{\mathcal{O }}\brac{\frac{|\mathcal{S}||\mathcal{A}|}{ (1-\gamma)^3\epsilon^2} }$  \\
\citep{wang2023model}  \hfill{ $\surd$}& Asymptotic  \\
Our work    \hfill{ $\surd$}& $\widetilde{\mathcal{O }}\brac{\frac{|\mathcal{S}||\mathcal{A}|}{ (1-\gamma)^5\epsilon^2} }$ \\
\bottomrule
\end{tabular}
\end{sc}
\end{small}
\end{center}
\vskip -0.1in
\caption{Sample Complexity under TV Uncertainty Set}\label{table:11}
\end{table}

\begin{table}[!h]
\vskip -0.15in
\begin{center}
\begin{small}
\begin{sc}
\begin{tabular}{lc}
\toprule
Reference  $\qquad\qquad$Model-Free & Sample Size \\
\midrule
\citep{panaganti2022sample}    \hfill{ $\times$}& $\widetilde{\mathcal{O }}\brac{\frac{|\mathcal{S}|^2|\mathcal{A}|}{ (1-\gamma)^4\epsilon^2} }$  \\
\citep{yang2022toward}      \hfill{ $\times$}& $\widetilde{\mathcal{O }}\brac{\frac{|\mathcal{S}|^2|\mathcal{A}|}{(1-\gamma)^4 \epsilon^2} }$  \\
\citep{shi2023curious}      \hfill{ $\times$}& $\widetilde{\mathcal{O }}\brac{\frac{|\mathcal{S}||\mathcal{A}|}{ (1-\gamma)^4\epsilon^2} }$  \\
\citep{wang2023model}    \hfill{ $\surd$}& Asymptotic \\
Our work    \hfill{ $\surd$}& $\widetilde{\mathcal{O }}\brac{\frac{|\mathcal{S}||\mathcal{A}|}{ (1-\gamma)^5\epsilon^2} }$  \\
\bottomrule
\end{tabular}
\end{sc}
\end{small}
\end{center}
\vskip -0.1in
\caption{Sample Complexity under Chi-square Uncertainty Set}\label{table:2}
\end{table}

\begin{table}[!h]
\vskip -0.15in
\begin{center}
\begin{small}
\begin{sc}
\begin{tabular}{lc}
\toprule
Reference  $\qquad\qquad$Model-Free & Sample Size  \\
\midrule
\citep{panaganti2022sample}     \hfill{ $\times$}& $\widetilde{\mathcal{O }}\brac{\frac{|\mathcal{S}|^2|\mathcal{A}|e^{\frac{1}{1-\gamma}}}{ (1-\gamma)^4\epsilon^2} }$  \\
\citep{yang2022toward}     \hfill{ $\times$}&$\widetilde{\mathcal{O }}\brac{\frac{|\mathcal{S}|^2|\mathcal{A}|}{(1-\gamma)^4p_\wedge^2  \epsilon^2} }$ \\
\citep{wang2023model}    \hfill{ $\surd$}& Asymptotic  \\
\citep{liang2023single}   \hfill{ $\surd$}& Asymptotic  \\
\citep{liu2022distributionally}   \hfill{ $\surd$}& Asymptotic   \\
\citep{wang2023finite} \hfill{ $\surd$}& $\widetilde{\mathcal{O }}\brac{\frac{|\mathcal{S}||\mathcal{A}| }{p_\wedge^6 (1-\gamma)^5 \epsilon^2} }$ \\
\citep{wang2023sample} \hfill{ $\surd$}& $\widetilde{\mathcal{O }}\brac{\frac{|\mathcal{S}||\mathcal{A}| }{p_\wedge^3 (1-\gamma)^5 \epsilon^2} }$ \\
\citep{wang2023sample} (VR) \hfill{ $\surd$}& $\widetilde{\mathcal{O }}\brac{\frac{|\mathcal{S}||\mathcal{A}| }{p_\wedge^3 (1-\gamma)^4 \epsilon^2} }$ \\
Our work \hfill{ $\surd$}& $\widetilde{\mathcal{O }}\brac{\frac{|\mathcal{S}||\mathcal{A}|}{p_\wedge^2 (1-\gamma)^5 \epsilon^2} }$  \\
\bottomrule
\end{tabular}
\end{sc}
\end{small}
\end{center}
\vskip -0.1in
\caption{Sample Complexity under KL Uncertainty Set.  VR denotes the result obtained with variance reduce technique.}\label{table:3}
\end{table}

\subsection{Related Works}
\textbf{Model-based Methods for DR-RL}
When the environment is fully known by the learner, robust dynamic programming can be applied to obtain the optimal policy \citep{iyengar2005robust,nilim2004robustness}, which is shown to converge exponentially. When the environment is unknown, the learner can first use samples obtained to construct an empirical transition kernel and an empirical uncertainty set, and then apply robust dynamic programming on this empirical model, e.g., \citep{panaganti2022sample,yang2022toward,shi2023curious,clavier2023towards,zhou2021finite}. Although model-based methods generally are more data efficient, they require large memory space to store the data and model, becoming impractical for large-scale problems.


\textbf{Model-free Methods for DR-RL}
Model-free methods, which learn the optimal robust policy while gathering samples, have been investigated in the context of DR-RL. In \citep{wang2021online}, a model-free algorithm for a contamination uncertainty set is devised, subsequently extended to other uncertainty sets in \citep{liu2022distributionally,wang2023model} through the introduction and application of a multi-level Monte Carlo (MLMC) estimator. Despite exhibiting asymptotic convergence, these algorithms necessitate an infinite number of samples to construct the MLMC estimator, thus lacking a quantified sample complexity. In \citep{wang2023finite}, it is demonstrated that a finite sample complexity for the MLMC algorithm for the KL divergence uncertainty set can be attained under a restrictive assumption, limiting the applicability of their findings. Under a similar assumption, a variance reduction-based algorithm is proposed in \citep{wang2023sample} for the KL divergence model, and sample complexity is obtained.
On the other hand, \citep{liang2023single} introduces a stochastic approximation-based model-free algorithm, achieving asymptotic convergence without assurances on sample complexity. Despite all these works,  designing a model-free DR-RL algorithm with finite sample complexity under minimal assumptions remains an open question. In this paper, we present a model-free DR-RL algorithm, providing finite sample analysis under various uncertainty set models without imposing additional assumptions.



\section{Preliminaries and Problem Formulations}
\subsection{Markov Decision Processes}
A Markov decision processes (MDPs) is specified by $\mathcal{M}=(\mathcal{S}, \mathcal{A}, R,\gamma,  \mathbf{R}_0, \mathbf{P}_0)$, where $\mathcal{S}$
and $\mathcal{A}$ denote the state and action spaces. $R\subset [0,r_{\max}]$ is a finite set of possible rewards; $\mathbf{P}_0=\{p_{s,a}\in \Delta(\mathcal{S}): (s,a)\in\mathcal{S}\times\mathcal{A} \}$ is the transition kernel,  where $p_{s,a}\in \Delta(\mathcal{S})$. $\mathbf R_0=\{\mu_{s,a}\in \Delta(R):(s,a)\in\mathcal{S}\times\mathcal{A}\} $ is the reward distribution. At each time step, the agent starts from state $s_t$ and takes an action $a_t$. The environment transits to the next state $s_{t+1}$ according to the transition kernel $p_{s_t,a_t}$, and provides a reward signal $r(s_t,a_t)\sim \mu_{s_t,a_t}$ to the agent.  

A policy $\pi: \mathcal{S}\to \Delta(\mathcal{A})$\footnote{When $\pi$ is a deterministic policy, i.e., $\pi(\cdot|s)$ is a 0-1 distribution for all $s$, we denote the deterministic action chosen at state $s$ by $\pi(s)$.} denotes the probability of taking actions under different state and represents the strategy of the agent. The value function of a policy $\pi$ is defined as the expected cumulative reward the agent received by following the policy starting from $s$:
\begin{align}
    V^\pi_{\mathbf{P}_0,\mathbf{R}_0}(s)= \mathbb E \Fbrac{\sum_{t=0}^\infty  \gamma^t r_t|s_0=s,{\mathbf{P}_0,\mathbf{R}_0}}. \nonumber
\end{align}

The $Q$-function is defined as the cumulative reward starting from $s$ and action $a$:
\begin{align}
    Q_{\mathbf{P}_0,\mathbf{R}_0}^\pi(s,a)=\mathbb E\Fbrac{\sum_{t=0}^\infty  \gamma^t r_t|s_0=s,a_0=a,{\mathbf{P}_0,\mathbf{R}_0}}. \nonumber
\end{align}
The optimal $Q$-function $Q^{*}$ is defined as
\begin{align}
    Q_{\mathbf{P}_0,\mathbf{R}_0}^*(s,a)=\max_{\pi} Q_{\mathbf{P}_0,\mathbf{R}_0}^\pi(s,a), 
\end{align}
and it satisfies the Bellman equation:
\begin{align}
    Q_{\mathbf{P}_0,\mathbf{R}_0}^*(s,a)= \mathbb E\Fbrac{r_{s,a}+ \gamma \max_{a'\in \mathcal{A}}Q_{\mathbf{P}_0,\mathbf{R}_0}^*(s',a')}. \nonumber
\end{align} 
Moreover, the optimal policy $\pi_{\mathbf{P}_0,\mathbf{R}_0}^*=\arg\max_\pi Q_{\mathbf{P}_0,\mathbf{R}_0}^\pi$ can be obtained from the optimal $Q$-function: $\pi_{\mathbf{P}_0,\mathbf{R}_0}^*(s)= \arg\max_{a\in\mathcal{A}}Q_{\mathbf{P}_0,\mathbf{R}_0}^*(s,a)$.
\subsection{Distributionally Robust MDPs}
In the formulation of distributionally robust MDPs, both transition kernel and reward distribution belong to $(s, a)$-rectangular uncertainty sets $\mathcal{P}^\rho(\sigma)=\bigotimes_{s,a}\mathcal{P}^\rho_{s,a}(\sigma)$ and $\mathcal{R}^\rho(\sigma)=\bigotimes_{s,a}\mathcal{R}^\rho_{s,a}(\sigma)$. Namely, a robust MDP can be specified as $(\mathcal{S},\mathcal{A},R,\gamma,\mathcal{R}^\rho(\sigma),\mathcal{P}^\rho(\sigma))$, where $\mathcal{P}^\rho_{s,a}(\sigma)=\{q\in\Delta(\mathcal{S}): \rho(q,p_{s,a})\leq \sigma\}$, and 
$\mathcal{R}^\rho_{s,a}(\sigma)=\{\nu\in\Delta(R): \rho(\nu,\mu_{s,a})\leq \sigma\}$. Here, $\rho$ denotes any distance or divergence between two distributions, and $\sigma$ denotes the uncertainty level. The centers of the uncertainty sets, $p_{s,a}$ and $\mu_{s,a}$, are called nominal distributions.


We consider three functions that can be used to define an uncertainty set, total variation, Chi-square divergence, and KL divergence. For two distributions $p,q$, the total variation between them is defined as
$\rho_{TV}(q,p):=\frac{1}{2} \mynorm{q-p}_1;$ 
The Chi-square divergence is defined as
$\rho_{\chi^2}(q,p)=\mathbb E_{p}\Fbrac{\brac{1-\frac{q(\cdot)}{p(\cdot)} }^2};$
And the KL divergence is defined as 
 $\rho_{KL}(q,p)=\mathbb E_{p}\Fbrac{\log \frac{q(\cdot)}{p(\cdot)} } . $

DR-RL aims to optimize the worst-case performance among the uncertainty sets, i.e., to optimize the robust value function:
\begin{align}
    \pi^{*,\rho(\sigma)}&=\arg\max_\pi V^{\pi, \rho(\sigma)}\nonumber\\
    &=\arg\max_\pi \inf_{q\in \mathcal{P}^\rho(\sigma), \nu\in \mathcal{R}^\rho(\sigma) } V^\pi_{q,\nu}.
\end{align}
It is also convenient to use notations of the robust state-action value functions:
\begin{align}
    Q^{\pi,\rho(\sigma)}(s,a)&=\inf_{q\in \mathcal{P}^\rho(\sigma), \nu\in \mathcal{R}^\rho(\sigma) } Q^\pi_{q,\nu}(s,a),
\end{align}
and the optimal robust policy can also be derived from it: $\pi^{*,\rho(\sigma)}=\arg\max_\pi Q^{\pi,\rho(\sigma)}$. 

The optimal robust state-action value function is hence denoted as
\begin{align}
    Q^{*,\rho(\sigma)}(s,a)=\sup_{\pi}Q^{\pi,\rho(\sigma)}(s),
\end{align}
and the optimal robust policy can be directly obtained from it: $\pi^{*,\rho(\sigma)}(s)=\arg\max_a  Q^{*,\rho(\sigma)}(s,a)$. 

It is further shown in \citep{iyengar2005robust} that the optimal robust $Q$-function satisfies the following robust Bellman equation:
\begin{align}\label{eq:4}
    Q^{*,\rho(\sigma)}(s,a)
    = &\inf_{\nu\in \mathcal{R}^\rho(\sigma) }\mathbb E_{\nu }\Fbrac{r_{s,a}} \\&+\gamma\inf_{q\in \mathcal{P}^\rho(\sigma) }\mathbb E_{q}\Fbrac{\max_{a'\in \mathcal{A}}Q^{*,\rho(\sigma)}(s',a') } .\nonumber
\end{align}
Hence DR-RL aims to find the optimal robust policy, or equivalently to solve the robust Bellman equation \cref{eq:4}.


\subsection{Strong Duality}
For a general uncertainty set $\mathcal{P}$, directly computing $\inf_{p\in\mathcal{P}} p^\top V$ for any vector $V$ is computationally expensive due to the set containing an infinite number of feasible distributions. However, this optimization problem can be equivalently solved using its dual form, which is a convex optimization \citep{iyengar2005robust,hu2013kullback}. These results play a crucial role in our algorithm design, therefore, we introduce the dual forms corresponding to the three uncertainty sets as follows.



\begin{lemma}[Total variation distance]\citep{iyengar2005robust}
    The optimization problem: 
    \begin{align}\label{eq:v_alpha}
        & minimize \quad \mathbb E_q[v(x)]
        \nonumber\\&
        subject \quad to \quad q\in \varbrac{\rho_{TV} 
        \brac{q,p}\leq \sigma, q\in \Delta(\mathcal{X}) },
        \end{align}
 is equivalent to
    \begin{align}\label{eq:tvu}
        &\max_{u\geq0}\bigg\{\mathbb E_{p} \Fbrac{v(x)-u(x)}
        -\frac{{\sigma}}{2}\text{Span}(v-u),\bigg\}.
    \end{align} 
    where $\text{Span}(X)=\max_i X(i)-\min_i X(i)$. 
If moreover set $$ (v(x))_\alpha= \begin{cases} v(x) & v(x)\leq \alpha \\
                     \alpha &  v(x)>\alpha,
       \end{cases}$$
       then, the optimization problem is also equivalent to
\begin{align}\label{eq:tva}
    \max_{\alpha\geq 0}\varbrac{ \mathbb E_{p}\Fbrac{(v(x))_\alpha}-\frac{{\sigma}}{2}\brac{\alpha-\min_x v(x)} }. 
\end{align} 
\label{lm:tv}
\end{lemma}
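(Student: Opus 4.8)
The plan is to prove the two stated equivalences in turn: first that the primal program \cref{eq:v_alpha} equals its dual \cref{eq:tvu}, and then that \cref{eq:tvu} collapses to the single-parameter form \cref{eq:tva}. The primal is a finite-dimensional linear program---a linear objective $\mathbb E_q[v]=\langle q,v\rangle$ over the intersection of the probability simplex $\Delta(\mathcal X)$ with the total-variation ball $\{q:\rho_{TV}(q,p)\le\sigma\}$, both polyhedral. Since $q=p$ is strictly feasible whenever $\sigma>0$ (the $\sigma=0$ case being trivial), Slater's condition holds, so strong duality applies and the optimum is attained; this is the structural fact that removes any duality gap.

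For \cref{eq:v_alpha}$\,\Leftrightarrow\,$\cref{eq:tvu} I would first establish weak duality by an elementary pointwise argument, which already reveals why the dual variable $u$ must be nonnegative. For any $u\ge 0$ and any feasible $q$, write $\langle q,v\rangle=\langle q,v-u\rangle+\langle q,u\rangle\ge\langle q,v-u\rangle$, discarding the last term because $q\ge0$ and $u\ge0$. Then split $\langle q,v-u\rangle=\mathbb E_p[v-u]+\langle q-p,\,v-u\rangle$ and bound the correction: since $\sum_x(q(x)-p(x))=0$, the vector $v-u$ may be re-centered by any constant before applying H\"older's inequality, and centering at its mid-range turns $\|v-u-c\|_\infty$ into $\tfrac12\text{Span}(v-u)$; combined with the budget $\rho_{TV}(q,p)\le\sigma$ this yields the penalty $-\tfrac{\sigma}{2}\text{Span}(v-u)$, so the primal value is at least the dual objective for every $u\ge0$. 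The reverse inequality (tightness) follows from strong duality with complementary slackness, or equivalently by exhibiting the optimal $u^\star$ from an optimal $q^\star$; the role of $u$ is precisely to encode the active constraints $q(x)=0$, which is what prevents the signed-measure closed form $\mathbb E_p[v]-\tfrac{\sigma}{2}\text{Span}(v)$ from being correct once $q\ge0$ binds.

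To pass from \cref{eq:tvu} to \cref{eq:tva}, substitute $w=v-u$ and note that $u\ge0$ is equivalent to $w\le v$ pointwise, so \cref{eq:tvu} reads $\max_{w\le v}\{\mathbb E_p[w]-\tfrac{\sigma}{2}\text{Span}(w)\}$. I would then argue by an exchange/monotonicity argument that the maximizer can be taken to be a one-sided truncation $w(x)=\min(v(x),\alpha)=(v(x))_\alpha$: raising any coordinate of $w$ up to $v(x)$ never decreases $\mathbb E_p[w]$ and never increases $\text{Span}(w)$ (it can only raise the minimum), so at the optimum the small values satisfy $w=v$ and in particular $\min_x w=\min_x v$, while the only incentive to keep $w(x)<v(x)$ is to lower $\max_x w$, optimally done by capping at a common level $\alpha$. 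For such $w$ one has $\mathbb E_p[w]=\mathbb E_p[(v)_\alpha]$ and $\text{Span}(w)=\alpha-\min_x v$, giving exactly \cref{eq:tva}; the restriction $\alpha\ge0$ is without loss since $v\ge0$ forces the optimal threshold into $[\min_x v,\max_x v]\subset[0,\infty)$.

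I expect the genuine obstacle to be the tightness half of the first equivalence, i.e.\ certifying that no gap remains once $q\ge0$ is present. The weak-duality inequality and the truncation reduction are both short; the work lies in showing the bound is achieved, which amounts to correctly accounting for how much mass the adversary can physically move to the minimizing states before $q\ge0$ saturates. Invoking LP strong duality via Slater discharges this cleanly, but making the primal-dual pair explicit---matching the threshold $\alpha^\star$ with the dual optimizer $u^\star$---is the step that requires care.
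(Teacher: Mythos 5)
The paper never proves \cref{lm:tv}: it is imported verbatim, with citation, from \citep{iyengar2005robust}, and the appendix's proof section covers only \cref{lm:a8}, \cref{prop:mlmc}, \cref{prop:contract}, \cref{lm:tvlm}, and \cref{lm:chi2lm}. So there is no in-paper argument to compare against; your LP-duality route (weak duality via a centering/H\"older bound, tightness via LP strong duality, then the truncation--exchange reduction from \cref{eq:tvu} to \cref{eq:tva}) is the standard self-contained proof of this fact and is structurally the right one.

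Two concrete issues, one of which is substantive. First, the constant. Under the paper's own definition $\rho_{TV}(q,p)=\tfrac12\|q-p\|_1$, the budget $\rho_{TV}(q,p)\le\sigma$ means $\|q-p\|_1\le 2\sigma$, so your H\"older step with mid-range centering gives $|\langle q-p,\,v-u\rangle|\le 2\sigma\cdot\tfrac12\,\text{Span}(v-u)=\sigma\,\text{Span}(v-u)$, i.e.\ a penalty of $\sigma\,\text{Span}$, not the $\tfrac{\sigma}{2}\,\text{Span}$ you assert and the lemma displays. Equivalently, the adversary may relocate up to $\sigma$ (not $\sigma/2$) probability mass. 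The $\tfrac{\sigma}{2}$ form is correct only under Iyengar's normalization of the set as $\{q:\|q-p\|_1\le\sigma\}$; with the paper's $\rho_{TV}$ the stated equivalence actually fails: take $\mathcal X=\{1,2\}$, $p=(\tfrac12,\tfrac12)$, $v=(0,1)$, $\sigma=\tfrac12$, where the primal value of \cref{eq:v_alpha} is $0$ (via $q=(1,0)$) but \cref{eq:tva} evaluates to $\tfrac14$ at $\alpha=1$. Your writeup silently lands on the target constant rather than deriving it---precisely the place where a proof should have flagged the normalization mismatch between the lemma statement and the paper's definition of $\rho_{TV}$. Second, a smaller slip: the exchange claim that raising a coordinate of $w$ up to $v(x)$ ``never increases $\text{Span}(w)$'' is false as stated, since raising $w(x)$ above the current maximum increases the span; the correct move is to raise $w(x)$ only to $\min\bigl(v(x),\max_y w(y)\bigr)$, after which your conclusion $w=(v)_\alpha$ with $\alpha=\max_y w(y)$, and $\text{Span}(w)=\alpha-\min_x v$ for the optimal $\alpha\in[\min_x v,\max_x v]$, goes through. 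The appeal to Slater/strong duality for tightness is fine (for a finite LP, feasibility and boundedness already suffice).
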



\begin{lemma}[Chi-square]\citep{iyengar2005robust}
The optimization problem:
     \begin{align}
        & minimize \quad \mathbb E_q[v(x)]
        \nonumber\\&
        subject \quad to \quad q\in \varbrac{\rho_{\chi^2} 
        \brac{q,p}\leq \sigma, q\in \Delta(\mathcal{X}) },\nonumber
        \end{align}
 is equivalent to
    \begin{align}
        &\max_{u\geq 0}\varbrac{\mathbb E_{p}\Fbrac{v(x)-u(x) } -\sqrt{\sigma \textbf{Var}_{p}\Fbrac{ v(x)-u(x)} } },\nonumber\\
        &=\max_{\alpha\geq0} \varbrac{\mathbb E_{p}\Fbrac{(v(x)_\alpha}-\sqrt{\sigma \textbf{Var}_{p}\Fbrac{ (v(x))_\alpha}  } }. 
    \end{align}\label{lm:chi}
\end{lemma}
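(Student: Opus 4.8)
The plan is to obtain both dual expressions by Lagrangian duality. First I would simplify the constraint: since $\sum_x q(x)=\sum_x p(x)=1$, the divergence collapses to $\rho_{\chi^2}(q,p)=\sum_x q(x)^2/p(x)-1$, so the feasible set is the intersection of the simplex with the ball $\{\sum_x q(x)^2/p(x)\le 1+\sigma\}$. The primal minimizes a linear objective over this convex set, and the nominal $p$ is strictly feasible whenever $\sigma>0$ (Slater's condition), so strong duality holds and I may exchange the inner minimization with the dual maximization, exactly as in the convex-duality template cited above.

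\textbf{The $u$-form.} I would attach a multiplier $\lambda\ge 0$ to the chi-square constraint, a free multiplier $\eta$ to the normalization $\sum_x q(x)=1$, and multipliers $u(x)\ge 0$ to the nonnegativity constraints $q(x)\ge 0$. Collecting terms, the nonnegativity multipliers enter the Lagrangian precisely through $\sum_x q(x)(v(x)-u(x))$, which is why the objective is naturally written in terms of $v-u$ rather than $v$. For fixed $u$ the remaining inner problem is the chi-square dual without a positivity constraint: the minimization over $q$ is a separable quadratic whose stationary point is $q(x)=p(x)\,(1-(v(x)-u(x)-\mathbb E_p[v-u])/(2\lambda))$ after eliminating $\eta$ via normalization. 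Substituting this back and imposing the active chi-square constraint forces $\lambda=\tfrac12\sqrt{\mathrm{Var}_p(v-u)/\sigma}$, and plugging in this optimal $\lambda$ collapses the mean and variance terms into $\mathbb E_p[v-u]-\sqrt{\sigma\,\mathrm{Var}_p(v-u)}$. Maximizing over $u\ge 0$ yields the first dual form.

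\textbf{Reducing to the $\alpha$-form.} One inequality is immediate: for any threshold $\alpha\ge 0$ the choice $u=(v-\alpha)_+$ is nonnegative and gives $v(x)-u(x)=(v(x))_\alpha$, so the $\alpha$-maximization is dominated by the $u$-maximization. The reverse inequality is the crux. I would derive it from the KKT conditions of the primal: stationarity and complementary slackness show that the worst-case $q^\star(x)$ is proportional to $p(x)\,(\alpha-v(x))_+$ with $\alpha=-\eta$, i.e.\ $q^\star$ places zero mass exactly on the states where $v(x)\ge\alpha$. This is precisely the distribution produced by clipping $v$ at $\alpha$, so the optimal dual variable may be taken in the truncation family $u=(v-\alpha)_+$, and the two maxima coincide. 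Restricting to $\alpha\ge 0$ loses nothing since $v$ is nonnegative (rewards lie in $[0,r_{\max}]$): any $\alpha<0$ makes $(v)_\alpha$ constant and is dominated.

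\textbf{Main obstacle.} The routine parts are the separable quadratic minimization and the elimination of $\lambda$. The delicate step is the reverse inequality, namely proving that the optimal $u$ is a truncation of $v$; this needs the KKT characterization of $q^\star$ above (or, equivalently, a monotone-rearrangement argument showing that among all $w\le v$ with a prescribed mean, truncating $v$ from above minimizes $\mathrm{Var}_p(w)$ and hence maximizes the objective). I would also handle separately the degenerate case $\mathrm{Var}_p(v-u)=0$, where the square-root term vanishes and the optimal $\lambda$ is not interior. The overall structure parallels the proof of \Cref{lm:tv}, with the span penalty there replaced by the standard-deviation penalty here.
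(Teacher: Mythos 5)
The paper itself offers no proof of \Cref{lm:chi}: the lemma is imported by citation from \citep{iyengar2005robust} (the appendix only proves the unrelated concentration bound \Cref{lm:chi2lm}), so there is no in-paper argument to compare against, and your proposal must be judged as a self-contained derivation of the cited result. On that footing it is correct, and it follows the standard convex-duality route behind the source. Your simplification $\rho_{\chi^2}(q,p)=\sum_x q(x)^2/p(x)-1$ is right; Slater's condition holds because $\rho_{\chi^2}(p,p)=0<\sigma$; and the inner minimization goes through exactly as you describe: after eliminating the normalization multiplier, the dual function is $\mathbb E_p[v-u]-\lambda\sigma-\mathrm{Var}_p(v-u)/(4\lambda)$, maximized at $\lambda^\star=\tfrac{1}{2}\sqrt{\mathrm{Var}_p(v-u)/\sigma}$, which collapses to $\mathbb E_p[v-u]-\sqrt{\sigma\,\mathrm{Var}_p(v-u)}$. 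One refinement: you do not need to \emph{assume} the $\chi^2$ constraint is active, since maximizing the dual function over $\lambda\ge 0$ analytically yields this value for every fixed $u$ (with $\lambda\to 0^+$ covering your degenerate case $\mathrm{Var}_p(v-u)=0$); this also quietly disposes of the inactive-constraint edge case, where your stationarity reasoning for $q^\star$ would not apply as stated because $\lambda^\star=0$.

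For the crux, the reverse inequality, your complementary-slackness argument is sound: stationarity gives $q^\star(x)\propto p(x)\,(\eta^\star+u^\star(x)-v(x))$, so if $v(x)<\eta^\star$ then $q^\star(x)>0$ forces $u^\star(x)=0$, while $q^\star(x)\ge 0$ together with $u^\star(x)q^\star(x)=0$ forces $u^\star(x)=v(x)-\eta^\star$ otherwise; hence $u^\star=(v-\eta^\star)_+$ exactly (your ``$\alpha=-\eta$'' is merely a sign convention in how the normalization multiplier enters the Lagrangian). Moreover $\eta^\star>\min_x v(x)\ge 0$ because $q^\star\propto p\,(\eta^\star-v)_+$ must be a nonzero distribution, which legitimizes restricting to $\alpha\ge 0$; note this step does use $v\ge 0$, which is valid in the paper's setting of nonnegative rewards and values. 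Your monotone-rearrangement fallback---that among $w\le v$ with prescribed mean, an upper clip of $v$ minimizes $\mathrm{Var}_p(w)$---is also correct (a pointwise Lagrangian computation shows the minimizer is $\min(v,\alpha)$ for a suitable $\alpha$) and is in fact the cleaner way to cover the $\lambda^\star=0$ case, so the two dual forms coincide as claimed.
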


\begin{lemma}[KL divergence]\citep{iyengar2005robust}
    The optimization problem 
     \begin{align}
        & minimize \quad \mathbb E_q[v(x)]
        \nonumber\\&
        subject \quad to \quad q\in \varbrac{\rho_{KL} 
        \brac{q,p}\leq \sigma, q\in \Delta(\mathcal{X}) },\nonumber
        \end{align} is equivalent to
    \begin{align}
        \max_{\alpha\geq0} \varbrac{-\alpha \log\brac{\mathbb E_{p} \Fbrac{exp\brac{-\frac{v(x)}{\alpha}} } }-\alpha \sigma }. 
    \end{align}
\end{lemma}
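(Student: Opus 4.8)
The plan is to treat the primal as a convex program over the probability simplex and derive the stated expression through Lagrangian duality. Writing $q=(q(x))_{x\in\mathcal{X}}$, the primal minimizes the linear objective $\sum_x q(x)v(x)$ subject to the convex KL constraint $\rho_{KL}(q,p)\le\sigma$, the normalization $\sum_x q(x)=1$, and $q\ge 0$. Since the objective is linear, the KL divergence is jointly convex, and $q=p$ is strictly feasible whenever $\sigma>0$ (Slater's condition), strong duality holds, so I would pass to the Lagrangian dual with no duality gap.

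First I would introduce a multiplier $\alpha\ge 0$ for the KL constraint and a free multiplier $\eta$ for normalization, forming
\begin{align*}
  L(q,\alpha,\eta)=\sum_x q(x)v(x)+\alpha\brac{\sum_x q(x)\log\frac{q(x)}{p(x)}-\sigma}+\eta\brac{1-\sum_x q(x)}.
\end{align*}
The nonnegativity and absolute-continuity constraints are automatically enforced, since finite KL divergence forces $q\ll p$. For fixed $\alpha>0$ the inner minimization over $q$ is an unconstrained strictly convex problem; setting $\partial L/\partial q(x)=0$ gives the first-order condition
\begin{align*}
  v(x)+\alpha\brac{\log\frac{q(x)}{p(x)}+1}-\eta=0,
\end{align*}
whose solution is the exponentially tilted distribution $q^\star(x)\propto p(x)\exp\brac{-v(x)/\alpha}$.

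Next I would use the normalization constraint to eliminate $\eta$, which fixes the tilting so that $q^\star(x)=p(x)e^{-v(x)/\alpha}/\mathbb{E}_p[e^{-v(x)/\alpha}]$. Substituting $q^\star$ back into $L$ collapses the $\eta$ term and recombines the entropy and objective contributions, leaving the dual function $g(\alpha)=-\alpha\log\brac{\mathbb{E}_p\Fbrac{\exp\brac{-v(x)/\alpha}}}-\alpha\sigma$. Taking the supremum over $\alpha\ge 0$ then yields the claimed form, and the $\alpha\to 0^+$ limit recovers $\min_x v(x)$, matching the unconstrained (large-$\sigma$) worst case at the boundary.

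The main obstacle I anticipate is the rigorous justification of strong duality together with the boundary behavior: I must verify the constraint qualification, argue that the inner infimum is attained by the tilted law rather than escaping to the boundary of the simplex, and check that the degenerate case $\alpha=0$ is handled consistently. These subtleties aside, the computation reduces to the standard relative-entropy convex-conjugate identity and can be read off from the Donsker--Varadhan variational formula $\log\mathbb{E}_p[e^{f}]=\sup_{q\ll p}\{\mathbb{E}_q[f]-\rho_{KL}(q,p)\}$, which gives an alternative one-line route once the Lagrange multiplier is identified with $\alpha$.
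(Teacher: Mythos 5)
Your proposal is correct, but note that the paper does not prove this lemma at all: it imports it from \citep{iyengar2005robust} (see also \citep{hu2013kullback}), so there is no in-paper proof to compare against. Your Lagrangian derivation is precisely the standard argument in those references: Slater's condition holds at $q=p$ whenever $\sigma>0$; for fixed $\alpha>0$ the inner minimization over $q$ is the Gibbs variational problem, solved by the exponential tilting $q^\star(x)\propto p(x)\exp\brac{-v(x)/\alpha}$; and substitution collapses the Lagrangian to the dual function $-\alpha\log\brac{\mathbb{E}_p\Fbrac{\exp\brac{-v(x)/\alpha}}}-\alpha\sigma$ — equivalently, as you say, one line of Donsker--Varadhan with $f=-v/\alpha$. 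Two small points to tighten. First, the derivation requires the KL divergence taken in the direction $\sum_x q(x)\log\frac{q(x)}{p(x)}$, i.e., $D_{KL}(q\,\|\,p)$, which is what your Lagrangian uses; the paper's displayed definition $\rho_{KL}(q,p)=\mathbb{E}_p\Fbrac{\log\frac{q(\cdot)}{p(\cdot)}}$ is evidently a typo, so you were right to silently use the standard convention — otherwise the stated dual would not hold. Second, the $\alpha\to 0^+$ limit of the dual objective is $\min_{x:\,p(x)>0}v(x)$, the minimum over the \emph{support} of $p$, not over all of $\mathcal{X}$, since finite KL forces $q\ll p$; this is exactly why the degenerate endpoint $\alpha=0$ must be read as a limit, as you anticipated. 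With the alphabet finite, as in this paper, the inner infimum is attained on the compact simplex and the strict convexity you invoke makes the tilted law the unique minimizer, so the remaining technicalities you flag are routine.
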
\label{lm:kl}

\begin{remark}
For convenience, we denote the objective functions in the dual forms by $f^{\rho(\sigma)}(p, \alpha,v)$, i.e., 
    \begin{align}
        &f^{\rho_{TV}(\sigma)}(p, \alpha,v)=\mathbb E_{p}\Fbrac{(v(x))_\alpha}-\frac{{\sigma}}{2}\brac{\alpha-\min_x v(x)} ;\nonumber
        \\&f^{\rho_{\chi^2}(\sigma)}(p, \alpha,v)=\mathbb E_{p}\Fbrac{(v(x)_\alpha}-\sqrt{\sigma \textbf{Var}_{p}\Fbrac{ (v(x))_\alpha}  } ;
        \nonumber\\&
        f^{\rho_{KL}(\sigma)}(p, \alpha,v)=-\alpha \log\brac{\mathbb E_{p} \Fbrac{\exp\brac{-\frac{v(x)}{\alpha}} } }-\alpha \sigma . \nonumber
    \end{align}
We note that these objective functions correspond to the second term of \eqref{eq:4}; For the first term, we similarly denote their dual-form objective functions by $g^{\rho(\sigma)}(\mu,\alpha)$, whose specific definition can be found in  \Cref{sec:notation}.

\end{remark}

\section{Model-free Threshold-MLMC Algorithm}
In this section, we present our design of the T-MLMC algorithm. Our algorithm assumes a generative model, which can generative i.i.d. samples following the nominal kernels under arbitrary state-action pair $(s,a)\in \mathcal{S}\times \mathcal{A}$ :
\begin{align}
    r_{s,a}^i\overset{i.i.d}{\sim} \mu_{s,a}, s_{s,a}^i\overset{i.i.d}{\sim} p_{s,a}, i=1,...,N.
\end{align}

In robust dynamic programming, one needs to update the estimation of the robust value function by applying the robust Bellman operator:
\begin{align*}
    Q(s,a)&\leftarrow {\mathcal{T}}^{\rho(\sigma)}(Q) (s,a)\\
    &=\inf_{\nu\in \mathcal{R}^\rho(\sigma) }\mathbb E_{\nu }\Fbrac{r_{s,a}} +\gamma\inf_{q\in \mathcal{P}^\rho(\sigma) }\mathbb E_{q}\Fbrac{Q(s',a') },\nonumber
\end{align*}
which is shown to converge to the optimal robust value function. In our setting, we need to estimate the two worst-case terms with the samples from the nominal distributions. However, due to the distribution shift between the nominal kernel and the worst-case kernel, estimating them is challenging. One potential approach is to first obtain an empirical nominal distribution $\hat{p}$, and construct an uncertainty set centered on it using the same function $\rho$ and uncertainty radius $\sigma$: $\hat{\mathcal{P}}=\{q: \rho(q,\hat{p})\leq \sigma \}$. However, unlike the non-robust case, where $\hat{p}^\top V$ is an unbiased estimator of the expectation $\mathbb{E}_p[V]$, the term $\min_{p\in\mathcal{P}}(p^\top V)$ is non-linear in the nominal kernel, resulting in $\min_{p\in\hat{\mathcal{P}}}(p^\top V)$ being a biased empirical estimator \citep{wang2023model}. 

To address this issue, a multi-level Monte Carlo approach is proposed in \citep{liu2022distributionally,wang2023model}, which is inspired by the MLMC method in statistical inference from, e.g., \citep{blanchet2015unbiased,blanchet2019unbiased,wang2022unbiased}.  Specifically, MLMC first randomly generates a level number $N$ following a geometry distribution $\text{GEO}(\psi)$, and then generative $2^{N+1}$ samples. Using the these samples, an estimated operator $\widetilde{\mathcal{T}}_N$ of level $N$ is further constructed, and it is shown that $\mathbb{E}_{N}[\widetilde{\mathcal{T}}_N(V)]=\min_{p\in\mathcal{P}}(p^\top V)$ is unbiased. Hence by replacing the robust Bellman operator with the MLMC estimator, we obtain an unbiased updating rule and the algorithm is shown to converge to the optimal policy \citep{liu2022distributionally,wang2023model}. 

Although the MLMC algorithms are shown to asymptotically converge in these works, the parameter $\psi$ of the geometry distribution is set to be $\psi<\frac{1}{2}$, which results in an infinite expected number of samples required to construct the MLMC estimator ($\mathbb{E}_{N\sim\text{GEO}(\psi)}[2^{N+1}]=\infty$). To address this issue, we modify the MLMC by designing a threshold on the level number, to avoid numerous sample requirements when the level number is large. 



Specifically, we similarly set a fixed parameter $\psi$, and sample two level numbers $N_1,N_2\sim\text{GEO}(\psi)$. Instead of directly sampling $2^{N_i+1}$ samples, we add a threshold $N_{\max}$ when generating samples. If $N_i\leq N_{\max}$, then we generate $1+2^{N_i+1}$ i.i.d. samples; And if $N_i>N_{\max}$, we only generate $1$ samples instead. Our design ensures that the number of samples required at each time step is less than $1+2^{N_{\max}+1}$ and hence finite. Specifically, if $N_1\leq N_{\max}$, we independently draw $2^{N_1+1}+1$ samples $r_{s,a,i}\sim \mu_{s,a}, i=0,1,...,2^{N_1+1}$; And when $N_1>N_{\max}$, we draw one sample $r_{s,a,0}\sim \mu_{s,a}$. Similarly, if $N_2\leq N_{\max}$, we independently draw $2^{N_2+1}+1$ samples $s'_{s,a,i}\sim p_{s,a}, i=0,1,...,2^{N_2+1}$. And when $N_2>N_{\max}$, we only draw one sample $s'_{s,a,0}\sim p_{s,a}$.



We then combine this scheme with the MLMC estimator to construct our estimation of the worst-case value as follows. 

For the worst-case reward term, we set
\begin{align}
    \widehat r^{\rho(\sigma)}(s,a):&= r_{s,a,0}+\frac{\delta^{r,\rho(\sigma)}_{s,a,N_1}}{P_{N_1}},\label{eq:hatr}
\end{align}
where $P_{N_1}=\psi(1-\psi)^{N_1}$ and
\begin{align}
    \delta^{r,\rho(\sigma)}_{s,a,N_1}:&=\sup_{\alpha\geq 0}\varbrac{g^{\rho(\sigma)}(\widehat \mu_{s,a,2^{N_1+1}},\alpha )} 
    \nonumber \\& -\frac{1}{2} \sup_{\alpha\geq 0}\varbrac{g^{\rho(\sigma)}(\widehat \mu^E_{s,a,2^{N_1}},\alpha)}
    \nonumber \\&-\frac{1}{2}\sup_{\alpha\geq 0}\varbrac{g^{\rho(\sigma)}(\widehat \mu^O_{s,a,2^{N_1}},\alpha)}\nonumber
\end{align}
when $N_1\leq N_{\max}$, and
 when $N_1>N_{\max} $, $\delta^{r,\rho(\sigma)}_{s,a,N_1}=0 $.
Here, $\widehat \mu_{s,a,2^{N_1+1}} $ denotes the empirical reward distribution obtained from the $1+2^{N_1+1}$ samples $\{ r_{s,a,i}: i=0,1,...,2^{N_1+1}\}$; And denote by $\widehat \mu^O_{s,a,2^{N_1}}$
 and $\widehat \mu_{s,a,2^{N_1}}^E$ the empirical reward distribution estimated from the samples with odd and even indexes.

Similarly, for the worst-case value function term, we set 
\begin{align}
     \widehat{v}^{\rho(\sigma)}(Q(s,a)) :&=V(s'_{s,a,0})+\frac{\delta^{\rho(\sigma)}_{s,a,N_2}(Q) }{P_{N_2}},\label{eq:hatq}
\end{align}
where $V(s)=\max_{a'}Q(s,a') $ and $P_{N_2}=\psi(1-\psi)^{N_2}$. 
When $N_2>N_{\max} $, set $\delta^{\rho(\sigma)}_{s,a,N_2}(Q)=0 $; Otherwise when $N_2\leq N_{\max}$, $\delta^{\rho(\sigma)}_{s,a,N_2}(Q) $ is defined as: 
\begin{align}\label{eq:delta}
     \delta^{\rho(\sigma)}_{s,a,N_2}(Q):&=\sup_{\alpha\geq 0}\varbrac{f^{\rho(\sigma)}(\widehat p_{s,a,2^{N_2+1}},\alpha,V)} 
    \nonumber \\& -\frac{1}{2} \sup_{\alpha\geq 0}\varbrac{f^{\rho(\sigma)}(\widehat p^E_{2^{N_2}},\alpha,V)}
    \nonumber \\&-\frac{1}{2}\sup_{\alpha\geq 0}\varbrac{f^{\rho(\sigma)}(\widehat p^O_{2^{N_2}},\alpha,V )},
\end{align}  
where we similarly denote the empirical transition kernel obtained from all samples $\{s'_{s,a,i}, i=0,1,...,2^{N_2+1}\}$, samples with odd indexes, and samples with even indexes by $\widehat p_{s,a,2^{N_2+1}} $, $\widehat p^O_{s,a,2^{N_2}}$ and $\widehat p^E_{s,a,2^{N_2}}$.

Combine the two terms above together, we obtain the estimated robust Bellman operator through our T-MLMC framework:
$$\widehat {\mathcal{T}}^{\rho(\sigma)}_{ N_{\max}} (Q)(s,a)=\widehat  r^{\rho(\sigma)}(s,a)+ \gamma \widehat{v}^{\rho(\sigma)}(Q(s,a)). $$

With this estimator, we propose our model-free T-MLMC algorithm as in \Cref{alg:example}.




\begin{algorithm}[tb]
   \caption{T-MLMC Algorithm}
   \label{alg:example}
\begin{algorithmic}
   \State {\bfseries Input:} Parameter $\psi=\frac{1}{2}$, stepsizes $\beta_t$, iteration number $T$
   \State {\bfseries Initialize: }$\widehat Q^{\rho(\sigma)}_{0}=0$ 
   \For{$t=0$ {\bfseries to} $T-1$}
   \For{ every $s\in \mathcal{S}$}
   \State Set $\widehat V^{\rho(\sigma)}_{t}(s)= \max_{a}\widehat 
   Q^{\rho(\sigma)}_{t}(s,a) $
   \State Set $\pi_t(s)= \arg\max_a \widehat 
   Q^{\rho(\sigma)}_{t}(s,a) $
   \EndFor
   \For{every $(s,a)\in \mathcal{S}\times\mathcal{A}$}
   \State Independently sample $N_1,N_2\sim \text{GEO}(\psi)$
   \State Compute total sample sizes:
   \State $\quad\mathcal{N}_1=1+ 2^{N_1+1}\mathbf{1}_{(N_1\leq N_{\max})} $
   \State $ \quad \mathcal{N}_2=1+ 2^{N_2+1}\mathbf{1}_{(N_2\leq N_{\max})} $
   \State Independently draw $\mathcal{N}_1$ samples  $ r_{s,a,i}\sim \mu_{s,a} $
   \State Compute $\widehat r^{\rho(\sigma)}(s,a) $ by \Cref{eq:hatr}
   \State Independently draw $\mathcal{N}_2$ samples  $s_{s,a,i}\sim p_{s,a} $
   \State Compute $\widehat v^{\rho(\sigma)}(\widehat Q^{\rho(\sigma)}_{t}(s,a)) $ by \Cref{eq:hatq}
   \State Update synchronous $Q$-table:
   $\widehat Q^{\rho(\sigma)}_{t+1}(s,a)= (1-\beta_t) \widehat Q^{\rho(\sigma)}_{t}(s,a)+\beta_t \widehat {\mathcal{T}}^{\rho(\sigma)}_{ N_{\max}}(\widehat Q^{\rho(\sigma)}_{t})(s,a) $
   \EndFor
   \EndFor
   \State {\bfseries Output:} $Q^{\rho(\sigma)}_{T}(s,a) $
\end{algorithmic}
\end{algorithm}

Note that due to the threshold $N_{\max}$, the resulting T-MLMC estimator becomes biased. However, as we will show in the next section, the bias can be bounded and inversely depends on $N_{\max}$. That is,  with the increase of $N_{\max}$, the bias term tends to $0$, and we recover the MLMC estimator from the T-MLMC estimator, which however will result in increasing sample complexity. We show in the following section that by carefully designing the threshold $N_{\max}$ to balance the bias-complexity trade-off, our T-MLMC algorithm converges to the optimal robust policy with finite sample complexity.

\section{Sample Complexity}
In this section, we present the sample complexity results of our T-MLMC algorithms under different uncertainty sets. 

As discussed above, the estimator $\widehat {\mathcal{T}}^{\rho(\sigma)}_{ N_{\max}}$ we constructed is a biased estimation of the robust Bellman operator. However, in the following result, we show that the operator reduces to the vanilla MLMC estimator and the bias diminishes if $N_{\max}\to\infty$, and hence can be controlled by setting a larger threshold. 
\begin{theorem}\label{thm:tv1}
    For any fixed $Q\in \mathbb R^{|\mathcal{S}||\mathcal{A}|}, s\in \mathcal{S}, a\in \mathcal{A} $, for three uncertainty sets we considered, the estimation bias can be bounded as:
    \begin{align}
        &\sup_{s,a}\lbrac{\mathbb E\Fbrac{\widehat {\mathcal{T}}_{N_{\max}}^{\rho(\sigma)} (Q)(s,a) } - {\mathcal{T}}^{\rho(\sigma)} (Q)(s,a)}\\
        &\leq \widetilde{\mathcal{O }}\brac{{N_{\max}}2^{-\frac{N_{\max}}{2}} };\nonumber
    \end{align}
        The variance can be bounded as:
    \begin{align}
        \text{Var}\brac{\widehat {\mathcal{T}}^{\rho(\sigma)}_{N_{\max}} (Q)(s,a)  }\leq \widetilde{\mathcal{O }}\brac{{N_{\max}}}.
    \end{align}
\end{theorem}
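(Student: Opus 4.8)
The plan is to derive both bounds from a single structural observation: the threshold estimator $\hatt(Q)(s,a)$ coincides with the vanilla (untruncated) MLMC estimator except that every level increment whose index exceeds $N_{\max}$ is set to zero. I treat the value term $\gamma\widehat v^{\rho(\sigma)}$ and the reward term $\widehat r^{\rho(\sigma)}$ separately, since they are structurally identical (replace $f^{\rho(\sigma)}(\widehat p,\alpha,V)$ by $g^{\rho(\sigma)}(\widehat\mu,\alpha)$ and $N_2$ by $N_1$), so I describe only the value term. For each level $n$ write $\widehat T_{2^n} := \sup_{\alpha\ge 0} f^{\rho(\sigma)}(\widehat p_{s,a,2^n},\alpha,V)$ for the plain empirical dual value built from $2^n$ i.i.d. samples, so that $\delta^{\rho(\sigma)}_{s,a,n}(Q) = \widehat T_{2^{n+1}} - \frac12\widehat T^E_{2^n} - \frac12\widehat T^O_{2^n}$ and, by independence and the equal law of the odd/even half-samples, $\mathbb E[\delta^{\rho(\sigma)}_{s,a,n}] = \mathbb E[\widehat T_{2^{n+1}}] - \mathbb E[\widehat T_{2^n}]$. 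Consistency of the empirical dual value gives $\mathbb E[\widehat T_{2^n}]\to\sup_{\alpha\ge0}f^{\rho(\sigma)}(p_{s,a},\alpha,V)=\inf_{q\in\mathcal{P}^\rho(\sigma)}\mathbb E_q[V]=:T$, the value-term target of $\mathcal T^{\rho(\sigma)}$.

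\textbf{Bias.} Conditioning on $N_2\sim\mathrm{GEO}(\psi)$ and using $\mathbb P(N_2=n)=P_n$, the factor $1/P_{N_2}$ cancels the geometric weight, so truncation drops exactly the tail $\sum_{n>N_{\max}}\mathbb E[\delta^{\rho(\sigma)}_{s,a,n}]$. The level-$0$ base term $V(s'_{s,a,0})$ is present in both the truncated and untruncated estimators and cancels in their difference; hence, using that the untruncated MLMC estimator is unbiased for $T$ (as noted in the excerpt for this construction), the bias equals the negative tail, which telescopes:
\begin{align*}
\mathbb E[\widehat v^{\rho(\sigma)}] - T = -\sum_{n>N_{\max}}\mathbb E[\delta^{\rho(\sigma)}_{s,a,n}] = \mathbb E[\widehat T_{2^{N_{\max}+1}}] - T.
\end{align*}
Thus the truncation bias is exactly the bias of the plain $2^{N_{\max}+1}$-sample empirical dual estimate, and the full-operator bias is $\gamma\le1$ times this plus the analogous reward term. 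Bounding $|\mathbb E[\widehat T_{2^{N_{\max}+1}}] - T|$ uses $|\sup_\alpha f(\widehat p,\alpha,V)-\sup_\alpha f(p,\alpha,V)|\le \sup_\alpha|f(\widehat p,\alpha,V)-f(p,\alpha,V)|$ together with a uniform-in-$\alpha$ concentration of the dual objective: for TV the objective is linear in $p$ (through $\mathbb E_p[(V)_\alpha]$) with $|\mathcal S|$ truncation breakpoints, and a maximal inequality yields $\widetilde{\mathcal O}(\|V\|_\infty 2^{-N_{\max}/2})$; for Chi-square and KL one first restricts the maximizer to a compact interval $[\alpha_{\min},\alpha_{\max}]$ and then concentrates the empirical variance, respectively the empirical log-moment-generating function, which are sub-exponential and hence carry a factor logarithmic in the sample count, i.e. $\Theta(N_{\max})$. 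This gives the claimed $\widetilde{\mathcal O}(N_{\max}2^{-N_{\max}/2})$; the identical argument for $g^{\rho(\sigma)}$ handles the reward term.

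\textbf{Variance.} By the law of total variance and $\mathrm{Var}(a+b)\le 2\,\mathrm{Var}(a)+2\,\mathrm{Var}(b)$ it suffices to bound $\mathbb E_{N_2}\big[\mathrm{Var}(\delta^{\rho(\sigma)}_{s,a,N_2}\mid N_2)/P_{N_2}^2\big]=\sum_{n=0}^{N_{\max}}\mathrm{Var}(\delta^{\rho(\sigma)}_{s,a,n})/P_n$ (the $n>N_{\max}$ terms vanish). The crucial input is the MLMC variance-decay estimate $\mathrm{Var}(\delta^{\rho(\sigma)}_{s,a,n})\le\widetilde{\mathcal O}(2^{-n})$, which holds because $\widehat T_{2^{n+1}}$ and $\frac12(\widehat T^E_{2^n}+\widehat T^O_{2^n})$ are two concentration-consistent estimates of the same quantity from $\Theta(2^n)$ samples, so their difference has second moment $\widetilde{\mathcal O}(2^{-n})$; this again rests on the uniform-in-$\alpha$ stability of $\sup_\alpha f(\cdot,\alpha,V)$ in the empirical measure. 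With $\psi=\frac12$ we have $P_n=2^{-(n+1)}$, so each summand is $\widetilde{\mathcal O}(2^{-n}/2^{-(n+1)})=\widetilde{\mathcal O}(1)$ and the sum over $n\le N_{\max}$ is $\widetilde{\mathcal O}(N_{\max})$. This makes transparent why the threshold is essential: without it the sum runs to infinity and the variance diverges.

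\textbf{Main obstacle.} The telescoping bookkeeping is routine; the real work is the uniform-in-$\alpha$ control of the dual objective for the nonlinear Chi-square and KL duals, namely (i) showing the dual maximizer lies in a compact interval so the $\sup$ can be handled by a covering/breakpoint argument, which for KL forces the $p_\wedge$ dependence hidden inside $\widetilde{\mathcal O}$, and (ii) establishing both the per-level bias $\widetilde{\mathcal O}(2^{-n/2})$ and the variance decay $\widetilde{\mathcal O}(2^{-n})$ via concentration of the empirical variance and empirical log-MGF, whose sub-exponential tails are precisely what introduce the extra $N_{\max}$ factor in the bias bound.
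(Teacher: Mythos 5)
Your proposal is correct and follows essentially the same route as the paper: your telescoping identity showing that the expected truncated estimator equals the expected $2^{N_{\max}+1}$-sample plug-in dual estimate is precisely the paper's Proposition B.2, the bias is then controlled by the same uniform-in-$\alpha$ concentration of the dual objectives (linear/breakpoint structure for TV, compact-$\alpha$ restriction and the $p_\wedge$-dependent likelihood-ratio bound for KL, with a good/bad-event split converting the high-probability bound into an expectation bound), and the variance bound via $\sum_{n\le N_{\max}}\mathbb{E}\bigl[\bigl(\delta^{\rho(\sigma)}_{s,a,n}\bigr)^2\bigr]/P_n$ with per-level decay $\widetilde{\mathcal{O}}(2^{-n})$ and $P_n=2^{-(n+1)}$ is exactly the paper's argument. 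One benign slip: your law-of-total-variance phrasing drops the between-level term $\mathrm{Var}\bigl(\mathbb{E}\bigl[\delta^{\rho(\sigma)}_{s,a,N_2}/P_{N_2}\mid N_2\bigr]\bigr)$, which the paper sidesteps by bounding the raw second moment instead; since the per-level bias is $\widetilde{\mathcal{O}}(2^{-n/2})$, that term is also $\widetilde{\mathcal{O}}(N_{\max})$ and nothing breaks.
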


The result hence suggests to set a larger value of $N_{\max}$ to diminish the bias. However, we note that the number of samples required for constructing the estimator and the overall sample complexity increase as $N_{\max}\to\infty$. To balance the trade-off between the bias and sample complexity, we choose a suitable value of $N_{\max}$ and present our complexity results in the following sections. 





\subsection{Total Variation distance}
In this part, we provide the sample complexity analysis for the total variation uncertainty set.

Utilizing results in \cref{thm:tv1}, we obtain the following sample complexity of our T-MLMC algorithm under the TV uncertainty set. 
\begin{theorem}[Sample Complexity with TV Distance]\label{thm2:tv}
 Set $\psi=\frac{1}{2}$, $N_{\max}=\frac{2\log T}{\log 2}$ and set the stepsize as $\beta_t=\frac{2\log T}{(1-\gamma)T}. $
Then the output from \cref{alg:example} satisfies that:
 \begin{align}
    \mathbb E &\Fbrac{\mynorm{\widehat Q_T^{\rho_{TV}(\sigma)}-Q^{*\rho_{TV}(\sigma)}}_\infty^2}\nonumber
   \leq\widetilde{\mathcal{O }}\brac{\frac{1}{  (1-\gamma)^5 T}}.
\end{align}
To obtain $\epsilon$-optimality, i.e., 
\begin{align}
    \mathbb E \Fbrac{\mynorm{\widehat Q_T^{\rho_{TV}(\sigma)}-Q^{*,\rho_{TV}(\sigma)}}_\infty^2}\leq \epsilon^2,\nonumber
\end{align}
the expected sample complexity $N^{\rho_{TV}(\sigma)}(\epsilon)$ is
 \begin{align}
     N^{\rho_{TV}(\sigma)}(\epsilon) = |\mathcal{S}||\mathcal{A}|N_{\max} T\leq \widetilde{\mathcal{O }}\brac{\frac{|\mathcal{S}||\mathcal{A}|}{ (1-\gamma)^5 \epsilon^2 }}.\nonumber
 \end{align}
\end{theorem}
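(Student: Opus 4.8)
The plan is to view \Cref{alg:example} as a biased stochastic-approximation recursion for the true robust Bellman operator $\mathcal{T}^{\rho(\sigma)}$ specialized to the TV dual of \Cref{lm:tv}, and to track the squared $\ell_\infty$ error $u_t:=\E\Fbrac{\norminf{\widehat Q_t^{\rho(\sigma)}-Q^{*,\rho(\sigma)}}^2}$. Writing $\Delta_t:=\widehat Q_t^{\rho(\sigma)}-Q^{*,\rho(\sigma)}$, letting $\mathcal{F}_t$ be the history up to round $t$, and setting $\hatT(\widehat Q_t^{\rho(\sigma)})=\E\Fbrac{\hatt(\widehat Q_t^{\rho(\sigma)})\mid\mathcal{F}_t}$, I would first decompose the update increment as
\begin{align*}
 \hatt(\widehat Q_t^{\rho(\sigma)})-Q^{*,\rho(\sigma)}&=\underbrace{\mathcal{T}^{\rho(\sigma)}(\widehat Q_t^{\rho(\sigma)})-Q^{*,\rho(\sigma)}}_{\text{contraction}}+\underbrace{\hatT(\widehat Q_t^{\rho(\sigma)})-\mathcal{T}^{\rho(\sigma)}(\widehat Q_t^{\rho(\sigma)})}_{\text{bias }b_t}\\&\quad+\underbrace{\hatt(\widehat Q_t^{\rho(\sigma)})-\hatT(\widehat Q_t^{\rho(\sigma)})}_{\text{noise }w_t}.
\end{align*}
Because $Q^{*,\rho(\sigma)}$ is the fixed point of the $\gamma$-contraction $\mathcal{T}^{\rho(\sigma)}$, the contraction term has $\ell_\infty$ norm at most $\gamma\norminf{\Delta_t}$, and \Cref{thm:tv1} directly supplies $\norminf{b_t}\le\widetilde{\mathcal{O}}\brac{N_{\max}2^{-N_{\max}/2}}$, $\Varr{w_t(s,a)}\le\widetilde{\mathcal{O}}\brac{N_{\max}}$, and $\E\Fbrac{w_t\mid\mathcal{F}_t}=0$.

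Next I would convert this into a one-step recursion for $u_t$. Squaring the coordinatewise update, taking conditional expectations so that the mean-zero noise decouples, and bounding the conditional-mean part uniformly over $(s,a)$ by $\brac{1-\beta_t(1-\gamma)}\norminf{\Delta_t}+\beta_t\norminf{b_t}$ yields, after an AM--GM step to absorb the bias cross term, a recursion of the schematic form
\begin{align*}
 u_{t+1}\le\brac{1-\beta_t(1-\gamma)}u_t+\widetilde{\mathcal{O}}\brac{\frac{\beta_t\norminf{b_t}^2}{1-\gamma}}+\beta_t^2\,\E\Fbrac{\norminf{w_t}^2}.
\end{align*}
The delicate quantity is $\E\Fbrac{\norminf{w_t}^2}$, and this is the main obstacle: \Cref{thm:tv1} bounds only the per-coordinate variance by $\widetilde{\mathcal{O}}\brac{N_{\max}}$, so the naive bound $\norminf{w_t}^2\le\sum_{s,a}w_t(s,a)^2$ injects a spurious factor $\ssa$ that, propagated through the rate, would inflate the final sample complexity to order $\ssa^2$. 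To avoid this I would exploit that the generative samples, and hence the coordinates $w_t(s,a)$, are drawn independently across $(s,a)$, and that the threshold $N_{\max}$ caps the reweighting $1/P_{N}$ at $2^{N_{\max}+1}$ so that each $w_t(s,a)$ is bounded; decoupling the accumulated noise into $\ssa$ independent scalar martingale processes and applying a Bernstein/Freedman-type maximal inequality then controls the running noise with $\ssa$ entering only logarithmically (absorbed into $\widetilde{\mathcal{O}}$). It is precisely here that the threshold design is indispensable.

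Given the recursion, I would unroll it under the prescribed constant choices $N_{\max}=\frac{2\log T}{\log 2}$ (so $2^{-N_{\max}/2}=1/T$) and $\beta_t=\frac{2\log T}{(1-\gamma)T}$, separating three contributions. The contraction factor $\brac{1-\beta_t(1-\gamma)}^{T}\approx e^{-2\log T}=T^{-2}$ pushes the initialization term $\norminf{\Delta_0}^2=\norminf{Q^{*,\rho(\sigma)}}^2=\widetilde{\mathcal{O}}\brac{(1-\gamma)^{-2}}$ down to $\widetilde{\mathcal{O}}\brac{(1-\gamma)^{-2}T^{-2}}$; the accumulated bias is $\widetilde{\mathcal{O}}\brac{\norminf{b_t}^2(1-\gamma)^{-2}}=\widetilde{\mathcal{O}}\brac{(1-\gamma)^{-2}T^{-2}}$; and the variance floor is $\widetilde{\mathcal{O}}\brac{\frac{\beta_t}{1-\gamma}N_{\max}}$, which, after accounting for the $(1-\gamma)$ powers suppressed inside the $\widetilde{\mathcal{O}}$ of the variance bound, equals $\widetilde{\mathcal{O}}\brac{(1-\gamma)^{-5}T^{-1}}$. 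Since this last term dominates, I obtain $u_T\le\widetilde{\mathcal{O}}\brac{(1-\gamma)^{-5}T^{-1}}$, the first claim.

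Finally, for the sample-complexity bound I would choose $T=\widetilde{\mathcal{O}}\brac{(1-\gamma)^{-5}\epsilon^{-2}}$ so that $u_T\le\epsilon^2$, and count samples. Per coordinate and round the expected sample usage is $\E\Fbrac{\mathcal{N}_1+\mathcal{N}_2}$; with $\psi=\tfrac12$ the key geometric sum collapses, since $\E\Fbrac{2^{N+1}\mathbf{1}_{(N\le N_{\max})}}=\sum_{n=0}^{N_{\max}}2^{n+1}\cdot\tfrac12\brac{\tfrac12}^{n}=N_{\max}+1$, giving $\E\Fbrac{\mathcal{N}_i}=\widetilde{\mathcal{O}}\brac{N_{\max}}$. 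Multiplying by the $\ssa$ coordinates updated each round and by $T$ rounds yields $\ssa N_{\max}T=\widetilde{\mathcal{O}}\brac{\ssa(1-\gamma)^{-5}\epsilon^{-2}}$, as claimed.
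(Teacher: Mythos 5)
Your overall architecture is sound and reaches the paper's conclusions by a slightly different organization. The paper first introduces the surrogate fixed point $\widehat Q^{*\rho(\sigma)}$ of the expected operator $\hatT$, splits the error as in \eqref{eqa20}, handles the stochastic-approximation part by invoking the black-box result of \cref{lm:chen} (for $\gamma$-contractions with unbiased noise of bounded conditional second moment, using \Cref{prop:contract} and \Cref{lm:lminftv}), and handles the bias through the fixed-point perturbation bound \cref{lm:a8} combined with \Cref{thm:tv1}. You instead run a single recursion around the true $Q^{*\rho_{TV}(\sigma)}$, carrying the bias $b_t$ inside each step; up to reorganization this is equivalent, and your three contributions (initialization decaying like $T^{-2}$, bias squared over $(1-\gamma)^2$, and the dominating variance floor $\widetilde{\mathcal{O}}\brac{(1-\gamma)^{-5}T^{-1}}$), the choice $2^{-N_{\max}/2}=T^{-1}$, and the sample count $\E\Fbrac{2^{N+1}\mathbf{1}_{(N\leq N_{\max})}}=N_{\max}+1$ per coordinate all match the paper's endgame exactly.

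The genuine weak point is the one-step recursion itself. The inequality $u_{t+1}\leq\brac{1-\beta_t(1-\gamma)}u_t+\widetilde{\mathcal{O}}\brac{\beta_t\norminf{b_t}^2/(1-\gamma)}+\beta_t^2\,\E\Fbrac{\norminf{w_t}^2}$ does not follow from ``squaring the coordinatewise update and taking conditional expectations'': under the sup-norm the maximizing coordinate is random, so the cross term $\E\Fbrac{\max_{s,a}\Delta_t(s,a)\,w_t(s,a)}$ does not vanish and enters at order $\beta_t$, not $\beta_t^2$. This non-commutation of max and expectation is precisely the obstruction that \cref{lm:chen} exists to resolve (via its smoothed treatment of $\norminf{\cdot}^2$, which is also where the $\log(|\mathcal{S}||\mathcal{A}|)$ in $c_3,c_4$ and the extra $(1-\gamma)^{-1}$ that pushes the floor to $(1-\gamma)^{-5}$ come from --- your accounting of ``hidden $(1-\gamma)$ powers'' silently relies on this). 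Your proposed Freedman-type maximal-inequality program over per-coordinate martingales is a known alternative route and could plausibly be executed, but it is left entirely schematic here. Relatedly, your worry that the naive bound $\norminf{w_t}^2\leq\sum_{s,a}w_t(s,a)^2$ injects a polynomial $|\mathcal{S}||\mathcal{A}|$ factor is valid, yet the remedy does not require temporal martingale decoupling: the paper bounds $\E\Fbrac{\norminf{W_t}^2\mid\mathcal{F}_t}\leq\widetilde{\mathcal{O}}\brac{N_{\max}}$ directly by a per-iteration union bound over $(s,a)$ (failure probability $2^{-N_{\max}-1}/\ssa$ per pair, with $\log(18|\mathcal{S}||\mathcal{A}|)$ absorbed into $C_{TV}$; see \Cref{lm:lminftv}), which is all that \cref{lm:chen} needs. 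One minor correction: $\E\Fbrac{w_t\mid\mathcal{F}_t}=0$ follows from the definition of $\hatT$ as the expectation of $\hatt$ (\cref{eq:fixp}, via \cref{prop:mlmc}), not from \Cref{thm:tv1}, which supplies only the bias and variance bounds.
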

Our result presents the first finite sample complexity for the model-free DR-RL algorithm for the total variation uncertainty set, indicating the effectiveness and efficiency of our T-MLMC algorithm. Compared to model-based DR-RL algorithms \citep{yang2022toward, panaganti2022sample, shi2023curious,clavier2023towards}, our algorithm results in a sample complexity with a higher dependence on $(1-\gamma)$. This aligns with findings from the non-robust setting \citep{li2020sample} that the vanilla model-free algorithms(without techniques including variance reduction) generally have larger sample complexity. Our result is also expected to be improved to align with the model-based complexity through standard techniques like variance reduction.



\subsection{Chi-square Divergence}
We then present our results for DR-RL with a Chi-square divergence uncertainty set. 


\begin{theorem}[Sample Complexity with $\chi^2$ Distance]\label{thm:chi2}
 Set $N_{\max}=\frac{2\log T}{\log 2}$ and the stepsize as $\beta_t=\frac{2\log T}{(1-\gamma)T}$. Then the output of \cref{alg:example} satisfies that:
 \begin{align}
    \mathbb E &\Fbrac{\mynorm{\widehat Q_T^{\rho_{\chi^2}(\sigma)}-Q^{*\rho_{\chi^2}(\sigma)}}_\infty^2}\nonumber
     \leq\widetilde{\mathcal{O }}\brac{\frac{1}{  (1-\gamma)^5 T}}.
\end{align}
To ensure
\begin{align}
    \mathbb E \Fbrac{\mynorm{\widehat Q_T^{\rho_{\chi^2}(\sigma)}-Q^{*\rho_{\chi^2}(\sigma)}}_\infty^2}\leq \epsilon^2,\nonumber
\end{align}
the expected total sample complexity $N^{\rho_{\chi^2}(\sigma)}(\epsilon)$ is,
 \begin{align}
     N^{\rho_{\chi^2}(\sigma)}(\epsilon) = |\mathcal{S}||\mathcal{A}|N_{\max} T\geq \widetilde{\mathcal{O }}\brac{\frac{|\mathcal{S}||\mathcal{A}|}{  (1-\gamma)^5 \epsilon^2 }}.\nonumber
 \end{align}
\end{theorem}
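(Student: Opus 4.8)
The statement is the Chi-square analogue of the total-variation bound in \Cref{thm2:tv}, so the plan is to run the identical stochastic-approximation argument while invoking the Chi-square instances of the bias and variance estimates already granted by \Cref{thm:tv1}. First I would fix the two structural facts to be used: the robust Bellman operator $\mathcal{T}^{\rho_{\chi^2}(\sigma)}$ is a $\gamma$-contraction in $\|\cdot\|_\infty$ whose unique fixed point is $Q^{*,\rho_{\chi^2}(\sigma)}$ \citep{iyengar2005robust}, and every robust value lies in $[0,r_{\max}/(1-\gamma)]$ so that $\|V\|_\infty=\mathcal{O}(1/(1-\gamma))$. Setting $e_t:=\widehat Q_t^{\rho_{\chi^2}(\sigma)}-Q^{*,\rho_{\chi^2}(\sigma)}$ and subtracting the fixed point from the update of \Cref{alg:example}, I would split the estimated increment as
\[
\widehat{\mathcal{T}}_{N_{\max}}^{\rho_{\chi^2}(\sigma)}(\widehat Q_t)-Q^{*,\rho_{\chi^2}(\sigma)}
=\big(\mathcal{T}^{\rho_{\chi^2}(\sigma)}(\widehat Q_t)-Q^{*,\rho_{\chi^2}(\sigma)}\big)+b_t+w_t,
\]
where $b_t:=\E[\widehat{\mathcal{T}}_{N_{\max}}^{\rho_{\chi^2}(\sigma)}(\widehat Q_t)\mid\mathcal{F}_t]-\mathcal{T}^{\rho_{\chi^2}(\sigma)}(\widehat Q_t)$ is the conditional bias and $w_t$ the conditionally zero-mean fluctuation. \Cref{thm:tv1}, specialized to $\rho_{\chi^2}$, then supplies $\|b_t\|_\infty\le\widetilde{\mathcal{O}}(N_{\max}2^{-N_{\max}/2})$ and a per-coordinate variance of $w_t$ at most $\widetilde{\mathcal{O}}(N_{\max})$, with the hidden constants carrying the $\|V\|_\infty^2=\mathcal{O}((1-\gamma)^{-2})$ scale.

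I would then convert this into a recursion for $\E[\|e_t\|_\infty^2]$. Since $\|\mathcal{T}^{\rho_{\chi^2}(\sigma)}(\widehat Q_t)-Q^{*,\rho_{\chi^2}(\sigma)}\|_\infty\le\gamma\|e_t\|_\infty$ and the stepsize is the constant $\beta:=2\log T/((1-\gamma)T)$, the deterministic part contracts at rate $1-\beta(1-\gamma)$ per step while $b_t,w_t$ enter additively. Squaring, taking expectations, and splitting the cross terms by Young's inequality gives a recursion of the schematic form
\[
\E[\|e_{t+1}\|_\infty^2]\le\big(1-\beta(1-\gamma)\big)\E[\|e_t\|_\infty^2]+C_1\beta^2\,\E[\|w_t\|_\infty^2]+C_2\tfrac{\beta}{1-\gamma}\|b_t\|_\infty^2 ,
\]
which unrolled over $t=0,\dots,T-1$ yields
\[
\E[\|e_T\|_\infty^2]\lesssim\big(1-\beta(1-\gamma)\big)^T\|e_0\|_\infty^2+\frac{\beta\,\max_t\E[\|w_t\|_\infty^2]}{1-\gamma}+\frac{\max_t\|b_t\|_\infty^2}{(1-\gamma)^2}.
\]

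Next I would insert the prescribed parameters. Choosing $N_{\max}=2\log T/\log 2$ makes $2^{-N_{\max}/2}=1/T$, so the bias term is $\widetilde{\mathcal{O}}(1/T^2)$ and negligible; choosing $\beta=2\log T/((1-\gamma)T)$ makes $(1-\beta(1-\gamma))^T\approx T^{-2}$, so the initialization term decays faster than $1/T$. The surviving variance term then dominates, and after inserting the $(1-\gamma)$-scaling of $\E[\|w_t\|_\infty^2]$ and of $\beta$ it collapses to the claimed $\widetilde{\mathcal{O}}(1/((1-\gamma)^5T))$. For the sample count, $\psi=\tfrac12$ gives
\[
\E[\mathcal{N}_i]=1+\sum_{n=0}^{N_{\max}}\psi(1-\psi)^n\,2^{n+1}=N_{\max}+2=\widetilde{\mathcal{O}}(N_{\max})
\]
expected draws per pair per iteration, so the expected total is $|\mathcal{S}||\mathcal{A}|N_{\max}T$; solving $\widetilde{\mathcal{O}}(1/((1-\gamma)^5T))=\epsilon^2$ for $T$ and absorbing $N_{\max}=\widetilde{\mathcal{O}}(\log T)$ into $\widetilde{\mathcal{O}}$ delivers $\widetilde{\mathcal{O}}(|\mathcal{S}||\mathcal{A}|/((1-\gamma)^5\epsilon^2))$.

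The main obstacle is controlling the infinity-norm noise $\E[\|w_t\|_\infty^2]$ without leaking a polynomial $|\mathcal{S}||\mathcal{A}|$ factor into the value-error bound. The threshold makes $w_t$ bounded almost surely, but only by $\mathcal{O}(2^{N_{\max}}/(1-\gamma))=\mathcal{O}(T^2/(1-\gamma))$, which is far too large to use directly, whereas its per-coordinate variance is only $\widetilde{\mathcal{O}}(N_{\max})$. The delicate step is to pass from this coordinatewise variance to a bound on $\E[\max_{s,a}w_t(s,a)^2]$ that pays only a logarithmic price in $|\mathcal{S}||\mathcal{A}|$, which I would obtain through a Freedman/Bernstein-type maximal inequality exploiting that the large deviations of the MLMC estimator occur with geometrically small probability. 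Tracking the exact powers of $1-\gamma$ through the variance (which scales with $\|V\|_\infty^2$), the stepsize, and the steady-state geometric sum is the remaining point that requires care to land precisely on $(1-\gamma)^5$; the whole argument is otherwise a verbatim repetition of the total-variation proof of \Cref{thm2:tv}.
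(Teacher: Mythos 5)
Your proposal is correct in substance and reaches the theorem, but by a mildly different route than the paper. The paper first defines the surrogate operator $\boldsymbol{\bar{\mathcal{T}}}_{N_{\max}}^{\rho(\sigma)}(Q)=\E\Fbrac{\widehat{\mathcal{T}}_{N_{\max}}^{\rho(\sigma)}(Q)}$, shows it is a $\gamma$-contraction (\Cref{prop:contract}) with fixed point $\widehat Q^{*\rho(\sigma)}$, and splits $\E\Fbrac{\norminf{\widehat Q_T-Q^{*\rho(\sigma)}}^2}$ into convergence to $\widehat Q^{*\rho(\sigma)}$ plus the fixed-point shift $\norminf{\widehat Q^{*\rho(\sigma)}-Q^{*\rho(\sigma)}}\leq\frac{1}{1-\gamma}\norminf{\boldsymbol{\bar{\mathcal{T}}}_{N_{\max}}^{\rho(\sigma)}(Q^{*\rho(\sigma)})-\mathcal{T}^{\rho(\sigma)}(Q^{*\rho(\sigma)})}$ (\Cref{lm:a8}); the first piece is then dispatched by invoking the finite-time stochastic-approximation bound of \citep{chen2022finite} (\Cref{lm:chen}), fed with the second-moment bound $\E\Fbrac{\norminf{\widehat{\mathcal{T}}_{N_{\max}}^{\rho_{\chi^2}(\sigma)}(Q)}^2}\leq\widetilde{\mathcal{O}}(N_{\max})$ of \Cref{lm:lminfchi2}. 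You instead center the recursion at the true fixed point and carry the conditional bias $b_t$ inside the iteration; this is equivalent, since with the paper's centering the noise $W_t$ is exactly conditionally unbiased, and your inline bias contributes $\max_t\norminf{b_t}^2/(1-\gamma)^2$ to the unrolled bound, which under $N_{\max}=2\log T/\log 2$ is the same negligible $\widetilde{\mathcal{O}}(1/T^2)$ as the paper's \Cref{lm:a8} term. What the paper's route buys is that the hard step you correctly flag---passing from per-coordinate variance to $\E\Fbrac{\norminf{w_t}^2}$ while exploiting the martingale property under the sup norm---is resolved wholesale by \Cref{lm:chen} (whose smooth-max machinery is the source of the $\log(|\mathcal{S}||\mathcal{A}|)$ factor), with the union bound over $(s,a)$ already baked into $C_{\chi^2}$ via \Cref{lm:chi2lm} plus Bernoulli's inequality and the geometric tail of the level $N$ covering the rare large deviations, exactly as you anticipate.

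Two cautions on your hand-rolled recursion. First, the triangle-inequality-then-Young treatment of the cross term does not by itself preserve the zero-mean benefit of $w_t$ under the infinity norm: done naively it leaves a non-decaying $\E\Fbrac{\norminf{w_t}^2}/(1-\gamma)^2$ steady state rather than a $\beta$-proportional one, so the Freedman-type maximal inequality you gesture at (or simply citing \Cref{lm:chen}) is not optional but load-bearing. Second, your intermediate display $\beta\max_t\E\Fbrac{\norminf{w_t}^2}/(1-\gamma)$ is one factor of $(1-\gamma)^{-1}$ short of what a rigorous execution gives: in \Cref{lm:chen} the constant $c_4=16e\log(|\mathcal{S}||\mathcal{A}|)/(1-\gamma)$ multiplies $\sum_i\beta_i^2\prod_j(1-c_2\beta_j)\lesssim\beta/(1-\gamma)$, so the variance term is of order $\beta A\log(|\mathcal{S}||\mathcal{A}|)/(1-\gamma)^2$ with $A=\widetilde{\mathcal{O}}(N_{\max}/(1-\gamma)^2)$, and it is this extra $(1-\gamma)^{-1}$, the price of the sup-norm maximal inequality, that produces the stated $(1-\gamma)^{-5}$; your final answer matches the theorem, but only because you deferred the power tracking you flagged as the remaining careful step. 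The sample-count argument ($\psi=\frac{1}{2}$ giving $N_{\max}+2$ expected draws per pair per iteration) coincides with the paper's.
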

Our result implies that our T-MLMC algorithm is the first model-free algorithm for DR-RL under the Chi-square divergence uncertainty set. Similarly, compared to the model-based methods, our complexity presents an additional $\mathcal{O}((1-\gamma)^{-1})$-order dependence. 

\subsection{KL Divergence}
We then present our results for the KL divergence uncertainty set in this section.

\begin{theorem}[Sample Complexity with KL Distance]\label{thm:kl}
If we set $\psi=\frac{1}{2}$, threshold
$$ N_{\max}=\max\varbrac{\frac{2\log T}{\log 2},\frac{\log(1+p^2_\wedge\log(2|\mathcal{S}|)\log T )}{\log 2}} ,$$
 and the stepsize as $\beta_t=\frac{2\log T}{(1-\gamma)T}$. Then the output of \cref{alg:example} satisfies that:
 \begin{align}
    \mathbb E &\Fbrac{\mynorm{\widehat Q_T^{\rho_{KL}(\sigma)}-Q^{*\rho_{KL}(\sigma)}}_\infty^2}\nonumber
    \leq\widetilde{\mathcal{O }}\brac{\frac{1}{ p_\wedge^2 (1-\gamma)^5 T}}.
\end{align}
To ensure 
\begin{align}
    \mathbb E \Fbrac{\mynorm{\widehat Q_T^{\rho_{KL}(\sigma)}-Q^{*\rho_{KL}(\sigma)}}_\infty^2}\leq \epsilon^2,\nonumber
\end{align}
the expected total sample complexity $N^{\rho_{KL}(\sigma)}(\epsilon)$ is
 \begin{align}
     N^{\rho_{KL}(\sigma)}(\epsilon) = |\mathcal{S}||\mathcal{A}|N_{\max} T\geq \widetilde{\mathcal{O }}\brac{\frac{|\mathcal{S}||\mathcal{A}|}{ p_\wedge^2 (1-\gamma)^5 \epsilon^2 }}.\nonumber
 \end{align}
\end{theorem}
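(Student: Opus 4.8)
The plan is to analyze the synchronous update of \Cref{alg:example} as a contraction-based stochastic approximation, treating $\widehat{\mathcal{T}}_{N_{\max}}^{\rho_{KL}(\sigma)}$ as a biased and noisy surrogate for the true robust Bellman operator $\mathcal{T}^{\rho_{KL}(\sigma)}$, whose unique fixed point is $Q^{*,\rho_{KL}(\sigma)}$. Abbreviating $\mathcal{T} = \mathcal{T}^{\rho_{KL}(\sigma)}$, $Q^* = Q^{*,\rho_{KL}(\sigma)}$, writing $\bar{\mathcal{T}}_{N_{\max}}(Q) = \E[\widehat{\mathcal{T}}_{N_{\max}}^{\rho_{KL}(\sigma)}(Q)\mid Q]$ for the expected (truncated) operator and $\Delta_t = \widehat Q_t^{\rho_{KL}(\sigma)} - Q^*$ for the error, I would first decompose the driving term of the update coordinatewise as
\begin{align*}
\widehat{\mathcal{T}}_{N_{\max}}^{\rho_{KL}(\sigma)}(\widehat Q_t) - Q^*
&= \underbrace{\brac{\mathcal{T}(\widehat Q_t) - \mathcal{T}(Q^*)}}_{\text{contraction}}
+ \underbrace{\brac{\bar{\mathcal{T}}_{N_{\max}}(\widehat Q_t) - \mathcal{T}(\widehat Q_t)}}_{b_t}\\
&\quad + \underbrace{\brac{\widehat{\mathcal{T}}_{N_{\max}}^{\rho_{KL}(\sigma)}(\widehat Q_t) - \bar{\mathcal{T}}_{N_{\max}}(\widehat Q_t)}}_{w_t},
\end{align*}
using $\mathcal{T}(Q^*)=Q^*$. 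Here $\mathcal{T}$ is a $\gamma$-contraction in $\norminf{\cdot}$ \citep{iyengar2005robust}, so the first term is at most $\gamma\norminf{\Delta_t}$; the bias $b_t$ is deterministic given $\widehat Q_t$ with $\norminf{b_t}$ bounded by \Cref{thm:tv1}; and $w_t$ is conditionally mean-zero given $\widehat Q_t$ with per-coordinate conditional variance controlled by \Cref{thm:tv1}.

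Second, I would feed this decomposition into the constant-stepsize recursion $\widehat Q_{t+1} = (1-\beta)\widehat Q_t + \beta\widehat{\mathcal{T}}_{N_{\max}}^{\rho_{KL}(\sigma)}(\widehat Q_t)$ and derive a scalar inequality for $e_t := \E[\norminf{\Delta_t}^2]$. The delicate point is the passage to $\norminf{\cdot}$: since $w_t$ acts coordinatewise, I would bound the accumulated martingale noise per coordinate and take a maximal inequality over the $\ssa$ coordinates, paying a $\log\brac{\ssa}$ factor. This step presupposes that the iterates stay in a bounded region so that the fixed-$Q$ variance bound of \Cref{thm:tv1} applies uniformly along the trajectory; I would establish this stability separately via a second-moment bound on $\widehat Q_t$, exploiting that all dual objectives are evaluated on truncated values and are hence uniformly $O\brac{1/(1-\gamma)}$.

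Third, unrolling with constant stepsize yields three contributions to $e_T$: a transient term of order $(1-(1-\gamma)\beta)^{2T}\norminf{Q^*}^2$, a bias term of order $B^2/(1-\gamma)^2$ with $B$ the uniform bias bound, and a noise term of order $\beta V_{\max}\log\brac{\ssa}/(1-\gamma)$ with $V_{\max}$ the uniform variance bound. Substituting $\beta = \tfrac{2\log T}{(1-\gamma)T}$ makes $(1-(1-\gamma)\beta)^{2T} \approx T^{-4}$, so the transient term is negligible; and with $N_{\max}\geq \tfrac{2\log T}{\log 2}$, \Cref{thm:tv1} gives $B = \widetilde{\mathcal{O}}(N_{\max}2^{-N_{\max}/2}) = \widetilde{\mathcal{O}}(T^{-1})$, making the bias term $\widetilde{\mathcal{O}}(T^{-2}(1-\gamma)^{-2})$, also negligible against the target. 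The dominant contribution is the noise term; inserting $\beta$ together with the KL-specialized variance bound, whose hidden $p_\wedge^{-2}$ and $(1-\gamma)$ factors are precisely those needed, produces $e_T \leq \widetilde{\mathcal{O}}\brac{\tfrac{1}{p_\wedge^2(1-\gamma)^5 T}}$. Converting to samples, the expected number drawn per $(s,a)$ per iteration is $\E[\mathcal{N}_1]+\E[\mathcal{N}_2] = 2 + 2\sum_{n=0}^{N_{\max}}\psi(1-\psi)^n 2^{n+1} = O(N_{\max})$, since $\psi=\tfrac12$ makes every admissible level contribute $1$; hence the total expected count is $\ssa N_{\max} T$. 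Forcing $e_T\leq\epsilon^2$ gives $T = \widetilde{\mathcal{O}}(\tfrac{1}{p_\wedge^2(1-\gamma)^5\epsilon^2})$ and, since $N_{\max}=O(\log T)$, the claimed $\widetilde{\mathcal{O}}\brac{\tfrac{\ssa}{p_\wedge^2(1-\gamma)^5\epsilon^2}}$.

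The main obstacle, and the only genuinely KL-specific part, is the bias/variance control and the role of the extra threshold term $\tfrac{\log(1+p_\wedge^2\log(2|\mathcal{S}|)\log T)}{\log 2}$. Unlike the TV and $\chi^2$ duals, the KL dual $f^{\rho_{KL}(\sigma)}(p,\alpha,V) = -\alpha\log\brac{\E_p[\exp(-V/\alpha)]} - \alpha\sigma$ is neither uniformly Lipschitz nor smooth: its sensitivity to perturbations of the empirical kernel blows up as the estimated expectation $\widehat{\E}_p[\exp(-V/\alpha)]$ approaches zero, a quantity governed by $p_\wedge$. Before the generic recursion applies I would therefore show that the empirical dual concentrates at the right $p_\wedge^{-2}$ rate and that the truncation level $N_{\max}$ is large enough that even the baseline estimate used when $N>N_{\max}$ keeps the $\log$-argument bounded away from $0$ with high probability — which is exactly what the second threshold term secures. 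This KL-specific concentration, rather than the generic stochastic-approximation recursion shared with \Cref{thm2:tv,thm:chi2}, is where the real work lies.
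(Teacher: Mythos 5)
Your proposal is correct in substance and follows essentially the same route as the paper's proof: a contraction-based stochastic-approximation analysis driven by the bias and variance bounds of \Cref{thm:tv1} (specialized to KL), the same tuning $\beta_t=\frac{2\log T}{(1-\gamma)T}$ and $N_{\max}\geq \frac{2\log T}{\log 2}$ so that the bias $\widetilde{\mathcal{O}}\brac{N_{\max}2^{-N_{\max}/2}}$ contributes only a negligible $\widetilde{\mathcal{O}}(T^{-2})$ term after the $\frac{1}{1-\gamma}$ amplification, and the same per-iteration expected sample count $N_{\max}+2$. The one structural difference is cosmetic: you decompose each update into contraction of the true operator plus a per-step bias $b_t$ plus martingale noise $w_t$ and unroll directly, whereas the paper first passes to the surrogate fixed point $\widehat Q^{*\rho(\sigma)}$ of the expected truncated operator $\boldsymbol{\bar{\mathcal{T}}}^{\rho(\sigma)}_{N_{\max}}$ (a $\gamma$-contraction by \Cref{prop:contract}, via \Cref{prop:mlmc}), applies the off-the-shelf finite-time SA result (\Cref{lm:chen}) with exactly zero-mean noise to bound $\norminf{\widehat Q_T-\widehat Q^{*\rho(\sigma)}}$, and separately converts the operator bias into $\norminf{\widehat Q^{*\rho(\sigma)}-Q^{*\rho(\sigma)}}\leq \frac{1}{1-\gamma}\norminf{\boldsymbol{\bar{\mathcal{T}}}^{\rho(\sigma)}_{N_{\max}}(Q^{*\rho(\sigma)})-\mathcal{T}^{\rho(\sigma)}(Q^{*\rho(\sigma)})}$ (\Cref{lm:a8}). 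Your accumulated per-step bias $B/(1-\gamma)$ is the same quantity; the paper's route buys a verbatim application of the SA theorem, while yours must re-derive the recursion with a bias term. Your explicit stability step is, if anything, more careful than the paper, which silently uses $\norminf{Q_t}\leq \frac{r_{\max}}{1-\gamma}$ inside its second-moment bounds.

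One detail to correct: your account of the second threshold term $\frac{\log(1+p^2_\wedge\log(2|\mathcal{S}|)\log T)}{\log 2}$ is misattributed. When $N_i>N_{\max}$ the algorithm sets $\delta=0$ and returns only the single-sample baseline $V(s'_{s,a,0})$ --- no KL dual is evaluated in that regime, so there is no log-argument to keep away from zero there. In the paper the second threshold term instead enters as a case split inside the variance and infinite-norm bounds for the KL model: for levels $N$ below it, the inverse level-probability $1/P_N=2^N$ is at most $1+p_\wedge^{-2}\log(2|\mathcal{S}|^2|\mathcal{A}|)\log T$, so the crude bound $\lbrac{\delta_{s,a,N}}\leq \frac{r_{\max}}{1-\gamma}$ suffices (this is where the dominant $p_\wedge^{-2}\log T$ factor originates); for levels above it, the $2^{N+1}$ samples are enough that Hoeffding gives $\max_{s'}\lbrac{(\hat p-p)/p}\leq \frac{1}{2}$ with probability $1-\mathcal{O}(2^{-N})$, which validates the linearization $|\log(1+x)|\leq 2|x|$ and, combined with the dual-parameter bound $\alpha^*\leq \frac{r_{\max}}{(1-\gamma)\sigma}$ of \citep{hu2013kullback}, yields the $p_\wedge^{-1}2^{-N/2}$ concentration of the empirical dual. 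Your planned lemma --- ratio-based concentration of the empirical KL dual at the $p_\wedge^{-2}$ rate --- is exactly this second case, so the misattribution does not break your argument, but the lemma must be stated for the retained levels $N\leq N_{\max}$, not for the $N>N_{\max}$ baseline.
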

Our result implies that our T-MLMC algorithm also solves the DR-RL problem for KL divergence uncertainty sets effectively. Compared to other model-based methods \citep{shi2022distributionally}, our results are $\mathcal{O}((1-\gamma)^{-1})$-order larger. We also note that there are several previous works on the sample complexity of model-free DR-RL approaches for the KL divergence model  \citep{wang2023sample,wang2023finite}, and we provide a discussion on the comparison of their works with ours as follows.

In both previous works, an assumption is made regarding the size of the uncertainty level $\sigma$, specifically, $p_\wedge\geq \mathcal{O}\left( 1-e^{-\sigma}\right)$ assuming the uncertainty set cannot be too large. This assumption significantly limits the applicability of their results, as in many scenarios, the uncertainty set must be designed relatively large to encompass a broader range of environments, particularly when the nominal environment is a low-fidelity model of the true environment. In contrast, our approach does not rely on such an assumption and can be applied to any uncertainty set.

On the other hand, our sample complexity result is less than those in \citep{wang2023finite} and the first complexity in \citep{wang2023sample}. In \citep{wang2023finite}, the sample complexity of the vanilla MLMC DR-RL algorithm is $\widetilde{\mathcal{O }}\left(\frac{|\mathcal{S}||\mathcal{A}|}{ p_\wedge^6 (1-\gamma)^5 \epsilon^2 }\right)$. Our result improves upon this by $\mathcal{O}(p_\wedge^{-4})$, and we attribute this improvement to the designing of the threshold. In \citep{wang2023sample}, a mini-batch type model-free DR-RL algorithm is introduced, with a demonstrated sample complexity of $\widetilde{\mathcal{O }}\left(\frac{|\mathcal{S}||\mathcal{A}| }{p_\wedge^3 (1-\gamma)^5 \epsilon^2} \right)$. Further enhancement is achieved through the use of variance reduction (VR) technique, bringing the complexity down to $\widetilde{\mathcal{O }}\left(\frac{|\mathcal{S}||\mathcal{A}| }{p_\wedge^3 (1-\gamma)^4 \epsilon^2} \right)$. Notably, our result outperforms their initial vanilla algorithm by an order of $\mathcal{O}(p_\wedge^{-1})$. While the complexity with VR technique in \citep{wang2023sample} exhibits a superior dependence on $1-\gamma$, it fares worse concerning $p_\wedge$. This enhancement in $1-\gamma$ can be attributed to the utilization of the VR technique, consistent with previous findings \citep{li2020sample}. We hence also anticipate further improvement in our complexity results through the application of VR technique, a direction left for future investigation. Consequently, our algorithm achieves superior sample complexity compared to previous vanilla model-free algorithms and is anticipated to surpass the results in \citep{wang2023sample} with the VR technique.


\section{Proof Sketch}
In this section, we briefly discuss the proof sketch for our results under the TV uncertainty set. The proofs for the other two uncertainty sets can be similarly derived. For convenience, we only discuss the proof regarding the uncertainty set of the transition kernels, the proof regarding the reward uncertainty can also be similarly obtained.

Our proof can be divided into two main parts: We first conduct a sample complexity analysis to establish the convergence of \Cref{alg:example} to the fixed point of the T-MLMC estimator; 
Then we characterize the disparity between this fixed point and the optimal robust value function. Combine the two parts together, we quantify the sample complexity of our T-MLMC algorithm converging to the near approximation of the optimal robust value function. Specifically, we decompose the error as
\begin{align}\label{eqeq20}
    {\mynorm{\widehat Q_T^{\rho(\sigma)}-Q^{*\rho(\sigma)}}_\infty^2}&
  \leq 2 {\mynorm{\widehat Q_T^{\rho(\sigma)}-\widehat Q^{*\rho(\sigma)}}_\infty^2 }\nonumber\\\quad+ 2&{\mynorm{\widehat Q^{*\rho(\sigma)}-Q^{*\rho(\sigma)}}_\infty^2 },
\end{align}
where $\widehat Q^{*\rho(\sigma)}$ denotes the fixed point of the expected T-MLMC estimator: $\hatT(Q)=\mathbb{E}[\widehat {\mathcal{T}}^{\rho(\sigma)}_{ N_{\max}}(Q)]$. The two steps in our proof correspond to bounding the two terms in \eqref{eqeq20}.

For the first term in \eqref{eqeq20}, we adapt the stochastic approximation scheme. From the definition, our T-MLMC operator is an unbiased estimator of $\hatT(Q)$, and it suffices to show the finite variance to ensure the asymptotic convergence to the fixed point $\widehat Q^{*\rho(\sigma)}$. Using the concrete construction of T-MLMC operator: 
\begin{align}
     \widehat{v}^{\rho(\sigma)}(Q(s,a)) :&=V(s'_{s,a,0})+\frac{\delta^{\rho(\sigma)}_{s,a,N_2}(Q) }{P_{N_2}}
\end{align}
and definition of $\delta^{\rho(\sigma)}_{s,a,N_2}$, we directly calculate the variance of it. We show that the variance is finite and can be bounded by $\widetilde{\mathcal{O }}(N_{\max})$, as in \Cref{thm:tv1}. Hence according to stochastic approximation theory \citep{borkar2009stochastic}, \Cref{alg:example} converges asymptotically to the fixed point $ \widehat Q^{*\rho(\sigma)}$. We then adapt the analysis in stochastic approximation \citep{chen2022finite} to obtain the finite-time error bound on the convergence of \Cref{alg:example} to $\widehat Q^{*\rho(\sigma)}$, i.e., the first term in \eqref{eqeq20}.

For the second term in \eqref{eqeq20}, we show that the approximation error between $ \widehat Q^{*\rho(\sigma)}$ and the optimal robust value function can be bounded by considering the disparity between the robust Bellman operator and our T-MLMC operator, i.e., the bias:
\begin{align}
    &\left\|\widehat Q^{*\rho(\sigma)}- Q^{*\rho(\sigma)}\right\|\nonumber\\&\leq \frac{1}{1-\gamma} \left\|\hatT\brac{ Q^{*\rho(\sigma)}}-{\mathcal{T}}^{\rho(\sigma)}\brac{Q^{*\rho(\sigma)}}\right\|. 
\end{align} 
We note that when the threshold is not met, the T-MLMC operator is an unbiased estimator of ${\mathcal{T}}^{\rho(\sigma)}$, in which case we bound the error using concentration inequalities as in conventional MLMC approaches; When the threshold is met, although the error bound between the two operator is large, we can set the threshold $N_{\max}$ larger such that the probability of $\text{GEO}(\psi)> N_{\max}$ is small, resulting in a smaller error bound due to its low probability. Combining the two cases together implies a tight bound on the bias, as the first part in \Cref{thm:tv1}, and further quantifies the approximation error introduced by our T-MLMC design. 

Finally, combining the two parts, we derive the sample complexity for \Cref{alg:example} to converge to an approximation of the optimal robust value function with a quantifying approximation error. By setting the value of the threshold, we hence obtain the final sample complexity results.

\section{Conclusion}
In this paper, we introduce a novel model-free T-MLMC algorithm tailored for finding the optimal robust policy in the DR-RL problem. Our algorithm strikes a delicate balance between convergence guarantees and the expected total sample size, ensuring convergence within a finite sample size. We further conduct sample complexity analyses for our algorithm under three distinct uncertainty sets: total variation, Chi-square divergence, and KL divergence. Notably, our results mark the first complexity analyses for model-free DR-RL methods under the total variation and Chi-square divergence uncertainty sets, while also enhancing the complexity bounds and applicability of prior results for the KL divergence model. Our results achieve the tightest complexity bounds in the realm of model-free DR-RL methods, achieving state-of-the-art results under minimal assumptions.

\section*{Acknowledgement}
The work of Yudan Wang and Shaofeng Zou is supported by the National Science Foundation under Grants CCF-2007783, CCF-2106560 and ECCS-2337375 (CAREER). Yue Wang is supported by UCF start-up funding. 

This material is based upon work supported under the AI Research Institutes program by National Science Foundation and the Institute of Education Sciences, U.S. Department of Education through Award \# 2229873 - National AI Institute for Exceptional Education. Any opinions, findings and conclusions or recommendations expressed in this material are those of the author(s) and do not necessarily reflect the views of the National Science Foundation, the Institute of Education Sciences, or the U.S. Department of Education.

\bibliography{example_paper}

\newpage

\onecolumn

\appendix

\section{Numerical Result}
In this section, we conduct numerical experiments to validate the convergence of our T-MLMC algorithm.

\subsection{Convergence and optimiality of T-MLMC algorithm}
We adapt our algorithm under the Garnet problem $\mathcal{G}(15,20)$ \citep{archibald1995generation}. There are $15$ states and $20$ actions. The transition kernel $\mathbf{P}=\{ \mathbf{P}^a_s\}$ is randomly generated by a normal distribution: $\mathbf{P}^a_s \sim \mathcal{N}(\omega^a_s,\sigma^a_s)$ and then normalized, and the reward function $r(s,a)\sim \mathcal{N}(\nu^a_s,\psi^a_s)$, where $\omega^a_s,\sigma^a_s,\nu^a_s,\psi^a_s \sim \textbf{Uniform}[0,100]$. We implement our T-MLMC algorithm under three distinct uncertainty sets. In our experiment, the uncertainty level for each model are set to be 0.4, the step size are set to be $\beta=0.01$, and $N_{\max}=32$. 

\vspace{-0.4cm}
\begin{figure}[!htb]\label{fig:1}
  \centering
  \subfigure[]{
  \includegraphics[width=0.32\linewidth]{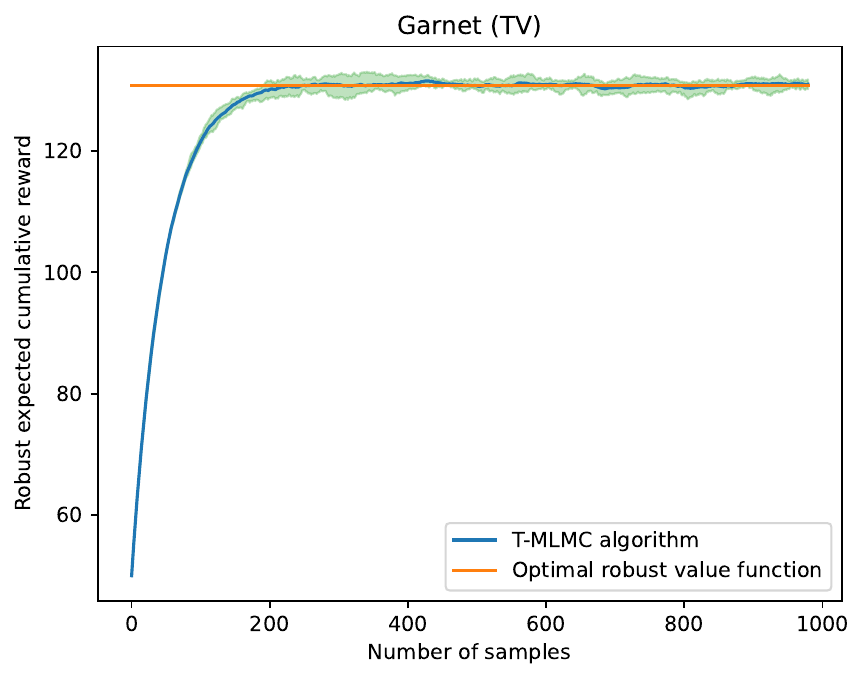}}
  \subfigure[]{
  \includegraphics[width=0.32\linewidth]{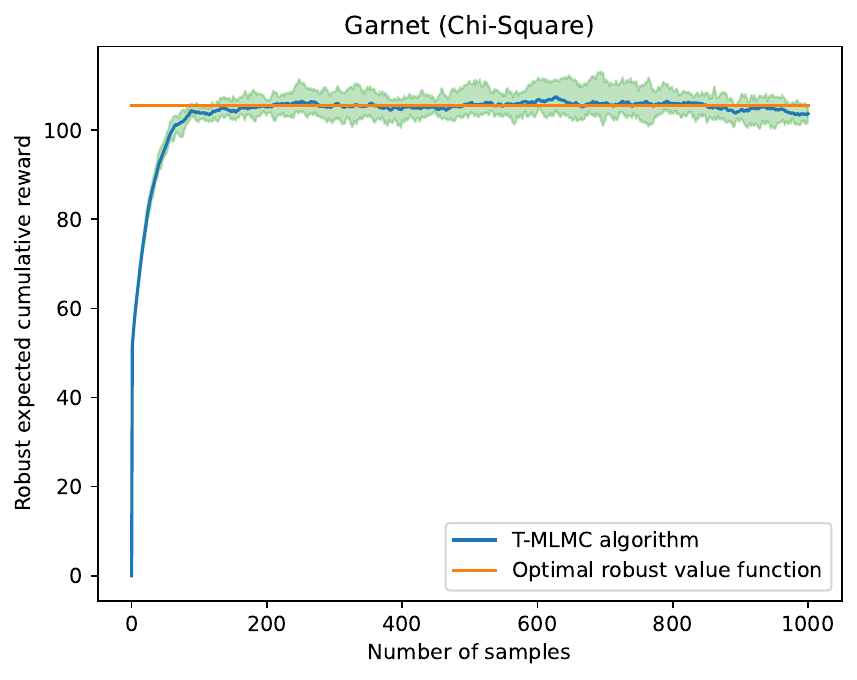}}
  \subfigure[]{
  \includegraphics[width=0.32\linewidth]{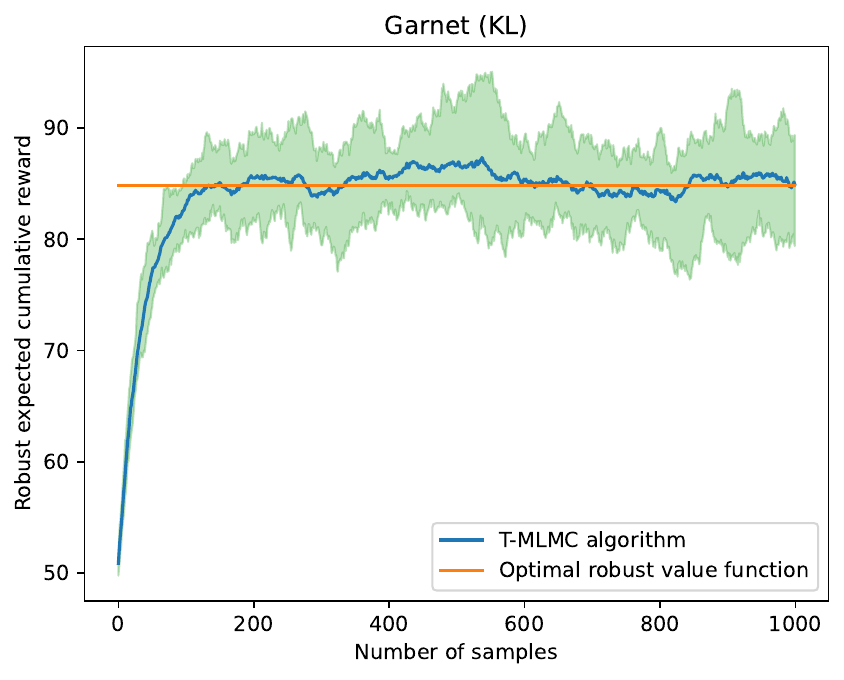}}
  \vspace{-0.4cm}
  \caption{Garnet $\mathcal{G}(20,15)$ (a)TV  (b) $\chi^2$  (c) KL uncertainty set}
\end{figure}
\vspace{-0.3cm}

We run algorithm under each uncertainty set for 20 times, and at each time step, we evaluate the worst-case performance of the greedy policy derived from the algorithm. We plot the average robust value function across the 20 runs, along with the 5th and 95th percentiles of the 20 runs as an envelope of variability. To establish a baseline, we compute the optimal robust value functions using robust dynamic programming.

\vspace{-0.4cm}
\begin{figure}[!htb]\label{fig:2}
  \centering
  \subfigure[]{
  \includegraphics[width=0.35\linewidth]{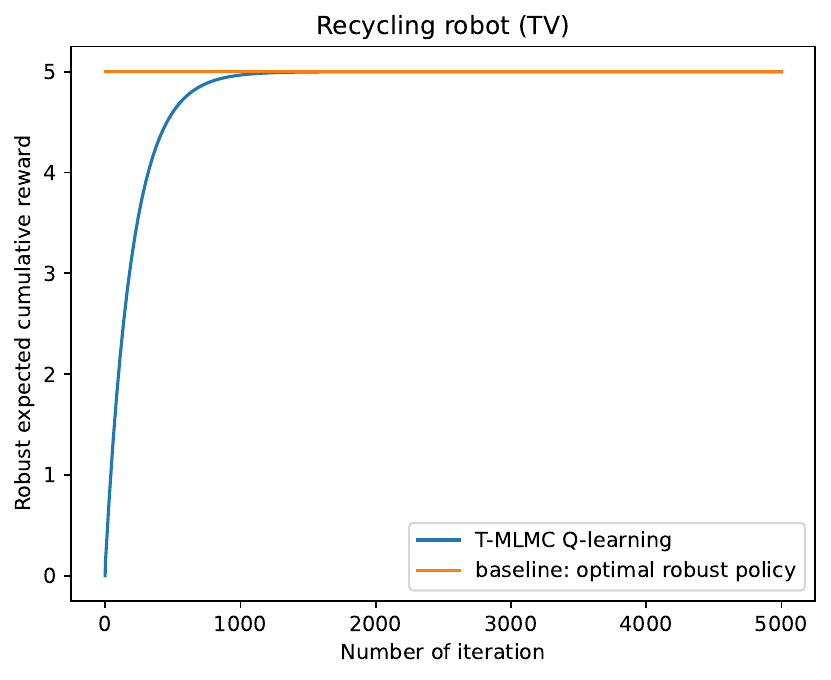}}
  \subfigure[]{
  \includegraphics[width=0.35\linewidth]{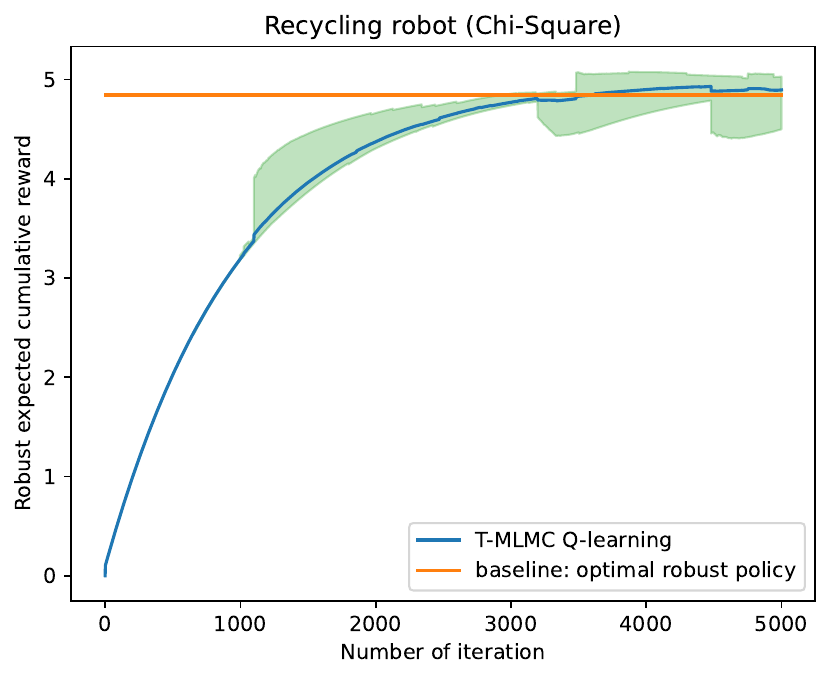}}
  \vspace{-0.3cm}
  \caption{Recycling Robot (a)TV  (b) $\chi^2$ uncertainty set }
\end{figure}
\vspace{-0.3cm}

We further provide an experiment on a real-life problem: recycling robot problem (Example 3.3 \citep{sutton2018reinforcement,wang2023model}. A mobile robot running on a rechargeable battery aims to collect empty soda cans. It has 2 battery levels: low and high. The robot can either 1) search for empty cans; 2) remain stationary and wait for someone to bring it a can; 3) go back to its home base to recharge. Under low (high) battery level, the robot finds an empty can with probabilities $\alpha(\beta)$, and remains at the same battery level. If the robot goes out to search but finds nothing, it will run out of its battery and can only be carried back by human. We introduce model uncertainty to the probabilities $\alpha,\beta$ of finding an empty can if the robot chooses the action 'search'. We implement our algorithm under this problem.
 In our experiment, the uncertainty level for each model is set to be 0.2, the recycling system are set to be $\alpha=0.5,\beta=0.5$, the learning rate is set to be $0.01$, and $N_{\max}=32$. We run algorithm under each uncertainty set for 20 times, and at each time step, we evaluate the worst-case performance of the greedy policy derived from the algorithm. We plot the average robust value function across the 20 runs, along with the 5th and 95th percentiles of the 20 runs as an envelope of variability. To establish a baseline, we compute the optimal robust value functions using robust dynamic programming.
\begin{figure}[!htb]
  \centering
  \subfigure[]{
  \includegraphics[width=0.3\linewidth]{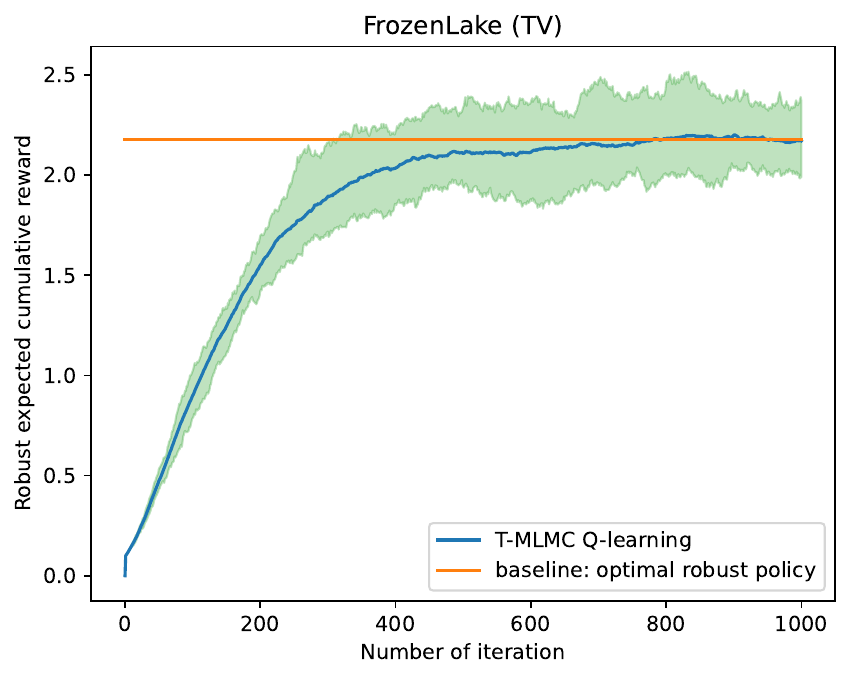}}
  \subfigure[]{
  \includegraphics[width=0.3\linewidth]{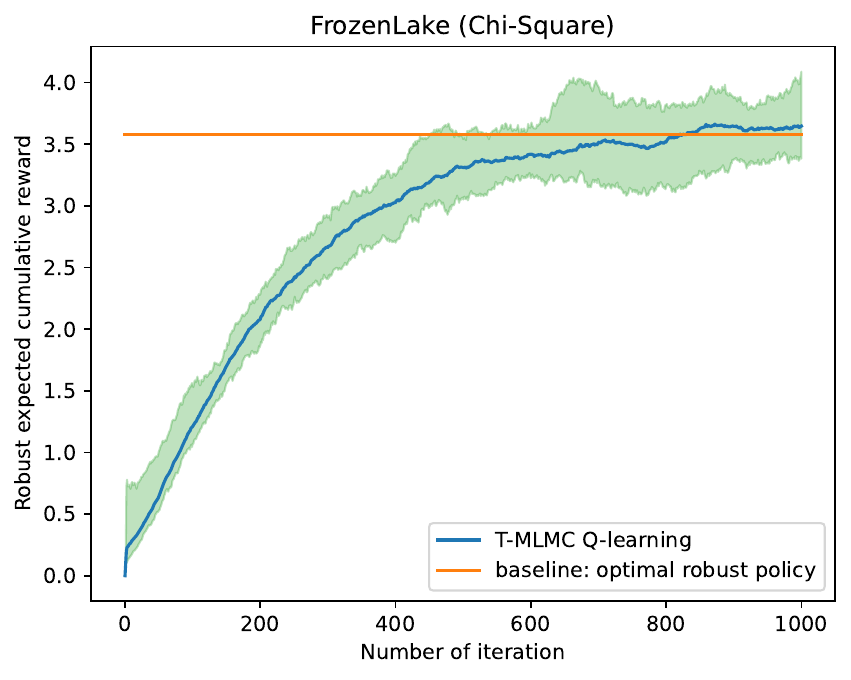}}
  \subfigure[]{
  \includegraphics[width=0.3\linewidth]{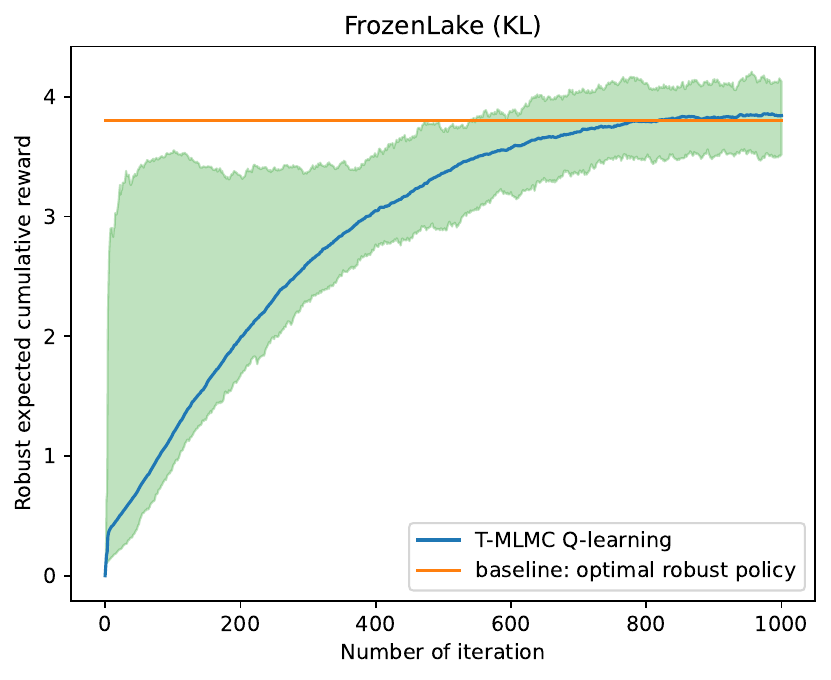}}
  \vspace{-0.4cm}
  \caption{FrozenLake   (a)TV  (b) $\chi^2$  (c) KL uncertainty set}
\end{figure}

We further explore the theoretical FrozenLake environment. This simulation involves navigating from the starting point at [0,0] to the goal at [3,3] on a 4x4 grid of icy patches and holes. Players choose to move up, down, left, or right, but due to the ice's slipperiness, movement may not always follow the intended direction. We incorporate model uncertainty into movement probabilities to account for the ice's unpredictable nature, demanding strategic planning and robust algorithm implementation to safely reach the goal. In our experiment, the uncertainty level for each model is set to be 0.2, the learning rate is set to be $0.01$, and $N_{\max}=32$. We run algorithm under each uncertainty set for 20 times, and at each time step, we evaluate the worst-case performance of the greedy policy derived from the algorithm. We plot the average robust value function across the 20 runs, along with the 5th and 95th percentiles of the 20 runs as an envelope of variability. To establish a baseline, we compute the optimal robust value functions using robust dynamic programming.


\begin{figure}[!htb]
  \centering
  \subfigure[]{
  \includegraphics[width=0.3\linewidth]{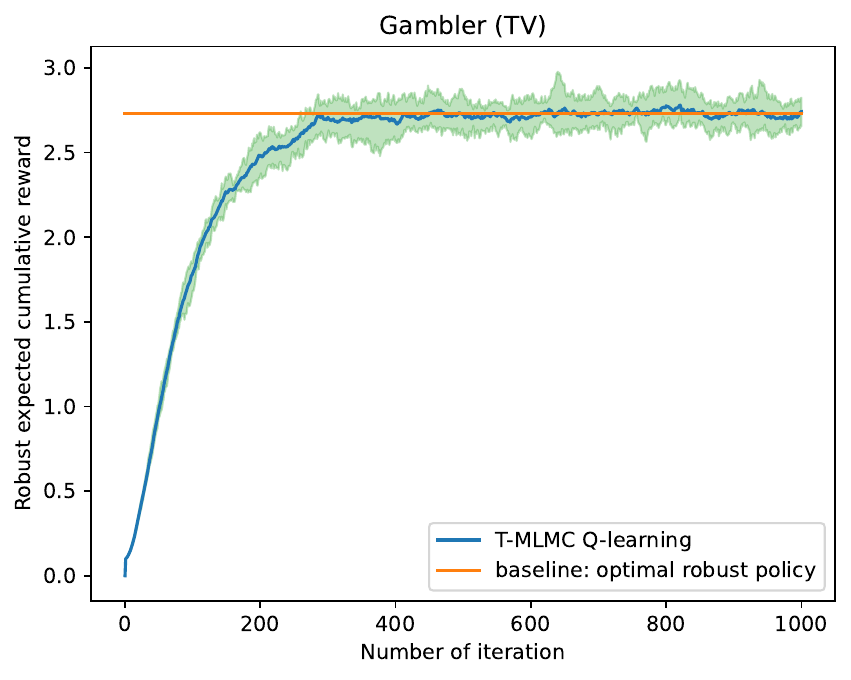}}
  \subfigure[]{
  \includegraphics[width=0.3\linewidth]{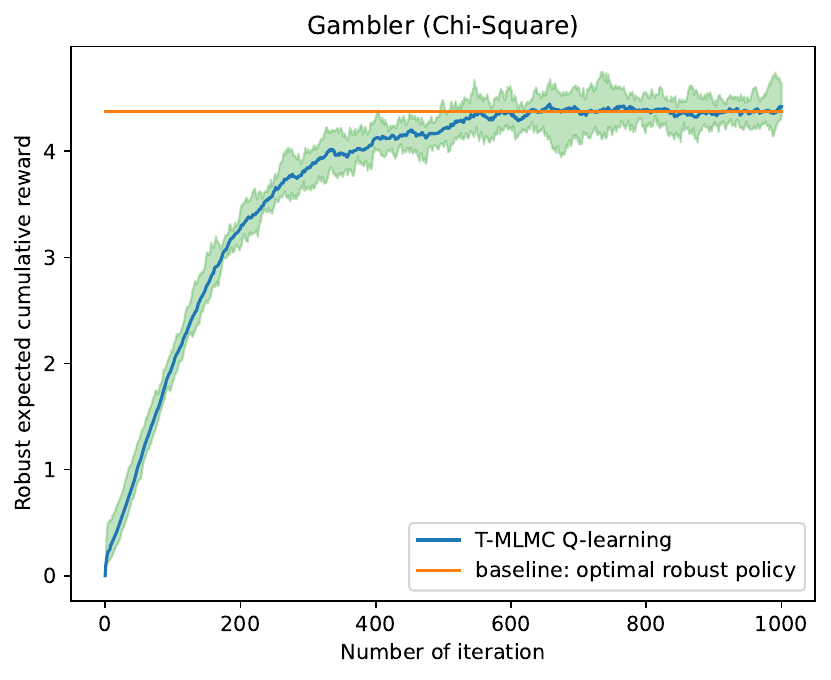}}
  \subfigure[]{
  \includegraphics[width=0.3\linewidth]{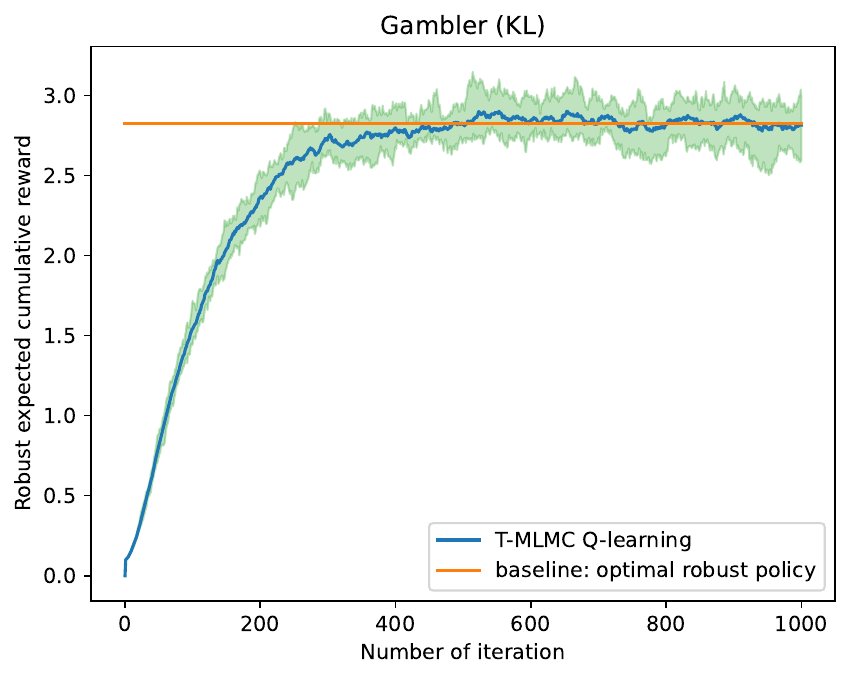}}
  \vspace{-0.4cm}
  \caption{Gambler (a)TV  (b) $\chi^2$  (c) KL uncertainty set}
\end{figure}

Besides, we validate our algorithm in the Gambler's Problem, featured \citep{zhou2021finite,shi2022distributionally}. In this scenario, a gambler starts with an initial stake and bets on coin toss outcomes to reach a financial goal, such as turning $1$ into $100$. Each bet can lead to a gain or loss, dictated by probabilities $p$ and $1-p$, respectively. The gambler's challenge is to devise a strategy that maximizes the odds of reaching the target without going bankrupt, considering they can bet any amount up to the lesser of their current capital or the amount needed to reach the goal. This problem emphasizes the development of optimal betting policies and the application of value iteration techniques to achieve desired outcomes in a risk-laden environment. 
 In our experiment, the uncertainty level for each model is set to be 0.2, the parameter $p$ in system is set as $p=0.6$,  the learning rate is set to be $0.01$, and $N_{\max}=32$. We run algorithm under each uncertainty set for 20 times, and at each time step, we evaluate the worst-case performance of the greedy policy derived from the algorithm. We plot the average robust value function across the 20 runs, along with the 5th and 95th percentiles of the 20 runs as an envelope of variability. To establish a baseline, we compute the optimal robust value functions using robust dynamic programming.


As the results show, our algorithm converges to the optimal robust value functions under all uncertainty sets,  indicating the algorithm's capacity to derive the optimal robust policy effectively. The experimental findings thus corroborate our theoretical assertions, affirming the convergence of our model-free T-MLMC algorithm.

\subsection{Comparison with vanilla MLMC algorithm}
Furthermore, we compare our T-MLMC algorithm with the vanilla MLMC algorithm using the recycling robot problem as a test case. We run both algorithms with same parameters, and plot the robust value functions of the learned policy v.s. the number of samples. Our algorithm learned the optimal policy with a much fewer number of samples, demonstrating that our T-MLMC algorithm exhibits better sample complexity performance than the vanilla MLMC algorithm. 

\begin{figure}[!htb]
  \centering
  \subfigure[]{
  \includegraphics[width=0.35\linewidth]{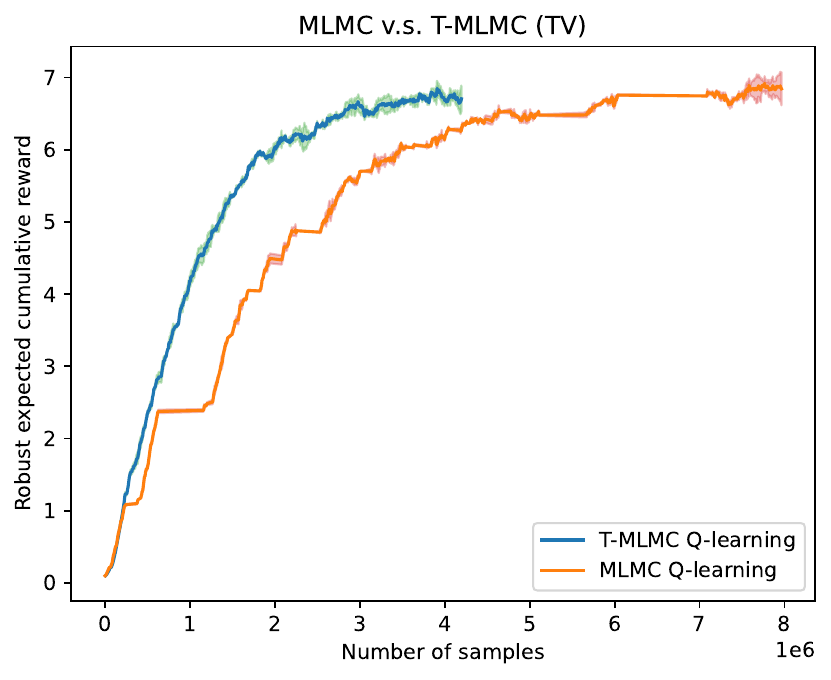}}
  \subfigure[]{
  \includegraphics[width=0.35\linewidth]{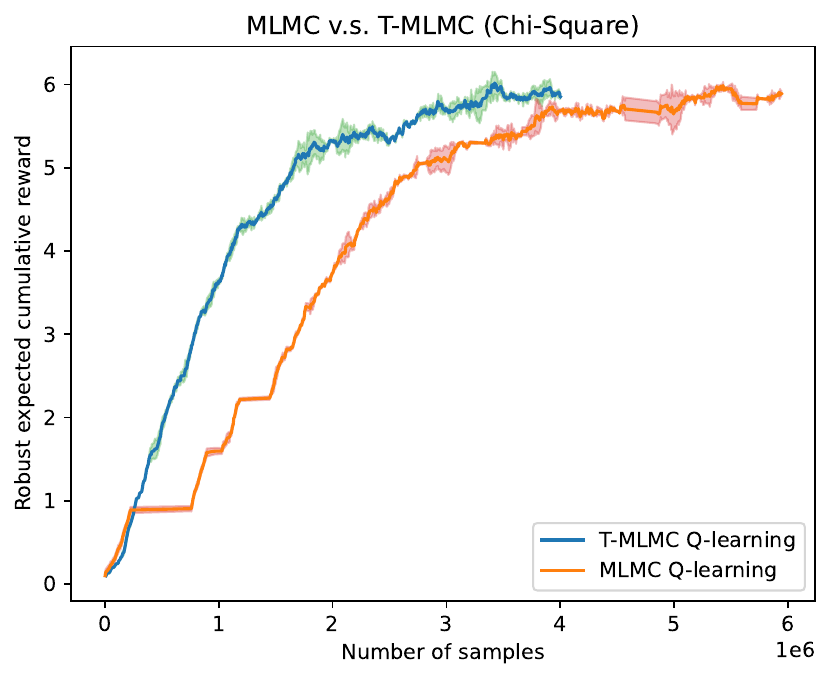}}
  \vspace{-0.4cm}
  \caption{T-MLMC v.s. MLMC (a)TV  (b) $\chi^2$ uncertainty set}
\end{figure}

\section{Notations and Lemmas}\label{sec:notation}
In this section, we present the necessary notations and lemmas which are later used in the proofs. The proofs of these lemmas can be found in \Cref{sec:proof of lemma}. 

Recall that for the reward uncertainty set, we have
\begin{align}
        &g^{\rho_{TV}(\sigma)}(\mu,\alpha)=\E_\mu[(x)_\alpha]-\frac{\sigma}{2}\brac{\alpha-\min x};
        \nonumber\\& 
        g^{\rho_{\chi^2}(\sigma)}(\mu,\alpha)=\E_\mu[(x)_\alpha]-\sqrt{\text{Var}_\mu [(x)_\alpha]};
        \nonumber\\& 
        g^{\rho_{KL}(\sigma)}(\mu,\alpha)=-\alpha \log\brac{\mathbb E_{p} \Fbrac{\exp\brac{-\frac{x}{\alpha}} } }-\alpha \sigma .\nonumber
    \end{align}
Specifically, the definition is as follows 
\eqenv{
&g^{\rho_{TV}(\sigma)}(\mu_{s,a}, \alpha,r_{s,a})=\mathbb E_{\mu_{s,a}}\Fbrac{(r_{s,a})_\alpha}-\frac{{\sigma}}{2}\brac{\alpha-\min r_{s,a}} ;\nonumber
        \\&g^{\rho_{\chi^2}(\sigma)}(\mu_{s,a}, \alpha,r_{s,a})=\mathbb E_{\mu_{s,a}}\Fbrac{(r_{s,a})_\alpha}-\sqrt{\sigma \textbf{Var}_{\mu_{s,a}}\Fbrac{ (r_{s,a})_\alpha}  } ;
        \nonumber\\&
        g^{\rho_{KL}(\sigma)}(\mu_{s,a}, \alpha,r_{s,a})=-\alpha \log\brac{\mathbb E_{\mu_{s,a}} \Fbrac{\exp\brac{-\frac{r_{s,a}}{\alpha}} } }-\alpha \sigma .
}

For the transition  kernel uncertainty set, the detailed definition is as follows
\eqenv{
&f^{\rho_{TV}(\sigma)}(p_{s,a}, \alpha,V)=\mathbb E_{p_{s,a}}\Fbrac{(V(s'_{s,a}))_\alpha}-\frac{{\sigma}}{2}\brac{\alpha-\min_{s'_{s,a}} V(s'_{s,a})} ;\nonumber
        \\&f^{\rho_{\chi^2}(\sigma)}(p_{s,a}, \alpha,V)=\mathbb E_{p_{s,a}}\Fbrac{(V(s'_{s,a}))_\alpha}-\sqrt{\sigma \textbf{Var}_{p_{s,a}}\Fbrac{ (V(s'_{s,a}))_\alpha}  } ;
        \nonumber\\&
        f^{\rho_{KL}(\sigma)}(p_{s,a}, \alpha,V)=-\alpha \log\brac{\mathbb E_{p_{s,a}} \Fbrac{\exp\brac{-\frac{V(s'_{s,a})}{\alpha}} } }-\alpha \sigma .
}

We present the analysis of propositions and theorems proof of the T-MLMC algorithm. To simplify the proof process, we just provide the analysis of the transition kernel uncertainty set, which is easy to extend to the reward uncertainty set.

Firstly, to define the surrogate $Q$-table $ \widehat Q^{*\rho(\sigma)}$, we define the expected biased estimation of dual value as follows:
\begin{definition}[Biased estimation]\label{def:3.1}
    Draw $n$ samples from nominal distribution $s'_{s,a,i}\sim p_{s,a}, i=0,1,.., n-1$ and get the empirical distributio $\widehat p_{s,a,n} $. We define (resp. $\mu_{s,a},\hat \mu_{s,a,n}$)
    \begin{align}
        f^{*\rho(\sigma)}(\hat p_{s,a,n},V ):=\sup_{\alpha\geq 0} \varbrac{f^{\rho(\sigma)}(\hat p_{s,a,n}, \alpha, V) }.\nonumber
    \end{align} 
\end{definition}
The estimation of the robust Bellman operator is biased and the bias depends on empirical distribution sample sizes $n$, which referred to as 
\begin{align}
    \lbrac{\E\Fbrac{f^{*\rho(\sigma)}(\hat p_{s,a,n},V)}- f^{*\rho(\sigma)}\brac{ p_{s,a}, V}}.\nonumber
\end{align}

We first show that when including the threshold $N_{\max}$ in our algorithm, the bias of the robust Bellman operator is equal to the bias when applying the model-based algorithm with sample size $2^{N_{\max}+1}$. 
Here, we describe the condition by the following proposition. 
\begin{proposition}[Threshold MLMC]\label{prop:mlmc}
We recall that $\widehat{\mathcal{T}}^{\rho(\sigma)}_{N_{\max}}(Q)(s,a)= \widehat r^{\rho(\sigma)}(s,a)+\gamma \widehat{v}^{\rho(\sigma)}(Q(s,a))$. The robust Bellman estimator $\widehat{v}^{\rho(\sigma)}(Q(s,a))$ (resp. $\widehat r^{\rho(\sigma)}(s,a) $) satisfies that 
\begin{align}
\E\Fbrac{\widehat{r}^{\rho(\sigma)}(s,a)}&=\mathbb E \Fbrac{r_{s,a,0}+\frac{\delta^{\rho(\sigma),r}_{s,a,N_1} }{P_{N_1}} \nonumber}=\E\Fbrac{g^{*\rho(\sigma)}(\hat \mu_{s,a,2^{N_{\max}+1}},r_{s,a})},\nonumber\\
    \E\Fbrac{\widehat{v}^{\rho(\sigma)}(Q(s,a)) }&=\mathbb E \Fbrac{V(s'_{s,a,0})+\frac{\delta^{\rho(\sigma)}_{s,a,N_2}(Q) }{P_{N_2}} \nonumber}=\E\Fbrac{f^{*\rho(\sigma)}(\hat p_{s,a,2^{N_{\max}+1}},V)}.
\end{align}
Thus, the estimated robust Bellman operator $\widehat{\mathcal{T}}^{\rho(\sigma)}_{N_{\max}}$ satisfies 
\eqenv{
\E\Fbrac{\widehat{\mathcal{T}}^{\rho(\sigma)}_{N_{\max}}(Q)(s,a) }&= \E\Fbrac{ \widehat{r}^{\rho(\sigma)}(s,a)+\gamma 
\widehat{v}^{\rho(\sigma)}(Q(s,a))  }\\&=\E\Fbrac{g^{*\rho(\sigma)}(\hat \mu_{s,a,2^{N_{\max}+1}},r_{s,a})+ \gamma  f^{*\rho(\sigma)}(\hat p_{s,a,2^{N_{\max}+1}},V)  }.
}
\end{proposition}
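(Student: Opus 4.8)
The plan is to compute the expectation of the threshold-MLMC estimator directly by conditioning on the level number and exploiting the telescoping structure of the MLMC increments. I will focus on the transition-kernel term $\widehat{v}^{\rho(\sigma)}(Q(s,a))$; the reward term is entirely analogous with $g$ in place of $f$ and $\mu$ in place of $p$. First I would write the total expectation as a sum over $N_2$, using that $N_2\sim\text{GEO}(\psi)$ so that $\prob(N_2=n)=P_n=\psi(1-\psi)^n$. Since $\delta^{\rho(\sigma)}_{s,a,N_2}(Q)/P_{N_2}$ contributes $\delta^{\rho(\sigma)}_{s,a,n}(Q)/P_n$ with probability $P_n$ when $N_2=n$, the $1/P_{N_2}$ factors cancel against the geometric weights, leaving $\E[\widehat{v}^{\rho(\sigma)}(Q(s,a))]=\E[V(s'_{s,a,0})]+\sum_{n=0}^{N_{\max}}\E\!\left[\delta^{\rho(\sigma)}_{s,a,n}(Q)\right]$, where the sum truncates at $N_{\max}$ precisely because $\delta=0$ whenever $N_2>N_{\max}$.

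The key step is then to evaluate $\E[\delta^{\rho(\sigma)}_{s,a,n}(Q)]$ for each fixed level $n\le N_{\max}$. Recalling the definition in \eqref{eq:delta},
\begin{align}
\E\!\left[\delta^{\rho(\sigma)}_{s,a,n}(Q)\right]
&=\E\!\left[f^{*\rho(\sigma)}(\widehat p_{s,a,2^{n+1}},V)\right]
-\tfrac12\E\!\left[f^{*\rho(\sigma)}(\widehat p^E_{2^{n}},V)\right]\nonumber\\
&\quad-\tfrac12\E\!\left[f^{*\rho(\sigma)}(\widehat p^O_{2^{n}},V)\right],\nonumber
\end{align}
using \Cref{def:3.1} to write each $\sup_{\alpha\ge0}f^{\rho(\sigma)}(\cdot,\alpha,V)$ as $f^{*\rho(\sigma)}(\cdot,V)$. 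The crucial observation is that the even-indexed and odd-indexed subsamples are each i.i.d.\ draws of size $2^{n}$ from the same nominal kernel $p_{s,a}$, so $\E[f^{*\rho(\sigma)}(\widehat p^E_{2^{n}},V)]=\E[f^{*\rho(\sigma)}(\widehat p^O_{2^{n}},V)]=\E[f^{*\rho(\sigma)}(\widehat p_{s,a,2^{n}},V)]$, where the last quantity denotes the expected biased dual value at sample size $2^{n}$. Hence $\E[\delta^{\rho(\sigma)}_{s,a,n}(Q)]=\E[f^{*\rho(\sigma)}(\widehat p_{s,a,2^{n+1}},V)]-\E[f^{*\rho(\sigma)}(\widehat p_{s,a,2^{n}},V)]$, which is exactly the increment between consecutive sample-doubling levels.

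The sum over $n$ now telescopes. Adding $\E[V(s'_{s,a,0})]=\E[f^{*\rho(\sigma)}(\widehat p_{s,a,2^{0}},V)]$ — here I would note that the level-$0$ base term corresponds to the dual value at a single sample, matching the bottom of the telescope — the partial sums collapse to the top term $\E[f^{*\rho(\sigma)}(\widehat p_{s,a,2^{N_{\max}+1}},V)]$, giving the claimed identity. Combining the transition-kernel and reward terms through $\widehat{\mathcal{T}}^{\rho(\sigma)}_{N_{\max}}(Q)(s,a)=\widehat r^{\rho(\sigma)}(s,a)+\gamma\widehat v^{\rho(\sigma)}(Q(s,a))$ and linearity of expectation yields the final equation.

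The main obstacle I anticipate is bookkeeping the base term correctly so that the telescope closes cleanly: one must verify that $\E[V(s'_{s,a,0})]$ aligns with $f^{*\rho(\sigma)}$ at the single-sample level $2^0=1$, and confirm that the factor-$\tfrac12$ splitting of the even/odd halves exactly cancels the doubling in the geometric normalization. A secondary subtlety is justifying the interchange of expectation with the supremum over $\alpha$ implicit in \Cref{def:3.1} — but since $f^{*\rho(\sigma)}$ is defined as the already-optimized dual value $\sup_{\alpha\ge0}f^{\rho(\sigma)}(\cdot,\alpha,V)$ evaluated at a fixed empirical distribution, no interchange is actually needed: the sup is taken inside before the expectation, so the argument is purely the linearity of expectation applied to the (random) optimized values, together with the identical distribution of the odd and even subsamples.
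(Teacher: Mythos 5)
Your proposal is correct and follows essentially the same route as the paper's own proof: condition on the geometric level so the $1/P_{N_2}$ factors cancel the weights $P_n=\psi(1-\psi)^n$ (with the sum truncating at $N_{\max}$ since $\delta^{\rho(\sigma)}_{s,a,N_2}(Q)=0$ beyond the threshold), use the fact that the even- and odd-indexed halves are i.i.d.\ size-$2^n$ samples from $p_{s,a}$ to reduce $\E\Fbrac{\delta^{\rho(\sigma)}_{s,a,n}(Q)}$ to the increment $\E\Fbrac{f^{*\rho(\sigma)}(\hat p_{s,a,2^{n+1}},V)}-\E\Fbrac{f^{*\rho(\sigma)}(\hat p_{s,a,2^{n}},V)}$, and telescope down to the base term $\E\Fbrac{V(s'_{s,a,0})}=\E\Fbrac{f^{*\rho(\sigma)}(\hat p_{s,a,2^{0}},V)}$. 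The single-sample base-term identification you flag as the main bookkeeping obstacle is treated at exactly the same level of detail in the paper (asserted in its step $(ii)$, and indeed verifiable directly for all three divergences since the optimized dual value of a point mass is $V(s'_{s,a,0})$), so your argument is on par with the published one.
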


The \cref{prop:mlmc} shows the fact that the estimation biases are equal when drawing $2^{N_{\max}+1}$ samples to estimate the dual value directly and when setting the $N_{\max}$-threshold MLMC algorithm to estimate the dual value. 

Based on \cref{prop:mlmc}, for $\rho$ distance and uncertainty level $\sigma$, we define 
\begin{align}
    \E\Fbrac{ {\widehat {\mathcal{T}}}_{N_{\max}}^{\rho(\sigma)} (Q)(s,a) }&=\boldsymbol{\bar {\mathcal{T}}}_{N_{\max}}^{\rho(\sigma)} (Q)(s,a),
     \label{eq:fixp}
\end{align} where $\boldsymbol{\bar {\mathcal{T}}}_{N_{\max}}^{\rho(\sigma)} $ is the surrogate robust operator being the expectation of our T-MLMC estimator. 
\begin{proposition}\label{prop:contract}
    Given statistical distance $\rho$ and uncertainty level $\sigma$, estimated robust Bellman operator $ \widehat{\mathcal{T}}^{\rho(\sigma)}$is $\gamma$-\textit{contraction} w.r.t. the infinity norm:
    \eqenv{
   {\mynorm{\hatT( Q)-\hatT( Q') }_\infty} \leq \gamma \mynorm{ Q- Q' }_\infty.
    }
\end{proposition}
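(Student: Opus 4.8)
The plan is to reduce the contraction property to the non-expansiveness of the worst-case expectation functional. By \Cref{prop:mlmc}, the surrogate operator factors as
\[
\hatT(Q)(s,a) = \E\Fbrac{g^{*\rho(\sigma)}(\widehat\mu_{s,a,2^{N_{\max}+1}}, r_{s,a})} + \gamma\, \E\Fbrac{f^{*\rho(\sigma)}(\widehat p_{s,a,2^{N_{\max}+1}}, V)},
\]
where $V(s) = \max_{a'} Q(s,a')$. The first (reward) term does not depend on $Q$, so it cancels in the difference $\hatT(Q)(s,a) - \hatT(Q')(s,a)$, leaving only the discounted transition-kernel term. First I would therefore bound $\bigl|\hatT(Q)(s,a) - \hatT(Q')(s,a)\bigr| \leq \gamma\, \E\bigl[|f^{*\rho(\sigma)}(\widehat p, V) - f^{*\rho(\sigma)}(\widehat p, V')|\bigr]$, with $V'(s) = \max_{a'} Q'(s,a')$, thereby reducing the problem to controlling the functional $f^{*\rho(\sigma)}(\widehat p, \cdot)$ for a fixed empirical kernel $\widehat p$.

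The key step is to show that $V \mapsto f^{*\rho(\sigma)}(\widehat p, V)$ is $1$-Lipschitz in the sup-norm, uniformly over the random empirical kernel $\widehat p$ and over all three uncertainty models. For this I would invoke strong duality (\Cref{lm:tv,lm:chi,lm:kl}), which identifies $f^{*\rho(\sigma)}(\widehat p, V) = \sup_{\alpha \geq 0} f^{\rho(\sigma)}(\widehat p, \alpha, V)$ with the primal worst-case expectation $\inf_{q \in \mathcal{P}^\rho_{\widehat p}(\sigma)} q^\top V$ over the uncertainty set centered at $\widehat p$. In this primal form the Lipschitz bound is a one-line argument: if $q^*$ attains the infimum for $V'$, then $\inf_q q^\top V \leq (q^*)^\top V = (q^*)^\top V' + (q^*)^\top(V - V') \leq f^{*\rho(\sigma)}(\widehat p, V') + \mynorm{V - V'}_\infty$, since $q^*$ is a probability vector; swapping the roles of $V$ and $V'$ yields $|f^{*\rho(\sigma)}(\widehat p, V) - f^{*\rho(\sigma)}(\widehat p, V')| \leq \mynorm{V - V'}_\infty$. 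Passing to the primal form is precisely what lets me treat the three divergences simultaneously, rather than analyzing each dual objective separately.

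Finally I would convert the value-function gap back to a $Q$-function gap using the elementary inequality $|\max_{a'} Q(s,a') - \max_{a'} Q'(s,a')| \leq \max_{a'} |Q(s,a') - Q'(s,a')|$, which gives $\mynorm{V - V'}_\infty \leq \mynorm{Q - Q'}_\infty$. Chaining these bounds and taking the expectation over $\widehat p$ together with the supremum over $(s,a)$ produces $\mynorm{\hatT(Q) - \hatT(Q')}_\infty \leq \gamma \mynorm{Q - Q'}_\infty$. I expect the main obstacle to be the Lipschitz step: examined directly, the dual functional $\sup_{\alpha \geq 0} f^{\rho(\sigma)}(\widehat p, \alpha, V)$ is not obviously Lipschitz in $V$ because the optimizing $\alpha$ itself depends on $V$, so the crucial move is recognizing that strong duality permits passing to the primal $\inf_q q^\top V$ representation, where non-expansiveness is transparent and uniform across the three models.
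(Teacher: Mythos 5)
Your proposal is correct and follows essentially the same route as the paper's own proof: apply \Cref{prop:mlmc} to express $\hatT$ via the empirical kernel of size $2^{N_{\max}+1}$, cancel the reward term, pass by strong duality to the primal worst-case expectation $\inf_{q:\rho(q,\hat p)\leq\sigma}\E_q[V]$ where non-expansiveness in $\mynorm{V}_\infty$ is immediate, and finish with $\lbrac{\max_{a'}Q(s',a')-\max_{a'}Q'(s',a')}\leq\norminf{Q-Q'}$. The only cosmetic difference is that you certify non-expansiveness of the infimum by evaluating at the minimizer attained for $V'$, whereas the paper bounds the difference of infima by the supremum of $\E_q[V'-V]$ over the uncertainty set --- these are the same standard one-line argument.
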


We denote its unique fixed point as $\widehat Q^{*\rho(\sigma)}$, i.e.
\eqenv{
 \hatT\brac{\widehat Q^{*\rho(\sigma)}}=\widehat Q^{*\rho(\sigma)}.
}

 It hence holds that
\begin{align}\label{eqa20}
    {\mynorm{\widehat Q_T^{\rho(\sigma)}-Q^{*\rho(\sigma)}}_\infty^2}
  \leq 2 {\mynorm{\widehat Q_T^{\rho(\sigma)}-\widehat Q^{*\rho(\sigma)}}_\infty^2 }+ 2{\mynorm{\widehat Q^{*\rho(\sigma)}-Q^{*\rho(\sigma)}}_\infty^2 }.
\end{align}

\begin{lemma}\label{lm:a8}
    The optimal robust $Q$-function and estimated optimal robust $Q$-function can be bounded as follows:
    \eqenv{
    \norminf{\widehat Q^{*\rho(\sigma)}- Q^{*\rho(\sigma)}}&\leq \frac{1}{1-\gamma} \norminf{\hatT\brac{ Q^{*\rho(\sigma)}}- {\mathcal{T}}^{\rho(\sigma)}\brac{Q^{*\rho(\sigma)}}}.
}
\end{lemma}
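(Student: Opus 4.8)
The plan is to bound the gap between the fixed point $\widehat Q^{*\rho(\sigma)}$ of the surrogate operator $\hatT$ and the optimal robust $Q$-function $Q^{*\rho(\sigma)}$, which is the fixed point of the true robust Bellman operator ${\mathcal{T}}^{\rho(\sigma)}$. The key facts I would invoke are that $\hatT$ is a $\gamma$-contraction in the infinity norm (\Cref{prop:contract}) with unique fixed point $\widehat Q^{*\rho(\sigma)}$, and that ${\mathcal{T}}^{\rho(\sigma)}$ is the standard robust Bellman operator with fixed point $Q^{*\rho(\sigma)}$. The standard trick for comparing fixed points of two nearby contractions is to insert one operator applied to the \emph{other's} fixed point and exploit the contraction property.

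\begin{proof}
Since $\widehat Q^{*\rho(\sigma)}$ is the fixed point of $\hatT$, we have $\hatT(\widehat Q^{*\rho(\sigma)})=\widehat Q^{*\rho(\sigma)}$, and since $Q^{*\rho(\sigma)}$ is the fixed point of ${\mathcal{T}}^{\rho(\sigma)}$, we have ${\mathcal{T}}^{\rho(\sigma)}(Q^{*\rho(\sigma)})=Q^{*\rho(\sigma)}$. I would write
\begin{align*}
    \norminf{\widehat Q^{*\rho(\sigma)}- Q^{*\rho(\sigma)}}
    &=\norminf{\hatT(\widehat Q^{*\rho(\sigma)})- {\mathcal{T}}^{\rho(\sigma)}(Q^{*\rho(\sigma)})}\\
    &\leq \norminf{\hatT(\widehat Q^{*\rho(\sigma)})- \hatT(Q^{*\rho(\sigma)})}
    +\norminf{\hatT(Q^{*\rho(\sigma)})- {\mathcal{T}}^{\rho(\sigma)}(Q^{*\rho(\sigma)})},
\end{align*}
where the inequality is the triangle inequality after inserting and subtracting the term $\hatT(Q^{*\rho(\sigma)})$. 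By \Cref{prop:contract}, the first term is bounded by $\gamma\norminf{\widehat Q^{*\rho(\sigma)}- Q^{*\rho(\sigma)}}$. Collecting this term on the left-hand side yields
\begin{align*}
    (1-\gamma)\norminf{\widehat Q^{*\rho(\sigma)}- Q^{*\rho(\sigma)}}
    \leq \norminf{\hatT(Q^{*\rho(\sigma)})- {\mathcal{T}}^{\rho(\sigma)}(Q^{*\rho(\sigma)})},
\end{align*}
and dividing by $1-\gamma>0$ gives the claim.
\end{proof}

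There is essentially no hard obstacle here: the entire argument is the classical fixed-point perturbation bound, and the only ingredient beyond the triangle inequality is the $\gamma$-contraction property already established in \Cref{prop:contract}. The one point requiring a moment of care is the direction of the insertion: I apply the contraction $\hatT$ to \emph{both} fixed points so that the contraction estimate controls $\norminf{\widehat Q^{*\rho(\sigma)}-Q^{*\rho(\sigma)}}$ itself, while the residual term measures the discrepancy between the two operators evaluated at the single point $Q^{*\rho(\sigma)}$ — precisely the bias quantity that \Cref{thm:tv1} later bounds by $\widetilde{\mathcal{O}}(N_{\max}2^{-N_{\max}/2})$. This is what makes the lemma useful: it reduces the fixed-point error, which is the second term in the decomposition \eqref{eqa20}, to a per-step bias of the surrogate operator.
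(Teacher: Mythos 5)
Your proof is correct and follows essentially the same route as the paper's: both insert $\hatT\brac{Q^{*\rho(\sigma)}}$ via the triangle inequality after invoking the two fixed-point identities, apply the $\gamma$-contraction of $\hatT$ from \Cref{prop:contract}, and rearrange to absorb the $\gamma\norminf{\widehat Q^{*\rho(\sigma)}-Q^{*\rho(\sigma)}}$ term. Your write-up is in fact slightly cleaner, since the paper's displayed middle step contains a notational slip (writing $\boldsymbol{\bar{\mathcal{T}}}_{\max}^{\rho(\sigma)}$ for the same surrogate operator) and leaves the final rearrangement implicit, which you carry out explicitly.
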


Combined with \cref{prop:mlmc,lm:a8}, this term can be bounded specifically for different uncertainty set (TV, $\chi^2$ and KL) in the following sections. 

We then aim to bound the first term in \cref{eqa20} ${\mynorm{\widehat Q_T^{\rho(\sigma)}-\widehat Q^{*\rho(\sigma)}}_\infty } $. The error between surrogate $Q$-table and the optimal robust $Q$-table, $ {\mynorm{ Q^{*\rho(\sigma)}-\widehat Q^{*\rho(\sigma)}}_\infty^2 }$ can be bounded following lemma.
\begin{lemma}[\citep{chen2022finite} Theorem 2.1 \& Corollary 2.1.2]\label{lm:chen}
    For the following stochastic iteration,
    \eqenv{
    \theta_{k+1}= \theta_k+\beta_k \brac{\mathcal{H}(\theta_k)-\theta_k+ w_k},
    } where $\theta\in \mathbb R^d$, $\beta_k$ is the stepsize. The fixed point $\theta^*$ satisfies that
    $\theta^*=\mathcal{H}(\theta^*)$. 
    Define $\mathcal{F}_k=\varbrac{\theta_0, w_0,...,\theta_{k-1},w_{k-1},\theta_{k} }$. 
    When 
    \eqenv{
    \norminf{\mathcal{H}(\theta)-\mathcal{H}(\theta')}\leq \gamma \norminf{\theta-\theta'},
    }
    and
    \eqenv{
    (a). \E\Fbrac{w_k|\mathcal{F}_k }=0; \qquad (b). \E\Fbrac{\norminf{w_k}^2|\mathcal{F}_k }\leq A+ B \norminf{\theta_k}^2,
    }
when $ \beta_t\leq \frac{c_2}{c_3}$, we have 
\eqenv{
\E&\Fbrac{\norminf{\theta_k-\theta^*}^2}
\\&\leq c_1 \norminf{\theta_0-\theta^*}^2 \prod_{j=0}^{k-1} (1-c_2 \beta_j) +c_4\brac{A+2B\norminf{\theta^*}^2} \sum_{i=0}^{k-1} \beta_i^2 \prod_{j=i+1}^{k-1} (1-c_2 \beta_j),
}where $c_1=\frac{3}{2}$, $c_2= \frac{1-\gamma}{2}, c_3 =\frac{32 e (B+2)\log (d)}{1-\gamma}, c_4=\frac{16e\log(d)}{1-\gamma}.  $
\end{lemma}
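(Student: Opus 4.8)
The plan is to read the recursion $\theta_{k+1}=\theta_k+\beta_k(\mathcal H(\theta_k)-\theta_k+w_k)$ as a contractive stochastic approximation scheme and to run a Lyapunov analysis. The obstacle to a naive analysis is that the contraction hypothesis is stated in the $\infty$-norm, which is not differentiable, so the quadratic Lyapunov function $\norminf{\theta_k-\theta^*}^2$ cannot be expanded through a smoothness inequality. I would therefore replace it with a smooth surrogate, the generalized Moreau envelope, following the smooth-convex-envelope technique for contractive stochastic approximation.

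Concretely, fix a smooth norm $\mynorm{\cdot}_s$ --- the $\ell_p$ norm with $p=2\log d$ --- which obeys $\norminf{x}\le \mynorm{x}_s\le \sqrt e\,\norminf{x}$ for all $x\in\mathbb R^d$; this is precisely the source of the $e$ and $\log d$ factors appearing in $c_3$ and $c_4$. Define $M(x)=\min_{u}\{\tfrac12\mynorm{u}_s^2+\tfrac1{2\mu}\mynorm{x-u}_s^2\}$ for a parameter $\mu>0$. I would record three properties: (i) $M$ is convex and $\tfrac1\mu$-smooth with respect to $\mynorm{\cdot}_s$, so $M(y)\le M(x)+\vbrac{\nabla M(x),y-x}+\tfrac1{2\mu}\mynorm{y-x}_s^2$; (ii) $M$ sandwiches the squared $\infty$-norm, $\ell\,\norminf{x}^2\le M(x)\le u\,\norminf{x}^2$ with $\ell,u$ of order $1$ and $e$; and (iii) writing $F(\theta)=\mathcal H(\theta)-\theta$ so that $F(\theta^*)=0$, the contraction $\norminf{\mathcal H(\theta)-\mathcal H(\theta^*)}\le\gamma\norminf{\theta-\theta^*}$ yields, after tuning $\mu$, the negative-drift bound $\vbrac{\nabla M(\theta-\theta^*),F(\theta)}\le -(1-\gamma)\,M(\theta-\theta^*)$.

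With these in hand I would instantiate the smoothness inequality at $x=\theta_k-\theta^*$ and $y=\theta_{k+1}-\theta^*=x+\beta_k(F(\theta_k)+w_k)$, then take $\E[\cdot\mid\mathcal F_k]$. By hypothesis (a) the cross term $\beta_k\vbrac{\nabla M(\theta_k-\theta^*),w_k}$ vanishes, and property (iii) supplies the drift $-c_2\beta_k M(\theta_k-\theta^*)$ with $c_2=(1-\gamma)/2$. For the quadratic remainder I would use $\mynorm{F(\theta_k)+w_k}_s^2\le 2\mynorm{F(\theta_k)}_s^2+2\mynorm{w_k}_s^2$, bound $\norminf{F(\theta_k)}\le(1+\gamma)\norminf{\theta_k-\theta^*}$, invoke hypothesis (b) together with $\norminf{\theta_k}^2\le 2\norminf{\theta_k-\theta^*}^2+2\norminf{\theta^*}^2$, and re-express everything through $M$ via the sandwich bound. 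This produces a one-step recursion $\E[M(\theta_{k+1}-\theta^*)\mid\mathcal F_k]\le(1-c_2\beta_k+O(\beta_k^2))M(\theta_k-\theta^*)+c_4\beta_k^2(A+2B\norminf{\theta^*}^2)$, where the stepsize restriction $\beta_k\le c_2/c_3$ is exactly what is needed to absorb the $O(\beta_k^2)M$ term into the linear drift so that the effective factor is $(1-c_2\beta_k)$.

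Finally I would unroll this recursion from index $k$ down to $0$ and convert back from $M$ to $\norminf{\cdot}^2$ with the sandwich bound, which turns the ratio $u/\ell$ into the leading constant $c_1=3/2$ and yields exactly the stated product--sum form. The main obstacle, and the step that pins down all four constants, is the construction of the Moreau-envelope Lyapunov function and the proof of the negative drift $\vbrac{\nabla M(x),F(x)}\le-c_2 M(x)$ purely from the $\infty$-norm contraction; once that inequality is secured, the remaining work is the routine taking of conditional expectations and unrolling of a scalar recursion.
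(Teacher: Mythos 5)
Your high-level plan is the right one: the paper does not prove this lemma but imports it verbatim from \citep{chen2022finite}, and the proof there is exactly the Moreau-envelope Lyapunov argument you sketch (smooth surrogate for $\norminf{\cdot}^2$, cross term killed by hypothesis (a), quadratic remainder via hypothesis (b) and $\norminf{\theta_k}^2\le 2\norminf{\theta_k-\theta^*}^2+2\norminf{\theta^*}^2$, stepsize condition absorbing the $O(\beta_k^2)$ term, unroll). However, your central construction is wrong in a way that makes the key drift property (iii) false. You define $M(x)=\min_u\varbrac{\tfrac12\mynorm{u}_s^2+\tfrac1{2\mu}\mynorm{x-u}_s^2}$ with the \emph{smooth} norm in both terms, and this infimal convolution collapses exactly: writing $a=\mynorm{u}_s$, $b=\mynorm{x-u}_s$, so $a+b\ge\mynorm{x}_s$, minimizing $\tfrac{a^2}{2}+\tfrac{b^2}{2\mu}$ gives
\[
M(x)=\frac{1}{2(1+\mu)}\mynorm{x}_s^2,\qquad u=\frac{x}{1+\mu}.
\]
So your $M$ is a pure rescaling of $\tfrac12\mynorm{x}_s^2$; tuning $\mu$ changes only the scale, never any ratio, and the claimed drift $\vbrac{\nabla M(x),F(x+\theta^*)}\le -(1-\gamma)M(x)$ cannot be recovered. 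The generic dual-norm estimate only yields $\vbrac{\nabla M(x),\mathcal H(x+\theta^*)-\theta^*-x}\le 2\brac{\sqrt e\,\gamma-1}M(x)$, useless once $\gamma\ge e^{-1/2}$, and the failure is real, not an artifact of the bound: with $\theta^*=0$, $x=(1,t,\dots,t)$ for $t=d^{-1/(p-1)}$, and a $\gamma$-contraction with $\mathcal H(x)=\gamma(1,\dots,1)$ (extendable coordinatewise by McShane, since $\norminf{\gamma\mathbf 1}=\gamma\norminf{x}$), a direct computation gives $\vbrac{\nabla M(x),\mathcal H(x)-x}\approx(2\gamma-1)\cdot 2M(x)>0$ for $\gamma>1/2$ --- precisely the regime $\gamma\to1$ the lemma is used for. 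Your own property (ii), a sandwich with ratio $u/\ell$ of order $e$, is a second red flag: it would force $c_1$ to be of order $e$, not $3/2$.

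The repair --- and what \citep{chen2022finite} actually does --- is to smooth the \emph{contraction} norm with the smooth norm as the proximal term: $M(x)=\min_u\varbrac{\tfrac12\norminf{u}^2+\tfrac1{2\mu}\mynorm{x-u}_s^2}$. This $M$ is convex, positively homogeneous of degree $2$, and $\tfrac{L}{\mu}$-smooth w.r.t.\ $\mynorm{\cdot}_s$ (with $L\approx 2\log d$ for $p=2\log d$), but now the sandwich is $\tfrac{1}{2(1+\mu)}\norminf{x}^2\le M(x)\le\tfrac12\norminf{x}^2$ (lower bound via $\mynorm{\cdot}_s\ge\norminf{\cdot}$ and the same $a,b$ argument; upper bound via $u=x$), so the ratio is $1+\mu\to 1$ as $\mu\to 0$. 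Homogeneity gives $\vbrac{\nabla M(x),x}=2M(x)$ and $\vbrac{\nabla M(x),y}\le 2\sqrt{M(x)M(y)}$, hence $\vbrac{\nabla M(x),\mathcal H(x+\theta^*)-\theta^*-x}\le 2\brac{\gamma\sqrt{1+\mu}-1}M(x)$; choosing $\mu\asymp 1-\gamma$ so that $\gamma\sqrt{1+\mu}\le\tfrac{1+\gamma}{2}$ secures drift constant $1-\gamma$, which after halving to absorb the $\beta_k^2$ remainder (this is exactly where $\beta_k\le c_2/c_3$ enters) gives $c_2=\tfrac{1-\gamma}{2}$; the near-unit sandwich ratio gives $c_1=\tfrac32$, and the $\tfrac{L}{\mu}$-smoothness with $\mu\asymp 1-\gamma$ is the source of the $\tfrac{\log d}{1-\gamma}$ factors in $c_3$ and $c_4$. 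With this single replacement of the inner function, the remainder of your outline goes through as in the cited source.
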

We consider the stochastic iteration that
\eqenv{
\widehat Q^{\rho(\sigma)}_{t+1}=\widehat Q^{\rho(\sigma)}_{t}+ \beta_t\brac{\boldsymbol{\bar{\mathcal{T}}}^{\rho(\sigma)}_{N_{\max}}\brac{Q^{\rho(\sigma)}_{t}}- \widehat Q^{\rho(\sigma)}_{t}+ W_t  },
}
where we define the filtration $\mathcal{F}_t=\varbrac{Q^{\rho(\sigma)}_0, W_0,...,Q^{\rho(\sigma)}_{t-1},W_{t-1},Q^{\rho(\sigma)}_{t} } $.
There are three requirements when applying the \cref{lm:chen}:

\textbf{   1) $\boldsymbol{\bar{\mathcal{T}}}^{\rho(\sigma)}_{N_{\max}}=\E\Fbrac{\widehat T _{N_{\max}}^{\rho(\sigma)}(Q)}$ is $\gamma$ contraction operator (\Cref{prop:contract}).}

\textbf{ 2). Unbiased estimation: }
\eqenv{
\E\Fbrac{{\widehat{\mathcal{T}}}^{\rho(\sigma)}_{N_{\max}}\brac{Q^{\rho_{TV}(\sigma)}_{t}}-\boldsymbol{\bar{\mathcal{T}}}^{\rho_{TV}(\sigma)}_{N_{\max}}\brac{Q^{\rho_{TV}(\sigma)}_{t}} \bigg|\mathcal{F}_t }= 0
}

\textbf{ 3). Bounded infinite norm expectation:  }
Here, the boundary of the MLMC estimator infinite norm expectation $\E\Fbrac{\norminf{\widehat {\mathcal{T}}_{N_{\max}}^{\rho(\sigma)}(Q)}^2 }$ is required to make sure the convergence of the algorithm. Take the expectation of $N\sim \text{Geo}(\psi)$, the expectation of infinite norm can be bounded by
\eqenv{
\E\Fbrac{\norminf{\widehat {\mathcal{T}}_{N_{\max}}^{\rho(\sigma)}(Q)}^2 }
& \myineq{\leq}{a} 4r^2_{\max}+4\gamma^2\frac{r^2_{\max}}{(1-\gamma)^2}+4
\E\Fbrac{\sum_{N_1=0}^{N_{\max}}\sup_{s,a} \frac{\brac{\delta^{r,\rho(\sigma)}_{s,a,N_1}
  }^2}{P_{N_1}}+\gamma^2\sum_{N_2=0}^{N_{\max}}\sup_{s,a}\frac{\brac{
\delta^{\rho(\sigma)}_{s,a,N_2}(Q)  }^2}{P_{N_2}}} 
\\&= 4r^2_{\max}+4\gamma^2\frac{r^2_{\max}}{(1-\gamma)^2}+4\E\Fbrac{\sum_{N_1=0}^{N_{\max}} \sup_{s,a}\frac{\brac{\delta^{r,\rho(\sigma)}_{s,a,N_1}
  }^2}{\psi(1-\psi)^{N_1}}+\gamma^2\sum_{N_2=0}^{N_{\max}}\sup_{s,a}\frac{\brac{
\delta^{\rho(\sigma)}_{s,a,N_2}(Q)  }^2}{\psi(1-\psi)^{N_2}} }, 
} where $(a)$ follows from that $\sup_{s,a} r_{s,a}\leq r_{\max}$,  $\sup_{s,a} Q(s,a)\leq \frac{r_{\max}}{1-\gamma}$ and the two estimators are independent. 

Then, make the decomposition of the term $\sup_{s,a}\lbrac{
\delta^{\rho(\sigma)}_{s,a,N_2}(Q)  }^2$, 
\eqenv{\label{eqa24}
\sup_{s,a}\lbrac{\delta^{\rho(\sigma)}_{s,a,N}(Q)(s,a) }^2
     &\leq 3\sup_{s,a}\lbrac{\sup_{\alpha\geq 0}\varbrac{f^{\rho(\sigma)}(\widehat p_{s,a,2^{N+1}},\alpha,V)}- {\sup_{\alpha\geq 0}\varbrac{f^{\rho(\sigma)}( p_{s,a},\alpha,V)} }  }^2
    \\&\quad+  \frac{3}{4}\sup_{s,a}\lbrac{ \sup_{\alpha\geq 0}\varbrac{f^{\rho(\sigma)}(\widehat p^E_{s,a,2^{N}},\alpha,V)}-{\sup_{\alpha\geq 0}\varbrac{f^{\rho(\sigma)}( p_{s,a},\alpha,V)} } }^2
    \\&\quad+\frac{3}{4}\sup_{s,a}\lbrac{\sup_{\alpha\geq 0}\varbrac{f^{\rho(\sigma)}(\widehat p^O_{s,a,2^{N}},\alpha,V )}- {\sup_{\alpha\geq 0}\varbrac{f^{\rho(\sigma)}(  p_{s,a},\alpha,V)} }}^2.
}

Then the terms in \cref{eqa24} can be bounded specifically for different uncertainty sets (TV, $\chi^2$, and KL) and we can obtain the sample complexity. 




%
\section{Total Variation Uncertainty Set}
In this part, we present the proof of propositions and theorems specifically for the TV-constrained uncertainty set. 
\begin{theorem}[Restatement of \Cref{thm:tv1} specifically for TV distance]
    Consider the case of TV constraint uncertainty set with uncertainty level $\sigma$ i.e. $ \mathcal{P}^{TV}(\sigma)$ and $ \mathcal{R}^{TV}(\sigma)$, set $\psi=\frac{1}{2}$, for any $Q\in \mathbb R^{\mathcal{S}\times\mathcal{A}}, s\in \mathcal{S}, a\in \mathcal{A} $, the estimation bias can be bounded as:
    \begin{align}
        \sup_{s,a}\lbrac{\E\Fbrac{\widehat {\mathcal{T}}_{N_{\max}}^{\rho_{TV}(\sigma)} (Q)(s,a) } - {\mathcal{T}}^{\rho_{TV}(\sigma)} (Q)(s,a)}\leq  {\widetilde{\mathcal{O }}}\brac{N_{\max}2^{-\frac{N_{\max}}{2}} },\nonumber
    \end{align}
    The variance can be bounded as:
    \begin{align}
        \text{Var}\brac{\widehat {\mathcal{T}}^{\rho_{TV}(\sigma)}_{N_{\max}} (Q)(s,a)  }\leq \widetilde{\mathcal{O }}\brac{{N_{\max}}}.
    \end{align}
\end{theorem}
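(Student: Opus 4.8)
The plan is to bound the bias and the variance separately, and in each case to treat only the transition-kernel term $\widehat v^{\rho_{TV}(\sigma)}$; the reward term $\widehat r^{\rho_{TV}(\sigma)}$ is handled identically with $g^{\rho_{TV}(\sigma)}$ in place of $f^{\rho_{TV}(\sigma)}$, and since the two terms use independent samples their contributions simply add. For the bias I would first invoke \Cref{prop:mlmc}, which collapses the expectation of the threshold estimator onto a single model-based quantity, $\E\Fbrac{\widehat v^{\rho_{TV}(\sigma)}(Q(s,a))}=\E\Fbrac{f^{*\rho_{TV}(\sigma)}(\widehat p_{s,a,n},V)}$ with $n=2^{N_{\max}+1}$. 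Because ${\mathcal{T}}^{\rho_{TV}(\sigma)}$ contributes $f^{*\rho_{TV}(\sigma)}(p_{s,a},V)$, the entire bias reduces to the empirical dual estimation bias $\lbrac{\E\Fbrac{f^{*\rho_{TV}(\sigma)}(\widehat p_{n},V)-f^{*\rho_{TV}(\sigma)}(p_{s,a},V)}}$ at sample size $n$, and the whole problem reduces to controlling this one object.

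\textbf{Bias.} Writing $f^{*\rho_{TV}(\sigma)}(p,V)=\sup_{\alpha\geq 0} f^{\rho_{TV}(\sigma)}(p,\alpha,V)$ and noting that the penalty $-\frac{\sigma}{2}(\alpha-\min_{s'}V)$ is deterministic (the minimum is taken over the known state space), for each fixed $\alpha$ the increment equals $\E_{\widehat p_{n}}\Fbrac{(V)_\alpha}-\E_{p}\Fbrac{(V)_\alpha}$, a centered average of i.i.d.\ terms bounded in $[0,\tfrac{r_{\max}}{1-\gamma}]$. The inequality $\lbrac{\sup_\alpha \widehat f-\sup_\alpha f}\leq \sup_\alpha\lbrac{\widehat f-f}$ then bounds the bias by $\E\Fbrac{\sup_{\alpha\geq 0}\lbrac{\E_{\widehat p_{n}}\Fbrac{(V)_\alpha}-\E_{p}\Fbrac{(V)_\alpha}}}$. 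I would control this uniform deviation by exploiting that $\alpha\mapsto(V)_\alpha$ is monotone and $1$-Lipschitz: the maximizing $\alpha$ lies in $[0,\tfrac{r_{\max}}{1-\gamma}]$, so I place an $\varepsilon$-grid on this interval, apply Hoeffding with a union bound over the grid, and absorb the residual via the Lipschitz constant. Balancing the grid size gives $\E\Fbrac{\sup_\alpha\lbrac{\cdot}}\leq\widetilde{\mathcal{O}}(\sqrt{\log n/n})$, and substituting $n=2^{N_{\max}+1}$, $\log n=\mathcal{O}(N_{\max})$, yields a bound of order $\widetilde{\mathcal{O}}(\sqrt{N_{\max}}\,2^{-N_{\max}/2})$, which lies within the claimed $\widetilde{\mathcal{O}}(N_{\max}2^{-N_{\max}/2})$.

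\textbf{Variance.} By independence it suffices to bound $\Varr{\widehat v^{\rho_{TV}(\sigma)}}$. The base sample $V(s'_{s,a,0})$ is bounded by $\tfrac{r_{\max}}{1-\gamma}$, so the work is the correction $\delta^{\rho_{TV}(\sigma)}_{s,a,N_2}(Q)/P_{N_2}$. Averaging over $N_2\sim\text{GEO}(\psi)$ with $\psi=\tfrac12$, i.e.\ $P_N=2^{-(N+1)}$, produces $\sum_{N=0}^{N_{\max}} P_N^{-1}\E\Fbrac{(\delta^{\rho_{TV}(\sigma)}_{s,a,N})^2}=\sum_{N=0}^{N_{\max}} 2^{N+1}\E\Fbrac{(\delta^{\rho_{TV}(\sigma)}_{s,a,N})^2}$. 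Using the three-way decomposition \eqref{eqa24}, I would bound $\E\Fbrac{(\delta_N)^2}$ by the squared dual deviations at scales $2^{N}$ and $2^{N+1}$; the same uniform-concentration argument (now for the second moment) shows each such deviation has second moment $\widetilde{\mathcal{O}}(2^{-N})$, which is the MLMC variance-reduction property. Hence $2^{N+1}\E\Fbrac{(\delta_N)^2}=\widetilde{\mathcal{O}}(1)$ per level, and summing the $N_{\max}+1$ levels gives $\Varr{\widehat {\mathcal{T}}^{\rho_{TV}(\sigma)}_{N_{\max}}(Q)(s,a)}\leq\widetilde{\mathcal{O}}(N_{\max})$.

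\textbf{Main obstacle.} The crux of both bounds is the uniform control of the empirical dual over the maximizing variable $\alpha$: the $\sup_\alpha$ makes $f^{*\rho_{TV}(\sigma)}(\widehat p_n,V)$ a nonlinear and upward-biased functional of $\widehat p_n$, so a per-$\alpha$ Hoeffding bound does not suffice and a covering argument leveraging the monotone, $1$-Lipschitz structure of $\alpha\mapsto(V)_\alpha$ is required. The same mechanism must then be pushed through the MLMC increment to certify that $\E\Fbrac{(\delta_N)^2}$ decays like $2^{-N}$ despite the sup-nonlinearity; without this coupling-based decay the per-level terms would not telescope to $\widetilde{\mathcal{O}}(N_{\max})$. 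The reward term requires the analogous treatment of $g^{\rho_{TV}(\sigma)}$, whose dual shares the same monotone-Lipschitz form in $\alpha$, so it follows along identical lines.
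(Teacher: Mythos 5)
Your proposal is correct and takes essentially the same route as the paper: \Cref{prop:mlmc} collapses the bias to the empirical dual bias at sample size $2^{N_{\max}+1}$, which the paper controls by the same uniform-in-$\alpha$ covering/Lipschitz argument (its \Cref{lm:tvlm}, via Bernstein rather than your Hoeffding---immaterial here since the variance proxy is bounded by $\left(\tfrac{r_{\max}}{1-\gamma}\right)^2$ anyway), and the variance is handled exactly as you describe, via the independence split, the geometric-level second-moment expansion $\sum_{N=0}^{N_{\max}}2^{N+1}\,\E\bigl[(\delta^{\rho_{TV}(\sigma)}_{s,a,N})^2\bigr]$, and the three-way decomposition yielding $\E\bigl[(\delta^{\rho_{TV}(\sigma)}_{s,a,N})^2\bigr]=\widetilde{\mathcal{O}}\brac{2^{-N}}$ per level. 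The only cosmetic difference is that the paper converts its high-probability bounds to expectations by an explicit two-case split with failure probability of order $2^{-N}$ (plus a Bernoulli-inequality union bound over $(s,a)$), whereas you integrate the tail directly; both give the same rates.
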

\begin{proof}
Firstly, we make error decomposition as follows:
    \eqenv{
    &\sup_{s,a}\lbrac{ \E\Fbrac{\widehat {\mathcal{T}}_{N_{\max}}^{\rho_{TV}(\sigma)} (Q)(s,a) } - {\mathcal{T}}^{\rho_{TV}(\sigma)} (Q)(s,a)}
   \\ &\myineq{=}{i} \sup_{s,a}\bigg|\E\Fbrac{g^{*\rho_{TV}(\sigma)}(\hat \mu_{s,a,2^{N_{\max}+1}},r_{s,a})}+\gamma\E\Fbrac{f^{*\rho_{TV}(\sigma)}(\hat p_{s,a,2^{N_{\max}+1}},V)}
   \\& \qquad- {g^{*\rho_{TV}(\sigma)}(\mu_{s,a},r_{s,a})}
- \gamma  {f^{*\rho_{TV}(\sigma)}(p_{s,a},V )} \bigg|
\\& \leq \sup_{s,a}\lbrac{\E\Fbrac{g^{*\rho_{TV}(\sigma)}(\hat \mu_{s,a,2^{N_{\max}+1}},r_{s,a})}- {g^{*\rho_{TV}(\sigma)}(\mu_{s,a},r_{s,a})} }\\&\qquad+\gamma \sup_{s,a}\lbrac{\E\Fbrac{f^{*\rho_{TV}(\sigma)}(\hat p_{s,a,2^{N_{\max}+1}},V )}-{f^{*\rho_{TV}(\sigma)}(p_{s,a},V )} }
\\& \leq \E\Fbrac{\sup_{s,a}\lbrac{ {g^{*\rho_{TV}(\sigma)}(\hat \mu_{s,a,2^{N_{\max}+1}},r_{s,a})}- {g^{*\rho_{TV}(\sigma)}(\mu_{s,a},r_{s,a})} } }\\&\qquad+\gamma \E\Fbrac{ \sup_{s,a}\lbrac{{f^{*\rho_{TV}(\sigma)}(\hat p_{s,a,2^{N_{\max}+1}},V )}-{f^{*\rho_{TV}(\sigma)}(p_{s,a},V )} }}
\label{eq:eq44}
    } where $(i)$ follows from \cref{prop:mlmc}. 

    For convenience, we only bound the second term in \cref{eq:eq44}. The first term can be bounded similarly.  By \cref{lm:tv}, 
    \eqenv{\label{eqeq42}
    &\lbrac{ {f^{*\rho_{TV}(\sigma)}(\hat p_{s,a,2^{N_{\max}+1}},V )}-{f^{*\rho_{TV}(\sigma)}(p_{s,a},V )} } 
    \\&= \Bigg|\max_{\alpha\geq 0}\varbrac{ \mathbb E_{p_{s,a}}\Fbrac{(V(s'_{s,a}))_\alpha}-\frac{{\sigma}}{2}\brac{\alpha-\min_{s'_{s,a}} V(s'_{s,a})} }\\&\qquad\qquad- {\max_{\alpha\geq 0}\varbrac{ \mathbb E_{\hat p_{s,a,2^{N_{\max}+1}}}\Fbrac{(V(s'_{s,a}))_\alpha}-\frac{{\sigma}}{2}\brac{\alpha-\min_{s'_{s,a}} V(s'_{s,a})} }}\Bigg|
    \\& \leq  {\max_{0\leq \alpha\leq\max_{s'_{s,a}} V(s'_{s,a}) }\lbrac{\mathbb E_{p_{s,a}}\Fbrac{(V(s'_{s,a}))_\alpha}-E_{\hat p_{s,a,2^{N_{\max}+1}}}\Fbrac{(V(s'_{s,a}))_\alpha}}}.
    }
    

Similarly to Lemma 9 in \citep{shi2023curious}, 
we have the following lemma.

\begin{lemma}\label{lm:tvlm}
    Consider the case of TV constraint uncertainty set $\mathcal{P}^{\text{TV}}(\sigma)$ with uncertainty level $\sigma$ , for any $\delta\in(0,1)$, one has with probability at least $1-\delta$, 
    \eqenv{
   \max_{0\leq \alpha\leq \max_{s'_{s,a}} V(s'_{s,a}) }& \lbrac{\mathbb E_{p_{s,a}}\Fbrac{(V(s'_{s,a}))_\alpha}-\E_{\hat p_{s,a,N}}\Fbrac{(V(s'_{s,a}))_\alpha}}
   \leq 3r_{\max}\sqrt{\frac{\log \brac{\frac{18  N}{\delta}}}{(1-\gamma)^2N}}.
    }
\end{lemma}
According to \Cref{lm:tvlm}, it can be shown that with probability at least $1-\frac{2^{-{N_{\max}-1}}}{|\mathcal{S}||\mathcal{A}|} $, we have 
    \eqenv{
   \max_{0\leq \alpha\leq \max_{s'_{s,a}} V(s'_{s,a}) }& \lbrac{\mathbb E_{p_{s,a}}\Fbrac{(V(s'_{s,a}))_\alpha}-E_{\hat p_{s,a,2^{N_{\max}+1}}}\Fbrac{(V(s'_{s,a}))_\alpha}}
    \\&\leq 3 \sqrt{\frac{r_{\max}^2 \brac{\log \brac{18|\mathcal{S}||\mathcal{A}|} + 2(N_{\max}+1)\log 2} }{(1-\gamma)^2 2^{{N_{\max}+1}}}}
    ; 
    }
    
Then, according to the Bernoulli's inequality, we have that 
\eqenv{
\brac{1-\frac{2^{-N_{\max}-1}}{\ssa} }^\ssa\geq 1- 2^{-N_{\max}-1}. 
}

Therefore, with probability at least $1-2^{-N_{\max}-1} $, we have that 
\eqenv{\label{eqeq44}
\sup_{s,a}&\varbrac{\max_{0\leq \alpha\leq \max_{s'_{s,a}} V(s'_{s,a}) } \lbrac{\mathbb E_{p_{s,a}}\Fbrac{(V(s'_{s,a}))_\alpha}-E_{\hat p_{s,a,2^{N_{\max}+1}}}\Fbrac{(V(s'_{s,a}))_\alpha}} }
\leq \frac{ r_{\max} C_{TV}}{(1-\gamma)2^{\frac{N_{\max}+1}{2}}},
}
where we set $C_{TV}= 3\sqrt{{2(N_{\max}+1)\log2+} \log(18 |\mathcal{S}||\mathcal{A}|)} $. 
    
With probability $2^{-{N_{\max}-1}}$, we directly have that 
    \eqenv{\label{eqeq45}
    \sup_{s,a}\varbrac{\max_{0\leq \alpha\leq \max_{s'_{s,a}} V(s'_{s,a}) }\lbrac{\mathbb E_{p_{s,a}}\Fbrac{(V(s'_{s,a}))_\alpha}-E_{\hat p_{s,a,2^{N_{\max}+1}}}\Fbrac{(V(s'_{s,a}))_\alpha}}}\leq \sup_{s,a}{\max_{s'_{s,a}} V(s'_{s,a})}\leq \frac{r_{\max}}{1-\gamma}. 
    }
    
    Hence, combining both cases together with \cref{eqeq42}, we further have that
    \eqenv{
    \E&\Fbrac{\sup_{s,a}\lbrac{{f^{*\rho_{TV}(\sigma)}(\hat p_{s,a,2^{N_{\max}+1}},V )}-{f^{*\rho_{TV}(\sigma)}(p_{s,a},V )} }}
    \\ &\myineq{\leq}{i}  \frac{ r_{\max}C_{TV}}{(1-\gamma)}2^{-\frac{N_{\max}+1}{2}} + \frac{r_{\max}}{1-\gamma}2^{-\brac{N_{\max}+1}} 
    \\& {\leq} \frac{r_{\max}}{1-\gamma}2^{-\frac{N_{\max}+1}{2}}\brac{2^{-\frac{N_{\max}+1}{2}}+ C_{TV} }, 
    }where $(i)$ follows from that $1-2^{-\brac{N_{\max}+1}}\leq 1$. 

    Similarly, we can get the bound
    \eqenv{
   \E&\Fbrac{\sup_{s,a} \lbrac{{g^{*\rho_{TV}(\sigma)}(\hat \mu_{s,a,2^{N_{\max}+1}},r_{s,a})}- {g^{*\rho_{TV}(\sigma)}(\mu_{s,a},r_{s,a})} }}
    \\&\qquad\qquad\qquad\leq r_{\max}2^{-\frac{N_{\max}+1}{2}}\brac{2^{-\frac{N_{\max}+1}{2}}+ C_{TV} }.
    }

    Thus, we can get that
    \eqenv{
    &\sup_{s,a}\lbrac{ \E\Fbrac{\widehat {\mathcal{T}}_{N_{\max}}^{\rho_{TV}(\sigma)} (Q)(s,a) } - {\mathcal{T}}^{\rho_{TV}(\sigma)} (Q)(s,a)}
\\& \leq \E\Fbrac{\sup_{s,a}\lbrac{{g^{*\rho_{TV}(\sigma)}(\hat \mu_{s,a,2^{N_{\max}+1}},r_{s,a})}- {g^{*\rho_{TV}(\sigma)}(\mu_{s,a},r_{s,a})} }}
\\&\qquad+\gamma\E\Fbrac{\sup_{s,a}\lbrac{{f^{*\rho_{TV}(\sigma)}(\hat p_{s,a,2^{N_{\max}+1}},V )}-{f^{*\rho_{TV}(\sigma)}(p_{s,a},V )} }}
\\& \leq \brac{\frac{\gamma r_{\max}}{1-\gamma}+r_{\max} }2^{-\frac{N_{\max}+1}{2}}\brac{2^{-\frac{N_{\max}+1}{2}}+ C_{TV} }.
    }

We then consider the variance of the robust Bellman operator. Firstly, we make an error decomposition of the robust Bellman operator variance as 
\eqenv{
\text{Var}\brac{\widehat {\mathcal{T}}_{N_{\max}}^{\rho_{TV}(\sigma)} (Q)(s,a) }
&= \text{Var}\brac{\widehat  r^{\rho_{TV}(\sigma)}+ \gamma \widehat{v}^{\rho_{TV}(\sigma)}(Q)(s,a) }
\\&= \text{Var}\brac{\widehat  r^{\rho_{TV}(\sigma)}}
+\gamma^2 \Varr{\widehat{v}^{\rho_{TV}(\sigma)}(Q)(s,a)},
}
which is due to the two estimators are independent.

For convenience, we analyze the second term in the above equation. The first term can be bounded similarly. 
\eqenv{
\Varr{\widehat{v}^{\rho_{TV}(\sigma)}(Q)(s,a)}&= 
\E\Fbrac{\brac{\widehat{v}^{\rho_{TV}(\sigma)}(Q)(s,a)}^2  }-\brac{\E\Fbrac{\widehat{v}^{\rho_{TV}(\sigma)}(Q)(s,a) }}^2
\\&\leq \E\Fbrac{\brac{\widehat{v}^{\rho_{TV}(\sigma)}(Q)(s,a)}^2  }.
}
Next, according to the \cref{eq:delta,eq:hatq}, the term above can be explicitly computed: 
\eqenv{\label{eqeq1491}
\E\Fbrac{\brac{\widehat{v}^{\rho_{TV}(\sigma)}(Q)(s,a)}^2  }& = \E\Fbrac{\brac{V(s'_{s,a,0})+ \frac{\delta^{\rho_{TV}(\sigma)}_{s,a,N_2}(Q)(s,a) }{P_{N_2}} }^2}
\\& \leq 2 \E\Fbrac{V(s'_{s,a,0})^2 }+ 2\E \Fbrac{ \brac{\frac{\delta^{\rho_{TV}(\sigma)}_{s,a,N_2}(Q)(s,a) }{P_{N_2}} }^2 }
\\& \leq \frac{2r^2_{\max}}{(1-\gamma)^2}+ 2\sum_{N=0}^{N_{\max}} \E\Fbrac{\brac{\frac{\delta^{\rho_{TV}(\sigma)}_{s,a,N_2}(Q)(s,a) }{P_{N_2}}|N_2=N }^2} P_N
\\& \leq \frac{2r^2_{\max}}{(1-\gamma)^2}+ 2\sum_{N=0}^{N_{\max}} \frac{\E\Fbrac{(\delta^{\rho_{TV}(\sigma)}_{s,a,N}(Q)(s,a))^2} }{P_{N}}.
}
Next, we bound the term $\lbrac{\delta^{\rho_{TV}(\sigma)}_{s,a,N}(Q)(s,a) }^2 $, where
\eqenv{
\lbrac{\delta^{\rho_{TV}(\sigma)}_{s,a,N}(Q)(s,a) }&=
\Bigg|\sup_{\alpha\geq 0}\varbrac{f^{\rho_{TV}(\sigma)}(\widehat p_{s,a,2^{N+1}},\alpha,V)} 
    \\&\quad  -\frac{1}{2} \sup_{\alpha\geq 0}\varbrac{f^{\rho_{TV}(\sigma)}(\widehat p^E_{s,a,2^{N}},\alpha,V)}
    -\frac{1}{2}\sup_{\alpha\geq 0}\varbrac{f^{\rho_{TV}(\sigma)}(\widehat p^O_{s,a,2^{N}},\alpha,V )}\Bigg|. 
}

We make an error decomposition as follows:
\eqenv{
\lbrac{\delta^{\rho_{TV}(\sigma)}_{s,a,N}(Q)(s,a) }^2&=
\Bigg|\sup_{\alpha\geq 0}\varbrac{f^{\rho_{TV}(\sigma)}(\widehat p_{s,a,2^{N+1}},\alpha,V)} 
        \\& \qquad-\frac{1}{2} \sup_{\alpha\geq 0}\varbrac{f^{\rho_{TV}(\sigma)}(\widehat p^E_{s,a,2^{N}},\alpha,V)}
    -\frac{1}{2}\sup_{\alpha\geq 0}\varbrac{f^{\rho_{TV}(\sigma)}(\widehat p^O_{s,a,2^{N}},\alpha,V )}\Bigg|^2
    \\& \leq  3\lbrac{\sup_{\alpha\geq 0}\varbrac{f^{\rho_{TV}(\sigma)}(\widehat p_{s,a,2^{N+1}},\alpha,V)}-{\sup_{\alpha\geq 0}\varbrac{f^{\rho_{TV}(\sigma)}(p_{s,a},\alpha,V)} }  }^2
    \\&\quad+  \frac{3}{4}\lbrac{ \sup_{\alpha\geq 0}\varbrac{f^{\rho_{TV}(\sigma)}( \widehat p^E_{s,a,2^N},\alpha,V)}-{\sup_{\alpha\geq 0}\varbrac{f^{\rho_{TV}(\sigma)}(p_{s,a},\alpha,V)} } }^2
    \\&\quad+\frac{3}{4}\lbrac{\sup_{\alpha\geq 0}\varbrac{f^{\rho_{TV}(\sigma)}(\widehat p^O_{s,a,2^{N}},\alpha,V )}-{\sup_{\alpha\geq 0}\varbrac{f^{\rho_{TV}(\sigma)}( p_{s,a},\alpha,V)} }}^2
    .\label{eqeq1521}
}

Then, combined with the analysis in \cref{eqeq44,eqeq45} and the fact $\mathbb P(A\cap B\cap C)\geq 1- \mathbb P(\neg A)-\mathbb P(\neg B)-\mathbb P(\neg C)$, we can conclude that
 with probability at least $1-3*2^{-N} $
\eqenv{
\lbrac{\delta^{\rho_{TV}(\sigma)}_{s,a,N}(Q)(s,a) }^2
&\leq 3\brac{C_{TV}\frac{r_{\max}}{1-\gamma} 2^{-\frac{N+1}{2}}}^2 + \frac{3}{4}\brac{C_{TV}\frac{r_{\max}}{1-\gamma} 2^{-\frac{N}{2}} }^2+ \frac{3}{4}\brac{C_{TV}\frac{r_{\max}}{1-\gamma} 2^{-\frac{N}{2}} }^2
\\& \leq  3\frac{C^2_{TV} r^2_{\max} }{(1-\gamma)^2}2^{-(N+1)}, 
}
 Since $ 0 \leq \sup_{\alpha\geq 0}\varbrac{f^{\rho_{TV}(\sigma)}(q,\alpha,V)}\leq \frac{r_{\max}}{1-\gamma}$ for any distribution $q$, with probability at most $3*2^{-N} $ 
we have that
\eqenv{
 \lbrac{\delta^{\rho_{TV}(\sigma)}_{s,a,N}(Q)(s,a) }^2
 \leq \brac{ \frac{r_{\max}}{1-\gamma}}^2.
}

Above all, we can get that 
\eqenv{
\E\Fbrac{\lbrac{\delta^{\rho_{TV}(\sigma)}_{s,a,N}(Q)(s,a) }^2 }&\leq 3\frac{C^2_{TV} r^2_{\max} }{(1-\gamma)^2}2^{-(N+1)}+ \brac{ \frac{r_{\max}}{1-\gamma}}^23*2^{-N}
\\&\leq \brac{3 C^2_{TV}+ 6}\brac{ \frac{r_{\max}}{1-\gamma}}^2 2^{-N-1}. \label{eqeq1551}
}
Then, plug \cref{eqeq1551} in \cref{eqeq1491}, we can get the bound of variance of robust Bellman operator as follows:
\eqenv{
\Varr{\widehat{v}^{\rho_{TV}(\sigma)}(Q)(s,a)}&\leq \E\Fbrac{\brac{\widehat{v}^{\rho_{TV}(\sigma)}(Q)(s,a)}^2  }
\\&\leq \frac{2r^2_{\max}}{(1-\gamma)^2}+ 2\sum_{N=0}^{N_{\max}} \frac{\E\Fbrac{(\delta^{\rho_{TV}(\sigma)}_{s,a,N}(Q)(s,a))^2} }{P_{N}}
\\&\leq \frac{2r^2_{\max}}{(1-\gamma)^2}+ \frac{2r^2_{\max}}{(1-\gamma)^2}\brac{3 C^2_{TV}+ 6}\sum_{N=0}^{N_{\max}} \frac{2^{-N-1} }{P_{N}}
\\& \myineq{\leq}{i} \frac{2r^2_{\max}}{(1-\gamma)^2}\brac{1+\brac{3 C^2_{TV}+ 6}(N_{\max}+1) },
} where $(i)$ follows from $P_N=\psi(1-\psi)^{N}=(1/2)^{N+1}. $

Similarly, we can get the bound of the variance $\text{Var}\brac{\widehat  r^{\rho_{TV}(\sigma)}}$ as follows:
\eqenv{
\text{Var}\brac{\widehat  r^{\rho_{TV}(\sigma)}}\leq \E\Fbrac{\brac{\widehat  r^{\rho_{TV}(\sigma)}}^2 }\leq
 2r^2_{\max}(1+\brac{3 C^2_{TV}+ 6}(N_{\max}+1) ).
}

Hence, we can get the robust Bellman operator variance bound:
\eqenv{\label{eqeqeq62}
\text{Var}\brac{\widehat {\mathcal{T}}_{N_{\max}}^{\rho_{TV}(\sigma)} (Q)(s,a) }
&= \text{Var}\brac{\widehat  r^{\rho_{TV}(\sigma)}}
+\gamma^2 \Varr{\widehat{v}^{\rho_{TV}(\sigma)}(Q)(s,a)}
\\&\leq \brac{2r^2_{\max}+\gamma^2\frac{2r^2_{\max}}{(1-\gamma)^2} }(1+\brac{3 C^2_{TV}+ 6}(N_{\max}+1) ).
}
   This completes the proof.

\end{proof}

\begin{lemma}\label{lm:lminftv}
    For any fixed $Q\in \mathbb R^{|\mathcal{S}||\mathcal{A}|} $, the infinite norm of robust Bellman operator can be bounded as:
    \begin{align}
        \E\Fbrac{\norminf{\widehat {\mathcal{T}}^{\rho_{TV}(\sigma)}_{N_{\max}} (Q)}^2 }\leq \widetilde{\mathcal{O }}\brac{{N_{\max}}}.
        \end{align}
\end{lemma}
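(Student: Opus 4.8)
The plan is to start from the infinity-norm decomposition already derived when verifying the third condition of \Cref{lm:chen}, specialized to the TV case, which gives
\begin{align}
\E\Fbrac{\norminf{\widehat{\mathcal{T}}^{\rho_{TV}(\sigma)}_{N_{\max}}(Q)}^2} &\leq 4r^2_{\max}+\frac{4\gamma^2 r^2_{\max}}{(1-\gamma)^2} \nonumber\\
&\quad + 4\E\Fbrac{\sum_{N_1=0}^{N_{\max}}\sup_{s,a}\frac{\brac{\delta^{r,\rho_{TV}(\sigma)}_{s,a,N_1}}^2}{P_{N_1}}+\gamma^2\sum_{N_2=0}^{N_{\max}}\sup_{s,a}\frac{\brac{\delta^{\rho_{TV}(\sigma)}_{s,a,N_2}(Q)}^2}{P_{N_2}}}. \nonumber
\end{align}
The first two terms are dimension-free constants, i.e. $\widetilde{\mathcal{O}}(1)$, so the whole task reduces to bounding the two expected level-sums. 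Since they are structurally identical, I would treat only the transition-kernel sum and obtain the reward sum verbatim, and by linearity of expectation it suffices to control $\E\Fbrac{\sup_{s,a}(\delta^{\rho_{TV}(\sigma)}_{s,a,N}(Q))^2}$ for each level $N\leq N_{\max}$.

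The key step is exactly this per-level bound, and the only feature that is new relative to the variance computation in the proof of the TV restatement of \Cref{thm:tv1} is that the supremum over $(s,a)$ now sits \emph{inside} the expectation. I would reuse the uniform concentration already established in \eqref{eqeq44}: applying \Cref{lm:tvlm} to each of the three empirical kernels $\widehat p_{s,a,2^{N+1}}$, $\widehat p^E_{s,a,2^{N}}$, $\widehat p^O_{s,a,2^{N}}$ and aggregating the three failure events via $\mathbb P(A\cap B\cap C)\geq 1-\sum\mathbb P(\cdot^c)$, one gets that with probability at least $1-3\cdot 2^{-N}$ the supremum is bounded by $3C_{TV}^2 r^2_{\max}(1-\gamma)^{-2}2^{-(N+1)}$, exactly as in the decomposition \eqref{eqeq1521}. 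On the complementary event of probability at most $3\cdot 2^{-N}$, I would fall back on the deterministic bound $0\leq\sup_{\alpha\geq0}f^{\rho_{TV}(\sigma)}\leq r_{\max}/(1-\gamma)$, so that $\sup_{s,a}(\delta^{\rho_{TV}(\sigma)}_{s,a,N}(Q))^2\leq (r_{\max}/(1-\gamma))^2$. Taking expectations and combining the two events reproduces \eqref{eqeq1551} at the level of the supremum,
\begin{align}
\E\Fbrac{\sup_{s,a}\brac{\delta^{\rho_{TV}(\sigma)}_{s,a,N}(Q)}^2} \leq \brac{3C_{TV}^2+6}\brac{\frac{r_{\max}}{1-\gamma}}^2 2^{-N-1}. \nonumber
\end{align}

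To finish, I would substitute $P_N=(1/2)^{N+1}$, which cancels the factor $2^{-N-1}$, so each summand is bounded by $(3C_{TV}^2+6)(r_{\max}/(1-\gamma))^2$; summing over the $N_{\max}+1$ levels gives a bound of order $(3C_{TV}^2+6)(N_{\max}+1)$ for the transition-kernel sum, and the reward sum contributes the same order with $(1-\gamma)^{-2}$ replaced by $1$. Since $C_{TV}^2=\widetilde{\mathcal{O}}(N_{\max})$ collects only the $\log(|\mathcal{S}||\mathcal{A}|)$ and polynomial-in-$N_{\max}$ factors absorbed by $\widetilde{\mathcal{O}}$, the total is $\widetilde{\mathcal{O}}(N_{\max})$, as claimed. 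The main obstacle is precisely the exchange of $\sup_{s,a}$ with the expectation: a naive union bound over state-action pairs would inflate the failure probability to $|\mathcal{S}||\mathcal{A}|\cdot 2^{-N}$ and destroy summability over the levels. The remedy, implicit in \eqref{eqeq44}, is to run \Cref{lm:tvlm} at the sharper per-pair confidence level $1-\frac{2^{-N_{\max}-1}}{|\mathcal{S}||\mathcal{A}|}$ and then invoke Bernoulli's inequality, so that the aggregated good event retains probability $1-O(2^{-N})$ while only injecting a $\log(|\mathcal{S}||\mathcal{A}|)$ term into $C_{TV}$. Once this union-bound bookkeeping is arranged correctly, the remainder is a direct re-run of the variance argument and requires no new concentration estimates.
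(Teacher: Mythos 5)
Your proposal is correct and follows essentially the same route as the paper's proof of \Cref{lm:lminftv}: the identical decomposition into the constant terms plus the two expected level-sums as in \cref{eqeq149}, the same three-term split of $\sup_{s,a}\lbrac{\delta^{\rho_{TV}(\sigma)}_{s,a,N}(Q)}^2$ as in \cref{eqeq152}, the same good-event/bad-event argument (probability $1-3\cdot 2^{-N}$ via \Cref{lm:tvlm} with the per-pair confidence level and Bernoulli's inequality, falling back to the deterministic bound $r_{\max}/(1-\gamma)$ otherwise), and the same cancellation of $2^{-N-1}$ against $P_N$ followed by summation over the $N_{\max}+1$ levels, yielding the bound of \cref{eqeq62}. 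Your explicit identification of the supremum-inside-expectation issue and its resolution through the sharpened per-pair confidence is precisely the bookkeeping the paper performs implicitly in \cref{eqeq44,eqeq45}, so no gap remains.
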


\begin{proof}[Proof of \Cref{lm:lminftv}]
We then consider the expectation of infinite norm of robust Bellman operator. Set $\psi=\frac{1}{2}$, from the construction of T-MLMC operator we directly have that
\eqenv{\label{eqeq149}
\E\Fbrac{\norminf{\widehat {\mathcal{T}}_{N_{\max}}^{\rho_{TV}(\sigma)}(Q)}^2 }\leq 
 4r^2_{\max}+4\gamma^2\frac{r^2_{\max}}{(1-\gamma)^2}+4\E\Fbrac{\sum_{N_1=0}^{N_{\max}} \sup_{s,a}\frac{\brac{\delta^{r,\rho(\sigma)}_{s,a,N_1}
  }^2}{2^{-N_1-1}}+\gamma^2\sum_{N_2=0}^{N_{\max}}\sup_{s,a}\frac{\brac{
\delta^{\rho_{TV}(\sigma)}_{s,a,N_2}(Q)  }^2}{2^{-N_2-1}} }. 
}


For convenience, we analyze the last term in the above equation. 
Consider the term $\sup_{s,a}\lbrac{\delta^{\rho_{TV}(\sigma)}_{s,a,N}(Q)(s,a) }^2 $, 
 we make an error decomposition as follows:
\eqenv{
\sup_{s,a}\lbrac{\delta^{\rho_{TV}(\sigma)}_{s,a,N}(Q)(s,a) }^2&=
\sup_{s,a}\Bigg|\sup_{\alpha\geq 0}\varbrac{f^{\rho_{TV}(\sigma)}(\widehat p_{s,a,2^{N+1}},\alpha,V)} 
        \\& \qquad-\frac{1}{2} \sup_{\alpha\geq 0}\varbrac{f^{\rho_{TV}(\sigma)}(\widehat p^E_{s,a,2^{N}},\alpha,V)}
    -\frac{1}{2}\sup_{\alpha\geq 0}\varbrac{f^{\rho_{TV}(\sigma)}(\widehat p^O_{s,a,2^{N}},\alpha,V )}\Bigg|^2
    \\& \leq  3\sup_{s,a}\lbrac{\sup_{\alpha\geq 0}\varbrac{f^{\rho_{TV}(\sigma)}(\widehat p_{s,a,2^{N+1}},\alpha,V)}-{\sup_{\alpha\geq 0}\varbrac{f^{\rho_{TV}(\sigma)}(p_{s,a},\alpha,V)} }  }^2
    \\&\quad+  \frac{3}{4}\sup_{s,a}\lbrac{ \sup_{\alpha\geq 0}\varbrac{f^{\rho_{TV}(\sigma)}( \widehat p^E_{s,a,2^N},\alpha,V)}-{\sup_{\alpha\geq 0}\varbrac{f^{\rho_{TV}(\sigma)}(p_{s,a},\alpha,V)} } }^2
    \\&\quad+\frac{3}{4}\sup_{s,a}\lbrac{\sup_{\alpha\geq 0}\varbrac{f^{\rho_{TV}(\sigma)}(\widehat p^O_{s,a,2^{N}},\alpha,V )}-{\sup_{\alpha\geq 0}\varbrac{f^{\rho_{TV}(\sigma)}( p_{s,a},\alpha,V)} }}^2
    .\label{eqeq152}
}

Then, combined with the analysis in \cref{eqeq44,eqeq45} and the fact $\mathbb P(A\cap B\cap C)\geq 1- \mathbb P(\neg A)-\mathbb P(\neg B)-\mathbb P(\neg C)$, we can conclude that for any $N\geq 0$, 
 with probability at least $1-3*2^{-N} $
\eqenv{
\sup_{s,a}\lbrac{\delta^{\rho_{TV}(\sigma)}_{s,a,N}(Q)(s,a) }^2
&\leq 3\brac{C_{TV}\frac{r_{\max}}{1-\gamma} 2^{-\frac{N+1}{2}}}^2 + \frac{3}{4}\brac{C_{TV}\frac{r_{\max}}{1-\gamma} 2^{-\frac{N}{2}} }^2+ \frac{3}{4}\brac{C_{TV}\frac{r_{\max}}{1-\gamma} 2^{-\frac{N}{2}} }^2
\\& \leq  3\frac{C^2_{TV} r^2_{\max} }{(1-\gamma)^2}2^{-(N+1)}, 
}
 Since $ 0 \leq \sup_{\alpha\geq 0}\varbrac{f^{\rho_{TV}(\sigma)}(q,\alpha,V)}\leq \frac{r_{\max}}{1-\gamma}$ for any distribution $q$, with probability at most $3*2^{-N} $ 
we have that
\eqenv{
\sup_{s,a} \lbrac{\delta^{\rho_{TV}(\sigma)}_{s,a,N}(Q)(s,a) }^2
 \leq \brac{ \frac{r_{\max}}{1-\gamma}}^2.
}

Above all, we can get that 
\eqenv{
\E\Fbrac{\sup_{s,a}\lbrac{\delta^{\rho_{TV}(\sigma)}_{s,a,N}(Q)(s,a) }^2 }&\leq 3\frac{C^2_{TV} r^2_{\max} }{(1-\gamma)^2}2^{-(N+1)}+ \brac{ \frac{r_{\max}}{1-\gamma}}^23*2^{-N}
\\&\leq \brac{3 C^2_{TV}+ 6}\brac{ \frac{r_{\max}}{1-\gamma}}^2 2^{-N-1}. 
}
Thus we have that
\eqenv{\label{eqeq155}
\E\Fbrac{\sup_{s,a}\frac{\lbrac{\delta^{\rho_{TV}(\sigma)}_{s,a,N}(Q)(s,a) }^2}{P_N} }= \brac{3 C^2_{TV}+ 6}\brac{ \frac{r_{\max}}{1-\gamma}}^2.
}

Then, plug \cref{eqeq155} in \cref{eqeq149}, we can get the bound of expectation of infinite norm as follows:
\eqenv{
\E\Fbrac{\sum_{N_2=0}^{N_{\max}}\sup_{s,a}\frac{\brac{
\delta^{\rho_{TV}(\sigma)}_{s,a,N_2}(Q)  }^2}{2^{-N_2-1}} }
\leq \sum_{N_2=0}^{N_{\max}}\brac{3 C^2_{TV}+ 6}\brac{ \frac{r_{\max}}{1-\gamma}}^2=\brac{N_{\max}+1}\brac{3 C^2_{TV}+ 6}\brac{ \frac{r_{\max}}{1-\gamma}}^2. 
}

Similarly, we can get the bound as follows:
\eqenv{
\E\Fbrac{\sum_{N_1=0}^{N_{\max}} \sup_{s,a}\frac{\brac{\delta^{r,\rho(\sigma)}_{s,a,N_1}
  }^2}{2^{-N_1-1}}}\leq \sum_{N_1=0}^{N_{\max}}\brac{3 C^2_{TV}+ 6} r_{\max}^2=\brac{N_{\max}+1}\brac{3 C^2_{TV}+ 6}r^2_{\max}. 
}

Hence, combining \Cref{eqeq149} and the above equations, we can get the robust Bellman operator infinite norm bound:
\eqenv{\label{eqeq62}
\E\Fbrac{\norminf{\widehat {\mathcal{T}}_{N_{\max}}^{\rho_{TV}(\sigma)}(Q)}^2 }\leq 
4\brac{1+\brac{N_{\max}+1}\brac{3 C^2_{TV}+ 6}}\brac{r^2_{\max}+\gamma^2\brac{ \frac{r_{\max}}{1-\gamma}}^2}.
}
   This completes the proof.  
\end{proof}

Next, we present the proof of \cref{thm2:tv}
 
\begin{theorem}[Restatement of \cref{thm2:tv}]
 Set $\psi=\frac{1}{2}$, and set the stepsize as $$\beta_t=\beta=\frac{2\log T}{(1-\gamma)T}. $$
Then the output from \cref{alg:example} satisfies that:
 \begin{align}
    \mathbb E &\Fbrac{\mynorm{\widehat Q_T^{\rho_{TV}(\sigma)}-Q^{*\rho_{TV}(\sigma)}}_\infty^2}\nonumber
    \leq\widetilde{\mathcal{O }}\brac{\frac{1}{  (1-\gamma)^5 T}}.
\end{align}
To obtain an $\epsilon$-optimal policy, i.e., 
\begin{align}
    \mathbb E \Fbrac{\mynorm{\widehat Q_T^{\rho_{TV}(\sigma)}-Q^{*,\rho_{TV}(\sigma)}}_\infty^2}\leq \epsilon^2,\nonumber
\end{align}
the expected sample complexity $N^{\rho_{TV}(\sigma)}(\epsilon)$ is
 \begin{align}
     N^{\rho_{TV}(\sigma)}(\epsilon) = |\mathcal{S}||\mathcal{A}|N_{\max} T\geq \widetilde{\mathcal{O }}\brac{\frac{|\mathcal{S}||\mathcal{A}|}{ (1-\gamma)^5 \epsilon^2 }}.\nonumber
 \end{align}
\end{theorem}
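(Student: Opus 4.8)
The plan is to control the two pieces of the decomposition in \cref{eqa20}, namely the finite-time convergence $\mynorm{\widehat Q_T^{\rho_{TV}(\sigma)}-\widehat Q^{*\rho_{TV}(\sigma)}}_\infty^2$ of the stochastic iteration to the surrogate fixed point, and the threshold-induced bias $\mynorm{\widehat Q^{*\rho_{TV}(\sigma)}-Q^{*\rho_{TV}(\sigma)}}_\infty^2$, and then to balance them through the choice of $N_{\max}$. The point of the whole argument is that $N_{\max}=\frac{2\log T}{\log 2}$ simultaneously forces the bias below order $1/T$ while contributing only a polylogarithmic factor to the variance, so neither piece inflates the target rate.

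For the first piece I would invoke \Cref{lm:chen} with $\mathcal H=\hatT$, iterates $\theta_t=\widehat Q_t^{\rho_{TV}(\sigma)}$, and martingale noise $W_t=\hatt(\widehat Q_t^{\rho_{TV}(\sigma)})-\hatT(\widehat Q_t^{\rho_{TV}(\sigma)})$. Its three hypotheses are precisely the already-established facts: the $\gamma$-contraction is \Cref{prop:contract}; the mean-zero property $\E[W_t\mid\mathcal F_t]=0$ is immediate from $\hatT=\E[\hatt]$; and the second-moment condition $\E[\norminf{\hatt(Q)}^2\mid\mathcal F_t]\le A+B\norminf{Q}^2$ follows from \Cref{lm:lminftv}, where the level-zero term $V(s'_{s,a,0})$ supplies the $B\norminf{Q}^2$ contribution with $B=O(1)$ and the rescaled increments supply $A=\widetilde{\mathcal O}(N_{\max}r_{\max}^2/(1-\gamma)^2)$. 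I would also verify the step-size admissibility $\beta\le c_2/c_3$, which holds for all large $T$ since $c_2/c_3=\Theta((1-\gamma)^2/\log(\ssa))$ while $\beta=\widetilde{\mathcal O}(1/((1-\gamma)T))$.

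Plugging the constant step size $\beta=\frac{2\log T}{(1-\gamma)T}$ into the conclusion of \Cref{lm:chen} splits into two contributions. The transient term $c_1\norminf{\theta_0-\theta^*}^2\prod_j(1-c_2\beta)$ is handled via $c_2\beta T=\log T$, so $(1-c_2\beta)^T\le e^{-\log T}=1/T$; using $\theta_0=0$ and $\norminf{\widehat Q^{*\rho_{TV}(\sigma)}}\le r_{\max}/(1-\gamma)$, it is $\widetilde{\mathcal O}(1/((1-\gamma)^2T))$. The stochastic term uses $\sum_i\beta^2\prod_{j>i}(1-c_2\beta)\le\beta/c_2$, and combined with $c_4=\Theta(\log(\ssa)/(1-\gamma))$, $c_2=(1-\gamma)/2$, and $A+2B\norminf{\theta^*}^2=\widetilde{\mathcal O}(N_{\max}/(1-\gamma)^2)$, it is $\widetilde{\mathcal O}(N_{\max}/((1-\gamma)^5T))$. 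Absorbing $N_{\max}$ into the polylog factor, both are $\widetilde{\mathcal O}(1/((1-\gamma)^5T))$.

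For the second piece I would apply \Cref{lm:a8} with the bias bound of \Cref{thm:tv1}, giving $\norminf{\widehat Q^{*\rho_{TV}(\sigma)}-Q^{*\rho_{TV}(\sigma)}}\le\frac{1}{1-\gamma}\widetilde{\mathcal O}(N_{\max}2^{-N_{\max}/2})$; the choice $N_{\max}=\frac{2\log T}{\log 2}$ makes $2^{-N_{\max}/2}=1/T$, so squared this is $\widetilde{\mathcal O}(1/((1-\gamma)^2T^2))$ and is dominated. Summing yields $\E[\mynorm{\widehat Q_T^{\rho_{TV}(\sigma)}-Q^{*\rho_{TV}(\sigma)}}_\infty^2]\le\widetilde{\mathcal O}(1/((1-\gamma)^5T))$; setting this $\le\epsilon^2$ gives $T=\widetilde{\mathcal O}(1/((1-\gamma)^5\epsilon^2))$. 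Finally, since $\E[\mathcal N_1+\mathcal N_2]=O(N_{\max})$ because $\E[2^{N+1}\mathbf 1_{N\le N_{\max}}]=\sum_{N=0}^{N_{\max}}\psi(1-\psi)^N2^{N+1}=N_{\max}+1$ for $\psi=\frac12$, the expected total sample count is $\ssa\cdot O(N_{\max})\cdot T=\widetilde{\mathcal O}(\ssa/((1-\gamma)^5\epsilon^2))$. I expect the main obstacle to be the bias--variance bookkeeping rather than any single inequality: the MLMC increments are divided by the exponentially small $P_N$, so the second-moment condition only closes because \Cref{lm:tvlm} forces those increments to concentrate at rate $2^{-N}$, and one must confirm that $N_{\max}=\Theta(\log T)$ keeps the variance at the polylogarithmic level $\widetilde{\mathcal O}(N_{\max})$ while driving the bias below $1/T$, so that the $(1-\gamma)^{-5}$ dependence is attained and not worsened.
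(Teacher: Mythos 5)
Your proposal follows the paper's proof essentially step for step: the same decomposition of \cref{eqa20} into convergence to the surrogate fixed point plus threshold-induced bias, the same invocation of \Cref{lm:chen} (with contraction from \Cref{prop:contract}, the mean-zero noise property, and the second-moment bound from \Cref{lm:lminftv}), the same transient/stochastic-term estimates $(1-c_2\beta)^T\le 1/T$ and $\sum_i\beta^2\prod_{j>i}(1-c_2\beta)\le \beta/c_2$ yielding $\widetilde{\mathcal{O}}\brac{N_{\max}/((1-\gamma)^5T)}$, the same bias control via \Cref{lm:a8} and \Cref{thm:tv1} under $N_{\max}=\frac{2\log T}{\log 2}$, and the same expected per-iteration sample count $N_{\max}+2$. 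The only cosmetic deviation is that you route the level-zero term through the $B\norminf{\theta_k}^2$ slot of the noise condition with $B=O(1)$, whereas the paper absorbs everything into the constant $A$ using the uniform bound $\norminf{Q}\le r_{\max}/(1-\gamma)$; both are valid instantiations of the same lemma and lead to the identical rate.
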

\begin{proof}
The update of our algorithm can be equivalently written as 
\eqenv{
\widehat Q^{\rho_{TV}(\sigma)}_{t+1}=\widehat Q^{\rho_{TV}(\sigma)}_{t}+ \beta_t\brac{\boldsymbol{\bar{\mathcal{T}}}^{\rho_{TV}(\sigma)}_{N_{\max}}\brac{Q^{\rho_{TV}(\sigma)}_{t}}- \widehat Q^{\rho_{TV}(\sigma)}_{t}+ W_t  },
}
where $W_t={\widehat{\mathcal{T}}}^{\rho_{TV}(\sigma)}_{N_{\max}}\brac{Q^{\rho_{TV}(\sigma)}_{t}}-\boldsymbol{\bar{\mathcal{T}}}^{\rho_{TV}(\sigma)}_{N_{\max}}\brac{Q^{\rho_{TV}(\sigma)}_{t}} $. 

Define the filtration $\mathcal{F}_t=\varbrac{Q^{\rho_{TV}(\sigma)}_0, W_0,...,Q^{\rho_{TV}(\sigma)}_{t-1},W_{t-1},Q^{\rho_{TV}(\sigma)}_{t} } $. 
Note that by definition we have that 
\eqenv{
\E\Fbrac{W_t|\mathcal{F}_t }=0,
}
and  by \cref{lm:lminftv}, we can get that 
\eqenv{
\E\Fbrac{\norminf{W_t}^2|\mathcal{F}_t }
&\leq {\E\Fbrac{\sup_{s,a}\lbrac{{\widehat{\mathcal{T}}}^{\rho_{TV}(\sigma)}_{N_{\max}}\brac{Q^{\rho_{TV}(\sigma)}_{t}}(s,a)-\boldsymbol{\bar{\mathcal{T}}}^{\rho_{TV}(\sigma)}_{N_{\max}}\brac{Q^{\rho_{TV}(\sigma)}_{t}}(s,a)}^2|\mathcal{F}_t }}
\\& \leq 2{\E\Fbrac{\sup_{s,a} \lbrac{{\widehat{\mathcal{T}}}^{\rho_{TV}(\sigma)}_{N_{\max}}\brac{Q^{\rho_{TV}(\sigma)}_{t}}(s,a)}^2+\sup_{s,a}\lbrac{\boldsymbol{\bar{\mathcal{T}}}^{\rho_{TV}(\sigma)}_{N_{\max}}\brac{Q^{\rho_{TV}(\sigma)}_{t}}(s,a)}^2|\mathcal{F}_t }}
\\&\myineq{\leq}{a} 4\E\Fbrac{\sup_{s,a}2\lbrac{{\widehat{\mathcal{T}}}^{\rho_{TV}(\sigma)}_{N_{\max}}\brac{Q^{\rho_{TV}(\sigma)}_{t}}(s,a)}^2|\mathcal{F}_t }
\\& \myineq{\leq}{b} 16\brac{1+\brac{N_{\max}+1}\brac{3 C^2_{TV}+ 6}}\brac{r^2_{\max}+\gamma^2\brac{ \frac{r_{\max}}{1-\gamma}}^2}, 
} where $(a) $ follows from that
\eqenv{
\E\Fbrac{\sup_{s,a}\lbrac{\boldsymbol{\bar{\mathcal{T}}}^{\rho_{TV}(\sigma)}_{N_{\max}}\brac{Q^{\rho_{TV}(\sigma)}_{t}}(s,a)}^2|\mathcal{F}_t }&= 
\E\Fbrac{\sup_{s,a}\lbrac{\E\Fbrac{\widehat{\mathcal{T}}^{\rho_{TV}(\sigma)}_{N_{\max}}\brac{Q^{\rho_{TV}(\sigma)}_{t}}(s,a)}}^2\Big|\mathcal{F}_t }
\\& \leq \E\Fbrac{\sup_{s,a}\E\Fbrac{\lbrac{\widehat{\mathcal{T}}^{\rho_{TV}(\sigma)}_{N_{\max}}\brac{Q^{\rho_{TV}(\sigma)}_{t}}(s,a)}^2}\Big|\mathcal{F}_t }
\\& \leq \E\Fbrac{\sup_{s,a} {\lbrac{\widehat{\mathcal{T}}^{\rho_{TV}(\sigma)}_{N_{\max}}\brac{Q^{\rho_{TV}(\sigma)}_{t}}(s,a)}^2}\Big|\mathcal{F}_t }, 
}
and $(b)$ follows from \Cref{lm:lminftv}.

According to \cref{eq:fixp}, we have that
$$\widehat Q^{*\rho_{TV}(\sigma)}(s,a) = \hatT (\widehat Q^{*\rho_{TV}(\sigma)})(s,a)=\E\Fbrac{\hatt (\widehat Q^{*\rho_{TV}(\sigma)})(s,a)}.$$
Then, to apply \cref{lm:chen} \citep{chen2020finite}, we set the constant stepsize $\beta_t=\beta= \frac{2\log T}{(1-\gamma)T}$. We note that as long as $T\geq  {\mathcal{O }}\bigg(\frac{\log T}{(1-\gamma)^3} \bigg)$, it holds that  
$$ \beta={\frac{2\log T}{(1-\gamma)T}\leq\frac{(1-\gamma)^2}{128 e  \log(|\mathcal{S}||\mathcal{A}|) }},$$ and the condition in \cref{lm:chen} are satisfied. We hence have that
\eqenv{\label{eqeq166}
&\E\Fbrac{\norminf{\widehat Q_{T}^{\rho_{TV}(\sigma)}-\widehat Q^{*\rho_{TV}(\sigma)} }^2}
\\ &\myineq{\leq}{i} \frac{3}{2} \norminf{\widehat Q_{0}^{\rho_{TV}(\sigma)}-\widehat Q^{*\rho_{TV}(\sigma)} }^2\prod_{j=0}^{T-1} \brac{1-\frac{1-\gamma}{2} \beta_t} +\frac{16 e \log (|\mathcal{S}||\mathcal{A}|)}{1-\gamma}16\brac{r^2_{\max}+\gamma^2\frac{r^2_{\max}}{(1-\gamma)^2} }\\& \quad\cdot\brac{1+\brac{N_{\max}+1}\brac{3 C^2_{TV}+ 6}}\sum_{i=0}^{T-1} \beta_i^2 \prod_{t=i+1}^{T-1} (1-
\frac{1-\gamma}{2}\beta_t)
\\& \myineq{\leq}{ii} \frac{3}{2}\frac{r^2_{\max}}{(1-\gamma)^2} \frac{1}{T} +\frac{16 e \log (|\mathcal{S}||\mathcal{A}|)}{1-\gamma}16\brac{r^2_{\max}+\gamma^2\frac{r^2_{\max}}{(1-\gamma)^2} }
\brac{1+\brac{N_{\max}+1}\brac{3 C^2_{TV}+ 6}}\frac{4 \log T }{(1-\gamma)^2 T}
,
} where $(i)$ follows from the \cref{lm:chen}. $(ii)$ follows from $(1-(1-\gamma)\beta/2)^T\leq \frac{1}{T}$. 

We set $N_{\max}=\frac{2\log T}{\log 2}$, then the bound of $\mathbb E \Fbrac{\mynorm{\widehat Q_T^{\rho_{TV}(\sigma)}-Q^{*\rho_{TV}(\sigma)}}_\infty^2}$ can be obtained as follows
 \eqenv{
 \mathbb E& \Fbrac{\mynorm{\widehat Q_T^{\rho_{TV}(\sigma)}-Q^{*\rho_{TV}(\sigma)}}_\infty^2}
\\& \leq 
 2\E\Fbrac{\mynorm{\widehat Q_T^{\rho_{TV}(\sigma)}-\widehat Q^{*\rho_{TV}(\sigma)}}_\infty^2 }+ 2\E\Fbrac{\mynorm{\widehat Q^{*\rho_{TV}(\sigma)}-Q^{*\rho_{TV}(\sigma)}}_\infty^2 }
 \\& \myineq{\leq}{i}
  \frac{32 e \log (|\mathcal{S}||\mathcal{A}|)}{1-\gamma}16\brac{r^2_{\max}+\gamma^2\frac{r^2_{\max}}{(1-\gamma)^2} }
\brac{1+\brac{N_{\max}+1}\brac{3 C^2_{TV}+ 6}}\frac{4 \log T }{(1-\gamma)^2 T}
\\& \qquad+\frac{2r^2_{\max}}{(1-\gamma)^2 T} +\frac{2}{1-\gamma}\brac{\brac{r_{\max}+\frac{r_{\max}}{1-\gamma}}2^{-\frac{N_{\max}+1}{2}}\brac{2^{-\frac{N_{\max}+1}{2}}+ C_{TV} }}^2
\\& \myineq{\leq}{ii}  \frac{32 e \log (|\mathcal{S}||\mathcal{A}|)}{1-\gamma}16\brac{r^2_{\max}+\gamma^2\frac{r^2_{\max}}{(1-\gamma)^2} }
\brac{1+\brac{N_{\max}+1}\brac{3 C^2_{TV}+ 6}}\frac{4 \log T }{(1-\gamma)^2 T}
\\& \qquad+\frac{2r^2_{\max}}{(1-\gamma)^2 T} +\frac{1}{1-\gamma}\brac{\brac{r_{\max}+\frac{r_{\max}}{1-\gamma}}\brac{1+ C_{TV} }}^2\frac{1}{T}
\\&= \widetilde{\mathcal{O }}\brac{\frac{1}{(1-\gamma)^5T}},
 }where $(i)$ follows from \cref{lm:a8,thm:tv1}. $(ii)$ follows from $2^{\frac{\log T}{\log 2}}\leq \frac{1}{T} $. 

When $\mathbb E \Fbrac{\mynorm{\widehat Q_T^{\rho_{TV}(\sigma)}-Q^{*,\rho_{TV}(\sigma)}}_\infty^2}\leq \epsilon^2 $, the iteration $T\geq \widetilde{\mathcal{O }}\brac{(1-\gamma)^{-5}\epsilon^{-2}}$. When $\psi=\frac{1}{2}$, the expected sample size per iteration is 
$$1+\sum_{n=0}^{N_{max}} 2^{n+1}P_N=1+ \sum_{n=0}^{N_{max}} 2^{n+1} \frac{1}{2^{n+1}} =N_{\max}+2=\frac{2\log T}{\log 2}+2.$$
Above all, the total sample complexity is $\widetilde{\mathcal{O }}\brac{|\mathcal{S}||\mathcal{A}|(1-\gamma)^{-5}\epsilon^{-2} }$.

This completes the proof.   
\end{proof}

\section{$\chi^2$ Divergence Uncertainty Set}
Similar to the proof in the TV part,  we can complete the proof. Here we show the detailed result.
\begin{theorem}[Restatement of \cref{thm:tv1} specifically for $\chi^2$ distance]
   Set $\psi=\frac{1}{2}$, then for any $Q\in \mathbb R^{\mathcal{S}\times\mathcal{A}}, s\in \mathcal{S}, a\in \mathcal{A} $, the estimation bias can be bounded as:
    \begin{align}
        \sup_{s,a}\lbrac{\mathbb E\Fbrac{\widehat {\mathcal{T}}_{N_{\max}}^{\rho_{\chi^2}(\sigma)} (Q)(s,a) } - {\mathcal{T}}^{\rho_{\chi^2}(\sigma)} (Q)(s,a)}
         \leq \widetilde{\mathcal{O }}\brac{{2^{-\frac{N_{\max}}{2}}} }. \nonumber
    \end{align}
    The variance can be bounded as:
    \begin{align}
        \text{Var}\brac{\widehat {\mathcal{T}}_{N_{\max}}^{\rho_{\chi^2}(\sigma)} (Q)(s,a)  }
\leq \widetilde{\mathcal{O }}\brac{{N_{\max}}}.
    \end{align}
\end{theorem}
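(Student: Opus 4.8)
The plan is to follow the total-variation argument line for line, substituting the $\chi^2$ dual objective $f^{\rho_{\chi^2}(\sigma)}(p,\alpha,V)=\E_{p}\Fbrac{(V)_\alpha}-\sqrt{\sigma\,\text{Var}_{p}\Fbrac{(V)_\alpha}}$ from \Cref{lm:chi} for the TV dual, and replacing \Cref{lm:tvlm} by its $\chi^2$ analogue. For the bias I would first apply \Cref{prop:mlmc} to rewrite $\E\Fbrac{\widehat{\mathcal T}^{\rho_{\chi^2}(\sigma)}_{N_{\max}}(Q)(s,a)}$ as the expectation of the model-based dual value built from $2^{N_{\max}+1}$ samples, split it into a reward term and a transition term, and push the supremum inside the expectation exactly as in \cref{eq:eq44}. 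This reduces the entire bias to controlling $\lbrac{f^{*\rho_{\chi^2}(\sigma)}(\widehat p_{s,a,n},V)-f^{*\rho_{\chi^2}(\sigma)}(p_{s,a},V)}$ with $n=2^{N_{\max}+1}$ and, by the same optimal-$\alpha$ envelope step used to pass to \cref{eqeq42}, to bounding $\max_{0\le\alpha\le\max_{s'}V(s')}\lbrac{f^{\rho_{\chi^2}(\sigma)}(\widehat p_{s,a,n},\alpha,V)-f^{\rho_{\chi^2}(\sigma)}(p_{s,a},\alpha,V)}$ uniformly in $\alpha$.

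The crux is a $\chi^2$ version of \Cref{lm:tvlm}. For fixed $\alpha$ the per-$\alpha$ difference splits into a mean part $\E_{\widehat p}\Fbrac{(V)_\alpha}-\E_{p}\Fbrac{(V)_\alpha}$ and a standard-deviation part $\sqrt{\sigma\,\text{Var}_{p}\Fbrac{(V)_\alpha}}-\sqrt{\sigma\,\text{Var}_{\widehat p}\Fbrac{(V)_\alpha}}$. The mean part is an average of bounded variables and concentrates at rate $1/\sqrt n$ by Hoeffding, uniformly over a fine net of $\alpha\in[0,\max_{s'}V(s')]$ through a union bound and the Lipschitzness of $\alpha\mapsto(V)_\alpha$. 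The standard-deviation part is where I expect the main obstacle: routing it through the empirical variance and the elementary inequality $\lbrac{\sqrt a-\sqrt b}\le\sqrt{\lbrac{a-b}}$ only delivers an $n^{-1/4}$ rate, which would degrade the bias to $2^{-N_{\max}/4}$. To recover the claimed $n^{-1/2}$ rate I would instead concentrate the empirical standard deviation $\sqrt{\text{Var}_{\widehat p}\Fbrac{(V)_\alpha}}$ directly by a bounded-differences (McDiarmid) argument: as a function of the $n$ i.i.d. samples it has bounded differences of order $r_{\max}/((1-\gamma)\sqrt n)$, hence concentrates around its mean at rate $1/\sqrt n$, while its mean differs from $\sqrt{\text{Var}_{p}\Fbrac{(V)_\alpha}}$ by $O(r_{\max}/((1-\gamma)\sqrt n))$, both with no lower bound imposed on the variance. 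This parallels Lemma 9 of \citep{shi2023curious}. Equipped with this $\chi^2$ concentration lemma, the high-probability versus low-probability split used in \cref{eqeq44,eqeq45} --- an $\widetilde{\mathcal{O}}(2^{-N_{\max}/2})$ error on an event of probability $1-2^{-N_{\max}-1}$ and the crude bound $r_{\max}/(1-\gamma)$ on its complement --- yields the stated bias after taking expectations.

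For the variance I would reproduce the TV computation verbatim. Since the reward and transition estimators are independent, $\text{Var}(\widehat{\mathcal T}^{\rho_{\chi^2}(\sigma)}_{N_{\max}}(Q))$ splits into the two corresponding variances, each dominated by a second moment as in \cref{eqeq1491}, i.e. $\E\Fbrac{(\widehat v^{\rho_{\chi^2}(\sigma)})^2}\le\frac{2r_{\max}^2}{(1-\gamma)^2}+2\sum_{N=0}^{N_{\max}}\E\Fbrac{(\delta^{\rho_{\chi^2}(\sigma)}_{s,a,N})^2}/P_N$. Decomposing each $\delta^{\rho_{\chi^2}(\sigma)}_{s,a,N}$ into the three dual differences against the true kernel as in \cref{eqeq1521} and invoking the same $\chi^2$ concentration lemma with the high/low-probability split gives $\E\Fbrac{(\delta^{\rho_{\chi^2}(\sigma)}_{s,a,N})^2}\le\widetilde{\mathcal{O}}(2^{-N})$; since $P_N=2^{-N-1}$, each summand is $\widetilde{\mathcal{O}}(1)$ and the sum over $N$ is $\widetilde{\mathcal{O}}(N_{\max})$, the claimed variance bound. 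The only genuinely new work beyond the TV case is therefore this $\chi^2$ concentration lemma together with its delicate square-root term; every other step is a transcription of the total-variation proof.
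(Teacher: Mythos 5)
Your outline reproduces the paper's proof almost step for step: the reduction via \Cref{prop:mlmc} to a model-based dual estimate with $2^{N_{\max}+1}$ samples, the envelope step pushing the optimal $\alpha$ into a uniform-in-$\alpha$ deviation, the high-probability/low-probability split with the crude fallback bound $(1+\sqrt{\sigma})\frac{r_{\max}}{1-\gamma}$, and, for the variance, the second-moment bound, the three-term decomposition of $\delta^{\rho_{\chi^2}(\sigma)}_{s,a,N}$, and the cancellation against $P_N=2^{-N-1}$ yielding $\widetilde{\mathcal{O}}(N_{\max})$. The genuine gap is in the one ingredient you correctly identify as new: your McDiarmid argument for the standard-deviation term does not deliver the claimed $n^{-1/2}$ rate. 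The bounded differences of $\sqrt{\text{Var}_{\widehat p}\Fbrac{(V)_\alpha}}$ are indeed of order $M/\sqrt{n}$ with $M=\frac{r_{\max}}{1-\gamma}$ (worst case: the empirical variance moves from $0$ to $\Theta(M^2/n)$ when one sample changes), but McDiarmid's inequality then bounds the deviation by $\sqrt{\tfrac{1}{2}\sum_i c_i^2\log(2/\delta)}$, and with $n$ coordinates each contributing $c_i^2=M^2/n$ the variance proxy is $M^2$, giving a deviation of order $M\sqrt{\log(1/\delta)}$ --- constant in $n$, not $M\sqrt{\log(1/\delta)/n}$. The inference ``bounded differences $M/\sqrt{n}$, hence concentration at rate $1/\sqrt{n}$'' is off by a factor $\sqrt{n}$. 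To recover the true rate one must exploit that $\sqrt{n\,\text{Var}_{\widehat p}}$ is (up to centering) the Euclidean norm of a projected sample vector, hence convex and $1$-Lipschitz in $\ell_2$, and apply Talagrand's convex-distance inequality or a Maurer--Pontil-type self-bounding/entropy-method argument; this is precisely what Lemma 6 of \citep{panaganti2022sample} packages, and it is what the paper invokes inside its proof of \Cref{lm:chi2lm} (following Lemma 15 of \citep{shi2023curious}). With that lemma substituted for your McDiarmid step, the rest of your plan goes through and matches the paper.

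A secondary omission: your union bound over the $\alpha$-net appeals to Lipschitzness of $\alpha\mapsto(V)_\alpha$, which handles the mean term, but the standard-deviation term is only H\"older-$\tfrac{1}{2}$ in $\alpha$ --- the paper's \cref{eq:eq134} establishes the modulus $4\sqrt{r_{\max}\lbrac{\alpha_1-\alpha_2}/(1-\gamma)}$ --- so the net resolution for that term must be chosen against the square-root modulus (the paper takes $\epsilon_2$ of order $\frac{r_{\max}\log(\lbrac{N_{\epsilon_2}}/\delta)}{N(1-\gamma)}$ so that $\sqrt{r_{\max}\epsilon_2/(1-\gamma)}$ matches the target rate). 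As written, your proposal does not address uniformity in $\alpha$ for the standard-deviation part at all; it is routine once noticed, but it is a step that would fail silently if the Lipschitz net were reused verbatim.
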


\begin{proof}
Firstly, we make error decomposition as follows:
    \eqenv{
    &\sup_{s,a}\lbrac{ \E\Fbrac{\widehat {\mathcal{T}}_{N_{\max}}^{\rho_{\chi^2}(\sigma)} (Q)(s,a) } - {\mathcal{T}}^{\rho_{\chi^2}(\sigma)} (Q)(s,a)}
   \\ &\myineq{=}{i} \sup_{s,a}\bigg|\E\Fbrac{g^{*\rho_{\chi^2}(\sigma)}(\hat \mu_{s,a,2^{N_{\max}+1}},r_{s,a})}+\gamma\E\Fbrac{f^{*\rho_{\chi^2}(\sigma)}(\hat p_{s,a,2^{N_{\max}+1}},V )}
   \\& \qquad- {g^{*\rho_{\chi^2}(\sigma)}(\mu_{s,a},r_{s,a})}
- \gamma  {f^{*\rho_{\chi^2}(\sigma)}(p_{s,a},V )} \bigg|
\\& \leq \sup_{s,a}\lbrac{\E\Fbrac{g^{*\rho_{\chi^2}(\sigma)}(\hat \mu_{s,a,2^{N_{\max}+1}},r_{s,a})}- {g^{*\rho_{\chi^2}(\sigma)}(\mu_{s,a},r_{s,a})} }
\\& \qquad+\gamma\sup_{s,a}\lbrac{\E\Fbrac{f^{*\rho_{\chi^2}(\sigma)}(\hat p_{s,a,2^{N_{\max}+1}},V )}-{f^{*\rho_{\chi^2}(\sigma)}(p_{s,a},V)} }\\& \leq \E\Fbrac{\sup_{s,a}\lbrac{ {g^{*\rho_{\chi^2}(\sigma)}(\hat \mu_{s,a,2^{N_{\max}+1}},r_{s,a})}- {g^{*\rho_{\chi^2}(\sigma)}(\mu_{s,a},r_{s,a})} } }\\& \qquad+\gamma \E\Fbrac{ \sup_{s,a}\lbrac{{f^{*\rho_{\chi^2}(\sigma)}(\hat p_{s,a,2^{N_{\max}+1}},V )}-{f^{*\rho_{\chi^2}(\sigma)}(p_{s,a},V)} }}, \label{eq:eqc44}
    }where $(i)$ follows from \cref{prop:mlmc}. 

    Then, for convenience, we bound the second term in \cref{eq:eqc44}. The first term can be bounded similarly.  By \cref{lm:tv}, 
    \eqenv{\label{eqeq142}
    &\lbrac{ {f^{*\rho_{\chi^2}(\sigma)}(\hat p_{s,a,2^{N_{\max}+1}},V )}-{f^{*\rho_{\chi^2}(\sigma)}(p_{s,a},V)} }
    \\&= \Bigg|\max_{\alpha\geq 0}\varbrac{ \mathbb E_{p_{s,a}}\Fbrac{(V(s'_{s,a}))_\alpha}-\sqrt{\sigma \text{Var}_{p_{s,a}}\Fbrac{(V(s'_{s,a}))_\alpha } } }\\&\qquad\qquad-\E\Fbrac{\max_{\alpha\geq 0}\varbrac{ \mathbb E_{\hat p_{s,a,2^{N_{\max}+1}}}\Fbrac{(V(s'_{s,a}))_\alpha}-\sqrt{\sigma \text{Var}_{\hat p_{s,a,2^{N_{\max}+1}}}\Fbrac{(V'(s'_{s,a}))_\alpha } }  }}\Bigg|
    \\& \leq \E\bigg[\max_{\max_{s'_{s,a}} V(s'_{s,a}) \geq \alpha\geq 0}\bigg\{\Bigg|\mathbb E_{p_{s,a}}\Fbrac{(V(s'_{s,a}))_\alpha}-E_{\hat p_{s,a,2^{N_{\max}+1}}}\Fbrac{(V(s'_{s,a}))_\alpha}
    \\& \qquad+ \sqrt{\sigma \text{Var}_{p_{s,a}}\Fbrac{(V(s'_{s,a}))_\alpha } } -\sqrt{\sigma \text{Var}_{\hat p_{s,a,2^{N_{\max}+1}}}\Fbrac{(V(s'_{s,a}))_\alpha } }\Bigg|\bigg\}\bigg].
    }

    According to Lemma 15 and its proof in \citep{shi2023curious}, we have the following lemma.

\begin{lemma}\label{lm:chi2lm}
    Consider the case of $\chi^2$ constraint uncertainty set $\mathcal{P}^{ {\chi^2}}(\sigma)$ with uncertainty level $\sigma$, for any $\delta\in(0,1)$, one has with probability at least $1-\delta$, 
    \eqenv{
   \max_{0\leq \alpha\leq \max_{s'_{s,a}} V(s'_{s,a}) }& \Bigg|\mathbb E_{p_{s,a}}\Fbrac{(V(s'_{s,a}))_\alpha}-\sqrt{\sigma \text{Var}_{p_{s,a}}\Fbrac{(V(s'_{s,a}))_\alpha } } 
   \\&-E_{\hat p_{s,a,N}}\Fbrac{(V(s'_{s,a}))_\alpha}+\sqrt{\sigma \text{Var}_{\hat p_{s,a,2^{N_{\max}+1}}}\Fbrac{(V'(s'_{s,a}))_\alpha } }\Bigg|
   \leq 4\sqrt{\frac{2 r^2_{\max} (1+\sigma)\log \brac{\frac{24 N}{\delta}}}{(1-\gamma)^2N}}.
    }
\end{lemma}
According to \Cref{lm:chi2lm}, we can get that with probability at least $1-\frac{2^{-{N_{\max}-1}}}{\ssa} $, we have 
    \eqenv{\label{eqeq2721}
    &\lbrac{ {f^{*\rho_{\chi^2}(\sigma)}(\hat p_{s,a,2^{N_{\max}+1}},V )}-{f^{*\rho_{\chi^2}(\sigma)}(p_{s,a},V)} }
    \\&\qquad\qquad\leq 4 r_{\max}\sqrt{\frac{2 (1+\sigma) \brac{\log \brac{24|\mathcal{S}||\mathcal{A}|}+ {{2(N_{\max}+1)}}\log 2}}{(1-\gamma)^2 2^{{N_{\max}+1}}}}= C_{\chi^2} \frac{r_{\max}}{1-\gamma}2^{-\frac{N_{\max}+1}{2}},
    } where $C_{\chi^2}=4\sqrt{2 (1+\sigma) \brac{\log \brac{24|\mathcal{S}||\mathcal{A}|}+ {{2(N_{\max}+1)}}\log 2}} $.

    Then, according to the Bernoulli's inequality, we have that 
\eqenv{
\brac{1-\frac{2^{-N_{\max}-1}}{\ssa} }^\ssa\geq 1- 2^{-N_{\max}-1}. 
}

Therefore, with probability at least $1-{2^{-{N_{\max}-1}}} $, there exists
\eqenv{\label{eqc280}
\sup_{s,a} \lbrac{ {f^{*\rho_{\chi^2}(\sigma)}(\hat p_{s,a,2^{N_{\max}+1}},V )}-{f^{*\rho_{\chi^2}(\sigma)}(p_{s,a},V)} }\leq C_{\chi^2} \frac{r_{\max}}{1-\gamma}2^{-\frac{N_{\max}+1}{2}}.
}

     Otherwise, we have 
    \eqenv{\label{eqc281}
    \sup_{s,a}&\bigg|\max_{\max_{s'_{s,a}} V(s'_{s,a}) \geq \alpha\geq 0}\bigg\{\lbrac{\mathbb E_{p_{s,a}}\Fbrac{(V(s'_{s,a}))_\alpha}-E_{\hat p_{s,a,2^{N_{\max}+1}}}\Fbrac{(V(s'_{s,a}))_\alpha}}
    \\& \qquad+ \lbrac{\sqrt{\sigma \text{Var}_{p_{s,a}}\Fbrac{(V(s'_{s,a}))_\alpha } } -\sqrt{\sigma \text{Var}_{\hat p_{s,a,2^{N_{\max}+1}}}\Fbrac{(V'(s'_{s,a}))_\alpha } }}\bigg\}\bigg|
    \\ 
    &\leq \sup_{s,a}\max_{s'_{s,a}} V(s'_{s,a})+ \sup_{s,a}\lbrac{\max_{0\leq \alpha\leq \max_{s'_{s,a}} V(s'_{s,a}) }\lbrac{\sqrt{\sigma \text{Var}_{p_{s,a}}\Fbrac{(V(s'_{s,a}))_\alpha } } -\sqrt{\sigma \text{Var}_{\hat p_{s,a,2^{N_{\max}+1}}}\Fbrac{(V'(s'_{s,a}))_\alpha } }}}
    \\& \leq \frac{r_{\max}}{1-\gamma}+ \sup_{s,a}\sqrt{\sigma \brac{\max_{s'_{s,a}} V(s'_{s,a}) }^2}
    \\& \leq (1+\sqrt{\sigma})\frac{r_{\max}}{1-\gamma}. 
    }
Plugging the above equations to \cref{eqeq272}, we can conclude that
    \eqenv{
    \E\Fbrac{\sup_{s,a}\lbrac{{f^{*\rho_{\chi^2}(\sigma)}(\hat p_{s,a,2^{N_{\max}+1}},V)}-{f^{*\rho_{\chi^2}(\sigma)}(p_{s,a},V)} }}
     &\myineq{\leq}{i} C_{\chi^2} \frac{r_{\max}}{1-\gamma}2^{-\frac{N_{\max}+1}{2}} + (1+\sqrt{\sigma})\frac{r_{\max}}{1-\gamma}2^{-\brac{N_{\max}+1}} 
    \\& {\leq} \frac{r_{\max}}{1-\gamma}2^{-\frac{N_{\max}+1}{2}}\brac{(1+\sqrt{\sigma})2^{-\frac{N_{\max}+1}{2}}+ C_{\chi^2}},
    }where $(i)$ follows from that $1-2^{-\brac{N_{\max}+1}}\leq 1$. 

    Similarly, we can get the bound
    \eqenv{
    \E\Fbrac{\sup_{s,a}\lbrac{{g^{*\rho_{\chi^2}(\sigma)}(\hat \mu_{s,a,2^{N_{\max}+1}},r_{s,a})}- {g^{*\rho_{\chi^2}(\sigma)}(\mu_{s,a},r_{s,a})} }}
    \leq r_{\max}2^{-\frac{N_{\max}+1}{2}}\brac{(1+\sqrt{\sigma})2^{-\frac{N_{\max}+1}{2}}+ 3C_{\chi^2} }.
    }

    Thus, we can get that
    \eqenv{
    \sup_{s,a}&\lbrac{ \E\Fbrac{\widehat {\mathcal{T}}_{N_{\max}}^{\rho_{\chi^2}(\sigma)} (Q)(s,a) } - {\mathcal{T}}^{\rho_{\chi^2}(\sigma)} (Q)(s,a)}
\\& \leq \E\Fbrac{\sup_{s,a}\lbrac{ {g^{*\rho_{\chi^2}(\sigma)}(\hat \mu_{s,a,2^{N_{\max}+1}},r_{s,a})}- {g^{*\rho_{\chi^2}(\sigma)}(\mu_{s,a},r_{s,a})} } }
\\& \qquad +\gamma \E\Fbrac{\sup_{s,a} \lbrac{{f^{*\rho_{\chi^2}(\sigma)}(\hat p_{s,a,2^{N_{\max}+1}},V )}-{f^{*\rho_{\chi^2}(\sigma)}(p_{s,a},V)} }}
\\& \leq \brac{\frac{\gamma r_{\max}}{1-\gamma}+r_{\max} }2^{-\frac{N_{\max}+1}{2}}\brac{(1+\sqrt{\sigma})2^{-\frac{N_{\max}+1}{2}}+ C_{\chi^2} }.
    }
        


\textbf{Variance: }Next, we consider the variance of the robust Bellman operator. Firstly, we make error decomposition of the robust Bellman operator variance. 
\eqenv{
\text{Var}\brac{\widehat {\mathcal{T}}_{N_{\max}}^{\rho_{\chi^2}(\sigma)} (Q)(s,a) }
&= \text{Var}\brac{\widehat  r^{\rho_{\chi^2}(\sigma)}+ \gamma \widehat{v}^{\rho_{\chi^2}(\sigma)}(Q)(s,a) }
\\&= \text{Var}\brac{\widehat  r^{\rho_{\chi^2}(\sigma)}}
+\gamma^2 \Varr{\widehat{v}^{\rho_{\chi^2}(\sigma)}(Q)(s,a)}.
}
For convenience, we analyze the second term in the above equation. The first term can be bounded similarly. 
\eqenv{
\Varr{\widehat{v}^{\rho_{\chi^2}(\sigma)}(Q)(s,a)}= 
\E\Fbrac{\brac{\widehat{v}^{\rho_{\chi^2}(\sigma)}(Q)(s,a)}^2  }-\brac{\E\Fbrac{\widehat{v}^{\rho_{\chi^2}(\sigma)}(Q)(s,a) }}^2
\leq \E\Fbrac{\brac{\widehat{v}^{\rho_{\chi^2}(\sigma)}(Q)(s,a)}^2  }.
}
Next, according to the \cref{eq:delta,eq:hatq}, now we compute the expectation of $N_2$ and write a detailed explanation of the variance as follows:
\eqenv{\label{eqeq49}
\E\Fbrac{\brac{\widehat{v}^{\rho_{\chi^2}(\sigma)}(Q)(s,a)}^2  }& = \E\Fbrac{\brac{V(s'_{s,a,0})+ \frac{\delta^{\rho_{\chi^2}(\sigma)}_{s,a,N_2}(Q)(s,a) }{P_{N_2}} }^2}
\\& \leq 2 \E\Fbrac{V(s'_{s,a,0})^2 }+ 2\E \Fbrac{ \brac{\frac{\delta^{\rho_{\chi^2}(\sigma)}_{s,a,N_2}(Q)(s,a) }{P_{N_2}} }^2 }
\\& \leq \frac{2r^2_{\max}}{(1-\gamma)^2}+ 2\sum_{N=0}^{N_{\max}} \E\Fbrac{\brac{\frac{\delta^{\rho_{\chi^2}(\sigma)}_{s,a,N_2}(Q)(s,a) }{P_{N_2}}|N_2=N }^2} P_N
\\& \leq \frac{2r^2_{\max}}{(1-\gamma)^2}+ 2\sum_{N=0}^{N_{\max}} \frac{\E\Fbrac{(\delta^{\rho_{\chi^2}(\sigma)}_{s,a,N}(Q)(s,a))^2} }{P_{N}}.
}
Next, we bound the term $\lbrac{\delta^{\rho_{\chi^2}(\sigma)}_{s,a,N}(Q)(s,a) } $,
\eqenv{
&\lbrac{\delta^{\rho_{\chi^2}(\sigma)}_{s,a,N}(Q)(s,a) }
\\&=
\lbrac{\sup_{\alpha\geq 0}\varbrac{f^{\rho_{\chi^2}(\sigma)}(\widehat p_{s,a,2^{N+1}},\alpha,V)} 
     -\frac{1}{2} \sup_{\alpha\geq 0}\varbrac{f^{\rho_{\chi^2}(\sigma)}(\widehat p^E_{s,a,2^{N}},\alpha,V)}
    -\frac{1}{2}\sup_{\alpha\geq 0}\varbrac{f^{\rho_{\chi^2}(\sigma)}(\widehat p^O_{s,a,2^{N}},\alpha,V )}}. 
}




We make an error decomposition as follows:
\eqenv{
&\lbrac{\delta^{\rho_{\chi^2}(\sigma)}_{s,a,N}(Q)(s,a) }^2\\&=
\lbrac{\sup_{\alpha\geq 0}\varbrac{f^{\rho_{\chi^2}(\sigma)}(\widehat p_{s,a,2^{N+1}},\alpha,V)} 
     -\frac{1}{2} \sup_{\alpha\geq 0}\varbrac{f^{\rho_{\chi^2}(\sigma)}(\widehat p^E_{s,a,2^{N}},\alpha,V)}
    -\frac{1}{2}\sup_{\alpha\geq 0}\varbrac{f^{\rho_{\chi^2}(\sigma)}(\widehat p^O_{s,a,2^{N}},\alpha,V )}}^2
    \\& \leq 3\lbrac{\sup_{\alpha\geq 0}\varbrac{f^{\rho_{\chi^2}(\sigma)}(\widehat p_{s,a,2^{N+1}},\alpha,V)}-{\sup_{\alpha\geq 0}\varbrac{f^{\rho_{\chi^2}(\sigma)}( p_{s,a},\alpha,V)} }  }^2
    \\&\quad+  \frac{3}{4}\lbrac{ \sup_{\alpha\geq 0}\varbrac{f^{\rho_{\chi^2}(\sigma)}(\widehat p^E_{s,a,2^{N}},\alpha,V)}-{\sup_{\alpha\geq 0}\varbrac{f^{\rho_{\chi^2}(\sigma)}( p_{s,a},\alpha,V)} } }^2
    \\&\quad+\frac{3}{4}\lbrac{\sup_{\alpha\geq 0}\varbrac{f^{\rho_{\chi^2}(\sigma)}(\widehat p^O_{s,a,2^{N}},\alpha,V )}-{\sup_{\alpha\geq 0}\varbrac{f^{\rho_{\chi^2}(\sigma)}( p_{s,a},\alpha,V)} }}^2.\label{eqeq52}
}

According to \Cref{lm:chi2lm}, we can get that with probability $1-2^{-{N_{\max}-1}} $, we have 
    \eqenv{\label{eqeq272}
    &\lbrac{ {f^{*\rho_{\chi^2}(\sigma)}(\hat p_{s,a,2^{N_{\max}+1}},V )}-{f^{*\rho_{\chi^2}(\sigma)}(p_{s,a},V)} }
    \\&\qquad\qquad\leq 4 \sqrt{\frac{2r_{\max}^2 (1+\sigma) \brac{\log \brac{24|\mathcal{S}||\mathcal{A}|}+ {{2(N_{\max}+1)}}\log 2}}{(1-\gamma)^2 2^{{N_{\max}+1}}}}= C_{\chi^2} \frac{r_{\max}}{1-\gamma}2^{-\frac{N_{\max}+1}{2}},
    } where $C_{\chi^2}=4\sqrt{2 (1+\sigma) \brac{\log \brac{24|\mathcal{S}||\mathcal{A}|}+ {{2(N_{\max}+1)}}\log 2}} $.
     Otherwise, we have 
    \eqenv{\label{eqeq273}
    \E&\bigg[\max_{\max_{s'_{s,a}} V(s'_{s,a}) \geq \alpha\geq 0}\bigg\{\lbrac{\mathbb E_{p_{s,a}}\Fbrac{(V(s'_{s,a}))_\alpha}-E_{\hat p_{s,a,2^{N_{\max}+1}}}\Fbrac{(V(s'_{s,a}))_\alpha}}
    \\& \qquad+ \lbrac{\sqrt{\sigma \text{Var}_{p_{s,a}}\Fbrac{(V(s'_{s,a}))_\alpha } } -\sqrt{\sigma \text{Var}_{\hat p_{s,a,2^{N_{\max}+1}}}\Fbrac{(V'(s'_{s,a}))_\alpha } }}\bigg\}\bigg]
    \\ 
    &\leq \max_{s'_{s,a}} V(s'_{s,a})+ \max_{0\leq \alpha\leq \max_{s'_{s,a}} V(s'_{s,a}) }\lbrac{\sqrt{\sigma \text{Var}_{p_{s,a}}\Fbrac{(V(s'_{s,a}))_\alpha } } -\sqrt{\sigma \text{Var}_{\hat p_{s,a,2^{N_{\max}+1}}}\Fbrac{(V'(s'_{s,a}))_\alpha } }}
    \\& \leq \max_{s'_{s,a}} V(s'_{s,a})+ \sqrt{\sigma \brac{\max_{s'_{s,a}} V(s'_{s,a}) }^2}
    \\& \leq (1+\sqrt{\sigma})\frac{r_{\max}}{1-\gamma}. 
    }

Then, combined with the analysis in \cref{eqeq272,eqeq273} and the fact that for any events $A,B,C$, $\mathbb P(A\cap B\cap C)\geq 1- \mathbb P(\neg A)-\mathbb P(\neg B)-\mathbb P(\neg C)$, we can conclude that
 with probability at least $1-3*2^{-N} $
\eqenv{
\lbrac{\delta^{\rho_{\chi^2}(\sigma)}_{s,a,N}(Q)(s,a) }^2
&\leq 3\brac{C_{\chi^2}\frac{r_{\max}}{1-\gamma} 2^{-\frac{N+1}{2}}}^2 + \frac{3}{4}\brac{C_{\chi^2}\frac{r_{\max}}{1-\gamma} 2^{-\frac{N}{2}} }^2+ \frac{3}{4}\brac{C_{\chi^2}\frac{r_{\max}}{1-\gamma} 2^{-\frac{N}{2}} }^2
\\&  = 3\frac{C^2_{\chi^2} r^2_{\max} }{(1-\gamma)^2}2^{-(N+1)}, 
}
 Since $ 0 \leq \sup_{\alpha\geq 0}\varbrac{f^{\rho_{\chi^2}(\sigma)}(q,\alpha,V)}\leq (1+\sqrt{\sigma})\frac{r_{\max}}{1-\gamma}$ for any distribution $q$, with probability at most $3*2^{-N} $ 
we have that
\eqenv{
 \lbrac{\delta^{\rho_{\chi^2}(\sigma)}_{s,a,N}(Q)(s,a) }^2
 \leq \brac{(1+\sqrt{\sigma}) \frac{r_{\max}}{1-\gamma}}^2.
}

Above all, we can get that 
\eqenv{
\E\Fbrac{\lbrac{\delta^{\rho_{\chi^2}(\sigma)}_{s,a,N}(Q)(s,a) }^2 }\leq \frac{9}{2}\frac{C^2_{\chi^2} r^2_{\max} }{(1-\gamma)^2}2^{-(N+1)}+ \brac{ \frac{r_{\max}}{1-\gamma}}^23*2^{-N}
\leq \brac{3 C^2_{\chi^2}+ 6(1+\sqrt{\sigma})^2}\brac{ \frac{r_{\max}}{1-\gamma}}^2 2^{-N-1}. \label{eqeq55}
}
Then, plug \cref{eqeq55} in \cref{eqeq49}, we can get the boundary of variance of robust Bellman operator as follows:
\eqenv{
\Varr{\widehat{v}^{\rho_{\chi^2}(\sigma)}(Q)(s,a)}&\leq \frac{2r^2_{\max}}{(1-\gamma)^2}+ 2\sum_{N=0}^{N_{\max}} \frac{\E\Fbrac{(\delta^{\rho_{\chi^2}(\sigma)}_{s,a,N}(Q)(s,a))^2} }{P_{N}}
\\&\leq \frac{2r^2_{\max}}{(1-\gamma)^2}+ \frac{2r^2_{\max}}{(1-\gamma)^2}\brac{3 C^2_{\chi^2}+ 6(1+\sqrt{\sigma})^2}\sum_{N=0}^{N_{\max}} \frac{2^{-N-1} }{P_{N}}
\\& \myineq{=}{a} \frac{2r^2_{\max}}{(1-\gamma)^2}\brac{1+\brac{3 C^2_{\chi^2}+ 6(1+\sqrt{\sigma})^2}(N_{\max}+1) }, 
} where $(a)$ follows from that $P_N=\psi(1-\psi)^N=2^{-N-1}$. 

Similarly, we can get the bound of the variance $\text{Var}\brac{\widehat  r^{\rho_{\chi^2}(\sigma)}}$ as follows:
\eqenv{
\text{Var}\brac{\widehat  r^{\rho_{\chi^2}(\sigma)}}\leq 
 2r^2_{\max}\brac{1+\brac{3 C^2_{\chi^2}+ 6(1+\sqrt{\sigma})^2}(N_{\max}+1) }.
}

Hence, we can get the robust Bellman operator variance bound:
\eqenv{\label{eqeq162}
\text{Var}\brac{\widehat {\mathcal{T}}_{N_{\max}}^{\rho_{\chi^2}(\sigma)} (Q)(s,a) }
&= \text{Var}\brac{\widehat  r^{\rho_{\chi^2}(\sigma)}}
+\gamma^2 \Varr{\widehat{v}^{\rho_{\chi^2}(\sigma)}(Q)(s,a)}
\\&\leq \brac{2r^2_{\max}+\gamma^2\frac{2r^2_{\max}}{(1-\gamma)^2} }\brac{1+\brac{3 C^2_{\chi^2}+ 6(1+\sqrt{\sigma})^2}(N_{\max}+1) }.
}
   This completes the proof.  
\end{proof}

\begin{lemma}\label{lm:lminfchi2}
    For any fixed $Q\in \mathbb R^{|\mathcal{S}||\mathcal{A}|}, s\in \mathcal{S}, a\in \mathcal{A} $, the infinite norm of robust Bellman operator can be bounded as:
    \begin{align}
        \E\Fbrac{\norminf{\widehat {\mathcal{T}}^{\rho_{\chi^2}(\sigma)}_{N_{\max}} (Q)(s,a) }^2 }\leq \widetilde{\mathcal{O }}\brac{{N_{\max}}}.
        \end{align}
\end{lemma}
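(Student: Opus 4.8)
The plan is to reproduce the argument of \Cref{lm:lminftv} line for line, replacing the TV concentration bound and dual-function range by their $\chi^2$ counterparts, since essentially every ingredient was already assembled during the $\chi^2$ variance proof. First I would start from the construction of the T-MLMC operator and expand exactly as in \cref{eqeq149}, obtaining
\begin{align}
\E\Fbrac{\norminf{\widehat {\mathcal{T}}_{N_{\max}}^{\rho_{\chi^2}(\sigma)}(Q)}^2 }\leq
 4r^2_{\max}+4\gamma^2\frac{r^2_{\max}}{(1-\gamma)^2}+4\E\Fbrac{\sum_{N_1=0}^{N_{\max}} \sup_{s,a}\frac{\brac{\delta^{r,\rho(\sigma)}_{s,a,N_1}}^2}{2^{-N_1-1}}+\gamma^2\sum_{N_2=0}^{N_{\max}}\sup_{s,a}\frac{\brac{\delta^{\rho_{\chi^2}(\sigma)}_{s,a,N_2}(Q)}^2}{2^{-N_2-1}} },
\end{align}
so that the whole problem reduces to controlling $\E\Fbrac{\sup_{s,a}(\delta^{\rho_{\chi^2}(\sigma)}_{s,a,N}(Q))^2}$ for each fixed level $N$; the reward term is handled identically and contributes the same order.

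Second, I would decompose $\sup_{s,a}(\delta^{\rho_{\chi^2}(\sigma)}_{s,a,N}(Q))^2$ via the elementary inequality $(x-\tfrac12 y-\tfrac12 z)^2\le 3x^2+\tfrac34 y^2+\tfrac34 z^2$ into three supremum-differences of $f^{*\rho_{\chi^2}(\sigma)}$ between the empirical kernels $\widehat p_{s,a,2^{N+1}}$, $\widehat p^E_{s,a,2^{N}}$, $\widehat p^O_{s,a,2^{N}}$ and the nominal $p_{s,a}$, exactly the structure of \cref{eqeq52} but now taken inside $\sup_{s,a}$. I would then invoke the uniform $\chi^2$ concentration already established in \cref{eqc280} (with the $(s,a)$-union bound absorbed into $C_{\chi^2}$ through the $\log(24|\mathcal{S}||\mathcal{A}|)$ term) together with the bound $\mathbb P(A\cap B\cap C)\ge 1-\mathbb P(\neg A)-\mathbb P(\neg B)-\mathbb P(\neg C)$ over the three empirical distributions, concluding that with probability at least $1-3\cdot 2^{-N}$ the quantity is at most $3\frac{C^2_{\chi^2}r^2_{\max}}{(1-\gamma)^2}2^{-(N+1)}$, while on the complementary event I use the trivial range $0\le \sup_{\alpha\geq 0}\varbrac{f^{\rho_{\chi^2}(\sigma)}(q,\alpha,V)}\le (1+\sqrt\sigma)\frac{r_{\max}}{1-\gamma}$ from \cref{eqc281}. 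Taking expectations then gives
\begin{align}
\E\Fbrac{\sup_{s,a}\lbrac{\delta^{\rho_{\chi^2}(\sigma)}_{s,a,N}(Q)(s,a)}^2}\leq \brac{3 C^2_{\chi^2}+ 6(1+\sqrt{\sigma})^2}\brac{ \frac{r_{\max}}{1-\gamma}}^2 2^{-N-1},
\end{align}
which is precisely the per-level bound \cref{eqeq55} from the variance proof, now holding uniformly over $(s,a)$.

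Finally, dividing by $P_N=2^{-N-1}$ cancels the geometric decay and leaves a term independent of $N$, so summing over $N=0,\dots,N_{\max}$ produces a factor $(N_{\max}+1)$; the reward sum is bounded in exactly the same way. Collecting the constant terms and using $\gamma^2\le (1-\gamma)^{-2}$ yields a bound of the form $4\brac{1+(N_{\max}+1)\brac{3C^2_{\chi^2}+6(1+\sqrt\sigma)^2}}\brac{r^2_{\max}+\gamma^2 r^2_{\max}/(1-\gamma)^2}$, which is $\widetilde{\mathcal{O}}(N_{\max})$ once the polylogarithmic factors inside $C^2_{\chi^2}$ are absorbed into the tilde. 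There is no genuine obstacle here, as the argument is parallel to the TV case; the only two points requiring care are that the concentration must be applied uniformly in $(s,a)$ through the union bound built into $C_{\chi^2}$ rather than for a single fixed pair, and that the $\chi^2$ dual objective has the enlarged range $(1+\sqrt\sigma)\frac{r_{\max}}{1-\gamma}$, which replaces the TV bound $\frac{r_{\max}}{1-\gamma}$ in the low-probability case.
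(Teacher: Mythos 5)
Your proposal reproduces the paper's own proof essentially line for line: the same expansion of $\E\Fbrac{\norminf{\widehat{\mathcal{T}}_{N_{\max}}^{\rho_{\chi^2}(\sigma)}(Q)}^2}$, the same three-term decomposition of $\sup_{s,a}\lbrac{\delta^{\rho_{\chi^2}(\sigma)}_{s,a,N}(Q)}^2$, the same union of the uniform concentration event from \cref{eqc280} with the trivial-range fallback of \cref{eqc281}, and the same cancellation of $P_N=2^{-N-1}$ yielding the $(N_{\max}+1)$ factor. The only deviations are immaterial constants (the good-event sum evaluates to $6C^2_{\chi^2}\frac{r^2_{\max}}{(1-\gamma)^2}2^{-(N+1)}$ rather than your stated $3C^2_{\chi^2}\frac{r^2_{\max}}{(1-\gamma)^2}2^{-(N+1)}$, and the paper uses the range $\frac{r_{\max}}{1-\gamma}$ where you use the looser $(1+\sqrt{\sigma})\frac{r_{\max}}{1-\gamma}$), neither of which affects the $\widetilde{\mathcal{O}}(N_{\max})$ conclusion.
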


\begin{proof}[Proof of \Cref{lm:lminfchi2}]
We then consider the expectation of infinite norm of robust Bellman operator. Set $\psi=\frac{1}{2}$, and then we make an error decomposition as follows
\eqenv{\label{eqceq149}
\E\Fbrac{\norminf{\widehat {\mathcal{T}}_{N_{\max}}^{\rho_{\chi^2}(\sigma)}(Q)}^2 }\leq 
 4r^2_{\max}+4\gamma^2\frac{r^2_{\max}}{(1-\gamma)^2}+4\E\Fbrac{\sum_{N_1=0}^{N_{\max}} \sup_{s,a}\frac{\brac{\delta^{r,\rho(\sigma)}_{s,a,N_1}
  }^2}{2^{-N_1-1}}+\gamma^2\sum_{N_2=0}^{N_{\max}}\sup_{s,a}\frac{\brac{
\delta^{\rho_{\chi^2}(\sigma)}_{s,a,N_2}(Q)  }^2}{2^{-N_2-1}} }. 
}


For convenience, we analyze the last term in the above equation. 
Consider the term $\sup_{s,a}\lbrac{\delta^{\rho_{\chi^2}(\sigma)}_{s,a,N}(Q)(s,a) }^2 $, 
 we make an error decomposition as follows:
\eqenv{
\sup_{s,a}\lbrac{\delta^{\rho_{\chi^2}(\sigma)}_{s,a,N}(Q)(s,a) }^2&=
\sup_{s,a}\Bigg|\sup_{\alpha\geq 0}\varbrac{f^{\rho_{\chi^2}(\sigma)}(\widehat p_{s,a,2^{N+1}},\alpha,V)} 
        \\& \qquad-\frac{1}{2} \sup_{\alpha\geq 0}\varbrac{f^{\rho_{\chi^2}(\sigma)}(\widehat p^E_{s,a,2^{N}},\alpha,V)}
    -\frac{1}{2}\sup_{\alpha\geq 0}\varbrac{f^{\rho_{\chi^2}(\sigma)}(\widehat p^O_{s,a,2^{N}},\alpha,V )}\Bigg|^2
    \\& \leq  3\sup_{s,a}\lbrac{\sup_{\alpha\geq 0}\varbrac{f^{\rho_{\chi^2}(\sigma)}(\widehat p_{s,a,2^{N+1}},\alpha,V)}-{\sup_{\alpha\geq 0}\varbrac{f^{\rho_{\chi^2}(\sigma)}(p_{s,a},\alpha,V)} }  }^2
    \\&\quad+  \frac{3}{4}\sup_{s,a}\lbrac{ \sup_{\alpha\geq 0}\varbrac{f^{\rho_{\chi^2}(\sigma)}( p^E_{s,a,2^N},\alpha,V)}-{\sup_{\alpha\geq 0}\varbrac{f^{\rho_{\chi^2}(\sigma)}(p_{s,a},\alpha,V)} } }^2
    \\&\quad+\frac{3}{4}\sup_{s,a}\lbrac{\sup_{\alpha\geq 0}\varbrac{f^{\rho_{\chi^2}(\sigma)}(\widehat p^O_{s,a,2^{N}},\alpha,V )}-{\sup_{\alpha\geq 0}\varbrac{f^{\rho_{\chi^2}(\sigma)}( p_{s,a},\alpha,V)} }}^2
    .\label{eqceq152}
}

Then, combined with the analysis in \cref{eqc280,eqc281} and the fact $\mathbb P(A\cap B\cap C)\geq 1- \mathbb P(\neg A)-\mathbb P(\neg B)-\mathbb P(\neg C)$, we can conclude that for any $N\geq 0$, 
 with probability at least $1-3*2^{-N} $
\eqenv{
\sup_{s,a}\lbrac{\delta^{\rho_{\chi^2}(\sigma)}_{s,a,N}(Q)(s,a) }^2
&\leq 3\brac{C_{\chi^2}\frac{r_{\max}}{1-\gamma} 2^{-\frac{N+1}{2}}}^2 + \frac{3}{4}\brac{C_{\chi^2}\frac{r_{\max}}{1-\gamma} 2^{-\frac{N}{2}} }^2+ \frac{3}{4}\brac{C_{\chi^2}\frac{r_{\max}}{1-\gamma} 2^{-\frac{N}{2}} }^2
\\& \leq  6\frac{C^2_{\chi^2} r^2_{\max} }{(1-\gamma)^2}2^{-(N+1)}, 
}
 Since $ 0 \leq \sup_{\alpha\geq 0}\varbrac{f^{\rho_{\chi^2}(\sigma)}(q,\alpha,V)}\leq \frac{r_{\max}}{1-\gamma}$ for any distribution $q$, with probability at most $3*2^{-N} $ 
we have that
\eqenv{
\sup_{s,a} \lbrac{\delta^{\rho_{\chi^2}(\sigma)}_{s,a,N}(Q)(s,a) }^2
 \leq \brac{ \frac{r_{\max}}{1-\gamma}}^2.
}

Above all, we can get that 
\eqenv{
\E\Fbrac{\sup_{s,a}\lbrac{\delta^{\rho_{\chi^2}(\sigma)}_{s,a,N}(Q)(s,a) }^2 }&\leq 6\frac{C^2_{\chi^2} r^2_{\max} }{(1-\gamma)^2}2^{-(N+1)}+ \brac{ \frac{r_{\max}}{1-\gamma}}^23*2^{-N}
\\&\leq \brac{6 C^2_{\chi^2}+ 6}\brac{ \frac{r_{\max}}{1-\gamma}}^2 2^{-N-1}. 
}

Besides, we can get 
\eqenv{\label{eqceq155}
\E\Fbrac{\sup_{s,a}\frac{\lbrac{\delta^{\rho_{\chi^2}(\sigma)}_{s,a,N}(Q)(s,a) }^2}{P_N} }= \brac{6 C^2_{\chi^2}+ 6}\brac{ \frac{r_{\max}}{1-\gamma}}^2.
}

Then, plug \cref{eqceq155} in \cref{eqceq149}, we can get the bound of expectation of infinite norm as follows:
\eqenv{
\E\Fbrac{\sum_{N_2=0}^{N_{\max}}\sup_{s,a}\frac{\brac{
\delta^{\rho_{\chi^2}(\sigma)}_{s,a,N_2}(Q)  }^2}{2^{-N_2-1}} }
\leq \sum_{N_2=0}^{N_{\max}}\brac{6 C^2_{\chi^2}+ 6}\brac{ \frac{r_{\max}}{1-\gamma}}^2=\brac{N_{\max}+1}\brac{6 C^2_{\chi^2}+ 6}\brac{ \frac{r_{\max}}{1-\gamma}}^2. 
}

Similarly, we can get the bound as follows:
\eqenv{
\E\Fbrac{\sum_{N_1=0}^{N_{\max}} \sup_{s,a}\frac{\brac{\delta^{r,\rho(\sigma)}_{s,a,N_1}
  }^2}{2^{-N_1-1}}}\leq \sum_{N_1=0}^{N_{\max}}\brac{6 C^2_{\chi^2}+ 6} r_{\max}^2=\brac{N_{\max}+1}\brac{6 C^2_{\chi^2}+ 6}r^2_{\max}. 
}

Hence, combining \Cref{eqceq149} and the above equations, we can get the robust Bellman operator infinite norm bound:
\eqenv{\label{eqceq62}
\E\Fbrac{\norminf{\widehat {\mathcal{T}}_{N_{\max}}^{\rho_{\chi^2}(\sigma)}(Q)}^2 }\leq 
4\brac{1+\brac{N_{\max}+1}\brac{6 C^2_{\chi^2}+ 6}}\brac{r^2_{\max}+\gamma^2\brac{ \frac{r_{\max}}{1-\gamma}}^2}.
}
   This completes the proof.  
\end{proof}

\begin{theorem}[Sample Complexity with $\chi^2$ Distance]
 Set $N_{\max}=\frac{2\log T}{\log 2}$ and the stepsize as $$\beta_t=\beta=\frac{2\log T}{(1-\gamma)T}. $$ Then the output of \cref{alg:example} satisfies that:
 \begin{align}
    \mathbb E \Fbrac{\mynorm{\widehat Q_T^{\rho_{\chi^2}(\sigma)}-Q^{*\rho_{\chi^2}(\sigma)}}_\infty^2}\nonumber
    & \leq \frac{\log T}{T^2}\frac{r^2_{\max}}{(1-\gamma)^2} 
+  \frac{r^2_{\max} \log T }{2(1-\gamma)^2 T}\brac{1+\frac{ \gamma^2}{(1-\gamma)^2} }\brac{1+\brac{6 C^2_{\chi^2}+ 6}(N_{\max}+1) }
\\&\qquad+r^2_{\max}\brac{\brac{1+\frac{\gamma}{1-\gamma}}\brac{1+ 3C_{\chi^2} }}^2\frac{1}{T}\nonumber
    \\& \leq\widetilde{\mathcal{O }}\brac{\frac{1}{  (1-\gamma)^5 T}}.
\end{align}
To ensure
\begin{align}
    \mathbb E \Fbrac{\mynorm{\widehat Q_T^{\rho_{\chi^2}(\sigma)}-Q^{*\rho_{\chi^2}(\sigma)}}_\infty^2}\leq \epsilon^2,\nonumber
\end{align}
the expected total sample complexity $N^{\rho_{\chi^2}(\sigma)}(\epsilon)$ is,
 \begin{align}
     N^{\rho_{\chi^2}(\sigma)}(\epsilon) = |\mathcal{S}||\mathcal{A}|N_{\max} T\geq \widetilde{\mathcal{O }}\brac{\frac{|\mathcal{S}||\mathcal{A}|}{  (1-\gamma)^5 \epsilon^2 }}.\nonumber
 \end{align}
\end{theorem}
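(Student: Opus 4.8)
The plan is to reproduce, almost verbatim, the stochastic-approximation argument used for the total-variation case, now feeding in the $\chi^2$-specific concentration, bias, and moment estimates established above. First I would recast the synchronous update of \Cref{alg:example} as the Robbins--Monro iteration
\[
\widehat Q^{\rho_{\chi^2}(\sigma)}_{t+1}=\widehat Q^{\rho_{\chi^2}(\sigma)}_{t}+ \beta_t\brac{\boldsymbol{\bar{\mathcal{T}}}^{\rho_{\chi^2}(\sigma)}_{N_{\max}}\brac{\widehat Q^{\rho_{\chi^2}(\sigma)}_{t}}- \widehat Q^{\rho_{\chi^2}(\sigma)}_{t}+ W_t},
\]
where $W_t=\widehat{\mathcal{T}}^{\rho_{\chi^2}(\sigma)}_{N_{\max}}(\widehat Q^{\rho_{\chi^2}(\sigma)}_{t})-\boldsymbol{\bar{\mathcal{T}}}^{\rho_{\chi^2}(\sigma)}_{N_{\max}}(\widehat Q^{\rho_{\chi^2}(\sigma)}_{t})$ and $\mathcal{F}_t$ is the filtration generated by the iterates and past noise. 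This puts the update in exactly the form treated by \Cref{lm:chen}, so the task reduces to verifying that lemma's three hypotheses for the $\chi^2$ operator and then combining the resulting error bound with the decomposition \cref{eqa20}.

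Second, I would discharge the three hypotheses: the $\gamma$-contraction of $\boldsymbol{\bar{\mathcal{T}}}^{\rho_{\chi^2}(\sigma)}_{N_{\max}}$ is \Cref{prop:contract}; the martingale condition $\E[W_t\mid\mathcal{F}_t]=0$ is immediate from the definition of $\boldsymbol{\bar{\mathcal{T}}}$ as the conditional mean of $\widehat{\mathcal{T}}$; and the second-moment condition follows from \Cref{lm:lminfchi2}, since a triangle inequality together with Jensen's inequality on the $\boldsymbol{\bar{\mathcal{T}}}$ term gives $\norminf{W_t}^2\leq 4\norminf{\widehat{\mathcal{T}}^{\rho_{\chi^2}(\sigma)}_{N_{\max}}(\widehat Q^{\rho_{\chi^2}(\sigma)}_{t})}^2$, whose conditional expectation is the constant $\widetilde{\mathcal{O}}(N_{\max})$ along the bounded trajectory $\norminf{\widehat Q^{\rho_{\chi^2}(\sigma)}_{t}}\leq r_{\max}/(1-\gamma)$. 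Applying \Cref{lm:chen} with the constant stepsize $\beta=\tfrac{2\log T}{(1-\gamma)T}$ — which meets the admissibility condition $\beta\leq c_2/c_3$ once $T$ is large enough — bounds the first term of \cref{eqa20} by $\E[\norminf{\widehat Q^{\rho_{\chi^2}(\sigma)}_{T}-\widehat Q^{*\rho_{\chi^2}(\sigma)}}^2]=\widetilde{\mathcal{O}}(N_{\max}/((1-\gamma)^4 T))$.

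Third, for the approximation term of \cref{eqa20} I would apply \Cref{lm:a8} to reduce it to the operator bias $\frac{1}{1-\gamma}\norminf{\boldsymbol{\bar{\mathcal{T}}}^{\rho_{\chi^2}(\sigma)}_{N_{\max}}(Q^{*\rho_{\chi^2}(\sigma)})-\mathcal{T}^{\rho_{\chi^2}(\sigma)}(Q^{*\rho_{\chi^2}(\sigma)})}$, and then plug in the $\chi^2$ bias estimate already proved (the restatement of \Cref{thm:tv1} for $\chi^2$), namely $\brac{\tfrac{\gamma r_{\max}}{1-\gamma}+r_{\max}}2^{-\frac{N_{\max}+1}{2}}\brac{(1+\sqrt{\sigma})2^{-\frac{N_{\max}+1}{2}}+C_{\chi^2}}$. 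Squaring and dividing by $(1-\gamma)^2$ produces a contribution of order $\widetilde{\mathcal{O}}\brac{C_{\chi^2}^2\,2^{-N_{\max}}/(1-\gamma)^4}$, which with $N_{\max}=\tfrac{2\log T}{\log 2}$, i.e. $2^{-N_{\max}}=T^{-2}$, decays like $\widetilde{\mathcal{O}}(1/T^2)$ and is therefore dominated by the $\widetilde{\mathcal{O}}(1/T)$ stochastic-approximation term. Collecting the powers of $(1-\gamma)$ yields the stated $\widetilde{\mathcal{O}}\brac{1/((1-\gamma)^5 T)}$ bound.

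Finally I would convert this into a sample count: setting the error equal to $\epsilon^2$ forces $T=\widetilde{\mathcal{O}}((1-\gamma)^{-5}\epsilon^{-2})$, and since $\psi=\tfrac12$ the expected number of samples drawn per state--action pair per iteration is $1+\sum_{n=0}^{N_{\max}}2^{n+1}P_n=N_{\max}+2=\Theta(\log T)$; multiplying by the $|\mathcal{S}||\mathcal{A}|$ pairs updated each step gives $N^{\rho_{\chi^2}(\sigma)}(\epsilon)=\widetilde{\mathcal{O}}(|\mathcal{S}||\mathcal{A}|(1-\gamma)^{-5}\epsilon^{-2})$. I expect no conceptual obstacle, since the heavy analysis lives in the bias, variance, and infinite-norm lemmas; the only real care is bookkeeping, namely confirming that the constant $C_{\chi^2}=4\sqrt{2(1+\sigma)\brac{\log(24|\mathcal{S}||\mathcal{A}|)+2(N_{\max}+1)\log 2}}$ and the $(1+\sqrt{\sigma})$ factors enter only through logarithmic terms absorbed by $\widetilde{\mathcal{O}}$, so that the dependence on $\epsilon$, $\gamma$, $|\mathcal{S}|$ and $|\mathcal{A}|$ is identical to the TV case. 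The delicate balance to highlight is that the variance bound of \Cref{lm:lminfchi2} grows linearly in $N_{\max}$ whereas the squared bias decays like $2^{-N_{\max}}$, and the logarithmic choice of $N_{\max}$ is exactly what keeps both error sources at the $\widetilde{\mathcal{O}}(1/T)$ level.
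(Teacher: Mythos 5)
Your proposal matches the paper's proof essentially step for step: the same Robbins--Monro recasting with noise $W_t={\widehat{\mathcal{T}}}^{\rho_{\chi^2}(\sigma)}_{N_{\max}}(\widehat Q_t)-\boldsymbol{\bar{\mathcal{T}}}^{\rho_{\chi^2}(\sigma)}_{N_{\max}}(\widehat Q_t)$, verification of the three hypotheses of \Cref{lm:chen} via \Cref{prop:contract} and \Cref{lm:lminfchi2}, the decomposition \cref{eqa20} with \Cref{lm:a8} plus the $\chi^2$ bias bound, the choice $N_{\max}=\frac{2\log T}{\log 2}$ so that $2^{-N_{\max}}=T^{-2}$ makes the squared bias a dominated $\widetilde{\mathcal{O}}(1/T^2)$ term, and the identical per-iteration sample count $N_{\max}+2$. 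One intermediate bookkeeping slip worth noting: the stochastic-approximation term is $\widetilde{\mathcal{O}}\brac{N_{\max}/((1-\gamma)^5 T)}$ rather than your stated $(1-\gamma)^{-4}$ order, since the noise second moment contributes $(1-\gamma)^{-2}$, the constant $c_4$ in \Cref{lm:chen} contributes $(1-\gamma)^{-1}$, and $\sum_i\beta_i^2\prod_{j>i}(1-c_2\beta_j)\leq \beta/c_2 \sim \log T/((1-\gamma)^2T)$ contributes the remaining $(1-\gamma)^{-2}$ --- but your final bound already states the correct $\widetilde{\mathcal{O}}\brac{1/((1-\gamma)^5 T)}$, so this does not affect the conclusion.
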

\begin{proof}
We consider the stochastic iteration
\eqenv{
\widehat Q^{\rho_{\chi^2}(\sigma)}_{t+1}=\widehat Q^{\rho_{\chi^2}(\sigma)}_{t}+ \beta_t\brac{\boldsymbol{\bar{\mathcal{T}}}^{\rho_{\chi^2}(\sigma)}_{N_{\max}}\brac{Q^{\rho_{\chi^2}(\sigma)}_{t}}- \widehat Q^{\rho_{\chi^2}(\sigma)}_{t}+ W_t  },
}
where $W_t={\widehat{\mathcal{T}}}^{\rho_{\chi^2}(\sigma)}_{N_{\max}}\brac{Q^{\rho_{\chi^2}(\sigma)}_{t}}-\boldsymbol{\bar{\mathcal{T}}}^{\rho_{\chi^2}(\sigma)}_{N_{\max}}\brac{Q^{\rho_{\chi^2}(\sigma)}_{t}} $. 

Define the filtration $\mathcal{F}_t=\varbrac{Q^{\rho_{\chi^2}(\sigma)}_0, W_0,...,Q^{\rho_{\chi^2}(\sigma)}_{t-1},W_{t-1},Q^{\rho_{\chi^2}(\sigma)}_{t} } $. 
Then, by \cref{thm:tv1}, we can get that 
\eqenv{
\E\Fbrac{W_t|\mathcal{F}_t }=0,
}

and  by \cref{lm:lminfchi2}, we can get that 
\eqenv{
\E\Fbrac{\norminf{W_t}^2|\mathcal{F}_t }
&\leq {\E\Fbrac{\sup_{s,a}\lbrac{{\widehat{\mathcal{T}}}^{\rho_{\chi^2}(\sigma)}_{N_{\max}}\brac{Q^{\rho_{\chi^2}(\sigma)}_{t}}(s,a)-\boldsymbol{\bar{\mathcal{T}}}^{\rho_{\chi^2}(\sigma)}_{N_{\max}}\brac{Q^{\rho_{\chi^2}(\sigma)}_{t}}(s,a)}^2|\mathcal{F}_t }}
\\& \leq 2{\E\Fbrac{\sup_{s,a} \lbrac{{\widehat{\mathcal{T}}}^{\rho_{\chi^2}(\sigma)}_{N_{\max}}\brac{Q^{\rho_{\chi^2}(\sigma)}_{t}}(s,a)}^2+\sup_{s,a}\lbrac{\boldsymbol{\bar{\mathcal{T}}}^{\rho_{\chi^2}(\sigma)}_{N_{\max}}\brac{Q^{\rho_{\chi^2}(\sigma)}_{t}}(s,a)}^2|\mathcal{F}_t }}
\\&\myineq{\leq}{a} 4\E\Fbrac{\sup_{s,a}2\lbrac{{\widehat{\mathcal{T}}}^{\rho_{\chi^2}(\sigma)}_{N_{\max}}\brac{Q^{\rho_{\chi^2}(\sigma)}_{t}}(s,a)}^2|\mathcal{F}_t }
\\& \myineq{\leq}{b} 16\brac{1+\brac{N_{\max}+1}\brac{6 C^2_{\chi^2}+ 6}}\brac{r^2_{\max}+\gamma^2\brac{ \frac{r_{\max}}{1-\gamma}}^2}, 
} where $(a) $ follows from that
\eqenv{
\E\Fbrac{\sup_{s,a}\lbrac{\boldsymbol{\bar{\mathcal{T}}}^{\rho_{\chi^2}(\sigma)}_{N_{\max}}\brac{Q^{\rho_{\chi^2}(\sigma)}_{t}}(s,a)}^2|\mathcal{F}_t }&= 
\E\Fbrac{\sup_{s,a}\lbrac{\E\Fbrac{\widehat{\mathcal{T}}^{\rho_{\chi^2}(\sigma)}_{N_{\max}}\brac{Q^{\rho_{\chi^2}(\sigma)}_{t}}(s,a)}}^2\Big|\mathcal{F}_t }
\\& \leq \E\Fbrac{\sup_{s,a}\E\Fbrac{\lbrac{\widehat{\mathcal{T}}^{\rho_{\chi^2}(\sigma)}_{N_{\max}}\brac{Q^{\rho_{\chi^2}(\sigma)}_{t}}(s,a)}^2}\Big|\mathcal{F}_t }
\\& \leq \E\Fbrac{\sup_{s,a} {\lbrac{\widehat{\mathcal{T}}^{\rho_{\chi^2}(\sigma)}_{N_{\max}}\brac{Q^{\rho_{\chi^2}(\sigma)}_{t}}(s,a)}^2}\Big|\mathcal{F}_t }, 
}
and $(b)$ follows from \Cref{lm:lminfchi2}.


According to \cref{eq:fixp}, we have that
$$\widehat Q^{*\rho_{\chi^2}(\sigma)}(s,a) = \hatT (\widehat Q^{*\rho_{\chi^2}(\sigma)})(s,a)=\E\Fbrac{\hatt (\widehat Q^{*\rho_{\chi^2}(\sigma)})(s,a)}.$$
Then, apply \cref{lm:chen} \citep{chen2020finite}, set the constant stepsize $\beta_t=\beta=\frac{2\log T}{(1-\gamma)T}$ and $T$ is large enough s.t.
$$\beta={\frac{2\log T}{(1-\gamma)T}\leq \frac{(1-\gamma)^2}{128 e  \log(|\mathcal{S}||\mathcal{A}|) }} .$$
We can conclude that
\eqenv{\label{eqceq66}
&\E\Fbrac{\norminf{\widehat Q_{T}^{\rho_{\chi^2}(\sigma)}-\widehat Q^{*\rho_{\chi^2}(\sigma)} }^2}\myineq{\leq}{i} \frac{3}{2} \norminf{\widehat Q_{0}^{\rho_{\chi^2}(\sigma)}-\widehat Q^{*\rho_{\chi^2}(\sigma)} }^2\prod_{j=0}^{T-1} \brac{1-\frac{1-\gamma}{2} \beta_t} \\&\quad +\frac{16 e \log (|\mathcal{S}||\mathcal{A}|)}{1-\gamma}16\brac{1+\brac{N_{\max}+1}\brac{6 C^2_{\chi^2}+ 6}}\brac{r^2_{\max}+\gamma^2\brac{ \frac{r_{\max}}{1-\gamma}}^2}\sum_{i=0}^{T-1} \beta_i^2 \prod_{t=i+1}^{T-1} (1-
\frac{1-\gamma}{2}\beta_t)
\\& \myineq{\leq}{ii} \frac{3}{2}\frac{r^2_{\max}}{(1-\gamma)^2} \frac{1}{T} +\frac{16 e \log (|\mathcal{S}||\mathcal{A}|)}{1-\gamma}16\brac{1+\brac{N_{\max}+1}\brac{6 C^2_{\chi^2}+ 6}}\brac{r^2_{\max}+\gamma^2\brac{ \frac{r_{\max}}{1-\gamma}}^2}\frac{4 \log T }{(1-\gamma)^2 T}
,
}where $(i)$ follows from the \cref{lm:chen}. $(ii)$ follows from $(1-(1-\gamma)\beta/2)^T\leq \frac{1}{T}$.

Set $N_{\max}=\frac{2\log T}{\log 2}$. Then, we make the decomposition and get the bound of $\mathbb E \Fbrac{\mynorm{\widehat Q_T^{\rho_{\chi^2}(\sigma)}-Q^{*\rho_{\chi^2}(\sigma)}}_\infty^2}$ as follows
 \eqenv{
 \mathbb E& \Fbrac{\mynorm{\widehat Q_T^{\rho_{\chi^2}(\sigma)}-Q^{*\rho_{\chi^2}(\sigma)}}_\infty^2}
 \leq 
 2\E\Fbrac{\mynorm{\widehat Q_T^{\rho_{\chi^2}(\sigma)}-\widehat Q^{*\rho_{\chi^2}(\sigma)}}_\infty^2 }+ 2\E\Fbrac{\mynorm{\widehat Q^{*\rho_{\chi^2}(\sigma)}-Q^{*\rho_{\chi^2}(\sigma)}}_\infty^2 }
 \\& \myineq{\leq}{i}
  \frac{2r^2_{\max}}{(1-\gamma)^2 T} +\frac{32 e \log (|\mathcal{S}||\mathcal{A}|)}{1-\gamma}16\brac{1+\brac{N_{\max}+1}\brac{6 C^2_{\chi^2}+ 6}}\brac{r^2_{\max}+\gamma^2\brac{ \frac{r_{\max}}{1-\gamma}}^2}\frac{4 \log T }{(1-\gamma)^2 T}
\\& \qquad+\frac{2}{1-\gamma}\brac{\brac{r_{\max}+\frac{r_{\max}}{1-\gamma}}2^{-\frac{N_{\max}+1}{2}}\brac{2^{-\frac{N_{\max}+1}{2}}+ 3C_{\chi^2} }}^2
\\& \myineq{\leq}{ii}  \frac{2r^2_{\max}}{(1-\gamma)^2 T} +\frac{32 e \log (|\mathcal{S}||\mathcal{A}|)}{1-\gamma}16\brac{1+\brac{N_{\max}+1}\brac{6 C^2_{\chi^2}+ 6}}\brac{r^2_{\max}+\gamma^2\brac{ \frac{r_{\max}}{1-\gamma}}^2}\frac{4 \log T }{(1-\gamma)^2 T}
\\& \qquad+2\frac{2}{1-\gamma}\brac{\brac{r_{\max}+\frac{r_{\max}}{1-\gamma}}\brac{1+ 3C_{\chi^2} }}^2\frac{1}{T}
\\&= \widetilde{\mathcal{O }}\brac{\frac{1}{(1-\gamma)^5T}},
 }where $(i)$ follows from \cref{lm:a8,thm:tv1}. 
 $(ii)$ follows from $2^{\frac{\log T}{\log 2}}\leq \frac{1}{T} $. 

When $\mathbb E \Fbrac{\mynorm{\widehat Q_T^{\rho_{\chi^2}(\sigma)}-Q^{*,\rho_{\chi^2}(\sigma)}}_\infty^2}\leq \epsilon^2 $, the iteration $T\geq \widetilde{\mathcal{O }}\brac{(1-\gamma)^{-5}\epsilon^{-2}}$. When $\psi=\frac{1}{2}$, the expected sample size per iteration is $$1+\sum_{n=0}^{N_{max}} 2^{n+1}P_N=1+ \sum_{n=0}^{N_{max}} 2^{n+1} \frac{1}{2^{n+1}} =N_{\max}+2=\frac{2\log T}{\log 2}+2.$$ Above all, the total sample complexity is $\widetilde{\mathcal{O }}\brac{|\mathcal{S}||\mathcal{A}|(1-\gamma)^{-5}\epsilon^{-2} }$. 

This completes the proof.     
\end{proof}

\section{KL Divergence Uncertainty Set}
In this section, we provide the proof of \cref{thm:tv1,thm:kl} specifically for KL distance. 
\begin{theorem}[Restatement of \cref{thm:tv1} specifically for KL distance]
    Consider the case of KL constraint uncertainty set with uncertainty level $\sigma$ i.e. $ \mathcal{P}^{KL}(\sigma)$ and $ \mathcal{R}^{KL}(\sigma)$, set $\psi=\frac{1}{2}$, for any $Q\in \mathbb R^{\mathcal{S}\times\mathcal{A}}, s\in \mathcal{S}, a\in \mathcal{A} $, the estimation bias can be bounded as:
    \begin{align}
        \sup_{s,a}\lbrac{\mathbb E\Fbrac{\widehat {\mathcal{T}}_{N_{\max}}^{\rho_{KL}(\sigma)} (Q)(s,a) } - {\mathcal{T}}^{\rho_{KL}(\sigma)} (Q)(s,a)}\leq \widetilde{\mathcal{O }}\brac{2^{-\frac{N_{\max}}{2}} },\nonumber
    \end{align}
    and the variation can be bounded as:
    \begin{align}
        \text{Var}\brac{\widehat {\mathcal{T}}_{N_{\max}}^{\rho_{KL}(\sigma)} (Q)(s,a)  }\leq \widetilde{\mathcal{O }}\brac{N_{\max}}.
    \end{align}
\end{theorem}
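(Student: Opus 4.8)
The plan is to mirror the structure already used for the total variation and $\chi^2$ cases, handling the bias and the variance separately and reducing both to a single KL-specific concentration inequality. By \Cref{prop:mlmc}, the expectation of the threshold-MLMC operator equals that of the direct model-based estimator drawing $2^{N_{\max}+1}$ samples, so the bias splits into a reward term governed by $g^{*\rho_{KL}(\sigma)}$ and a value term governed by $f^{*\rho_{KL}(\sigma)}$, each treated identically. I would concentrate on the value term $\E\Fbrac{\sup_{s,a}\lbrac{f^{*\rho_{KL}(\sigma)}(\widehat p_{s,a,2^{N_{\max}+1}},V)-f^{*\rho_{KL}(\sigma)}(p_{s,a},V)}}$ and invoke the KL strong duality (\Cref{lm:kl}) to rewrite it as a difference of maxima over the dual variable $\alpha\ge 0$, which is bounded by the uniform-in-$\alpha$ deviation of the dual objective $-\alpha\log\brac{\E_p\Fbrac{\exp(-V/\alpha)}}-\alpha\sigma$ between the empirical and nominal kernels.

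The central ingredient is a KL analogue of \Cref{lm:tvlm}: with probability at least $1-\delta$, the uniform deviation over the admissible range of $\alpha$ is at most $\widetilde{\mathcal{O}}\brac{\tfrac{1}{p_\wedge}\sqrt{\tfrac{\log(N/\delta)}{(1-\gamma)^2 N}}}$. The key preliminary step is restricting $\alpha$ to a compact interval bounded away from zero: since the objective decreases linearly in $\alpha$ for large $\alpha$ while the inner log-moment blows up as $\alpha\to 0$, the optimizer $\alpha^*$ lives in an interval whose lower endpoint is controlled by $p_\wedge$, because the smallest nonzero nominal mass dictates how sharply the exponential moment can concentrate. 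On this interval the map $p\mapsto -\alpha\log\brac{\E_p\Fbrac{\exp(-V/\alpha)}}$ is Lipschitz in the linear functional $\E_p\Fbrac{\exp(-V/\alpha)}$ with constant scaling like $1/p_\wedge$, so a Hoeffding/Bernstein bound on the empirical exponential moment combined with a covering argument over $\alpha$ yields the claimed deviation.

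Given this lemma, the bias bound follows exactly as in the earlier cases: apply it with $\delta=2^{-N_{\max}-1}/\ssa$ and $N=2^{N_{\max}+1}$, union bound over $(s,a)$ through Bernoulli's inequality to obtain an event of probability at least $1-2^{-N_{\max}-1}$ on which the deviation is $\widetilde{\mathcal{O}}\brac{2^{-N_{\max}/2}}$, and on the complementary event of probability $2^{-N_{\max}-1}$ use the trivial bound $r_{\max}/(1-\gamma)$; combining in expectation gives the $\widetilde{\mathcal{O}}\brac{2^{-N_{\max}/2}}$ bias. For the variance, independence of the two estimators gives $\Varr{\widehat{\mathcal{T}}_{N_{\max}}^{\rho_{KL}(\sigma)}(Q)(s,a)}=\Varr{\widehat r^{\rho_{KL}(\sigma)}}+\gamma^2\Varr{\widehat v^{\rho_{KL}(\sigma)}(Q)(s,a)}$; bounding each by its second moment and expanding over the level $N$ yields $\tfrac{2r_{\max}^2}{(1-\gamma)^2}+2\sum_{N=0}^{N_{\max}}\tfrac{\E\Fbrac{(\delta^{\rho_{KL}(\sigma)}_{s,a,N})^2}}{P_N}$. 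Decomposing $(\delta^{\rho_{KL}(\sigma)}_{s,a,N})^2$ into its full, even, and odd pieces and bounding each with the concentration lemma gives $\E\Fbrac{(\delta^{\rho_{KL}(\sigma)}_{s,a,N})^2}=\widetilde{\mathcal{O}}\brac{2^{-N}}$, so each summand is $\widetilde{\mathcal{O}}(1)$ since $P_N=2^{-N-1}$, and the sum is $\widetilde{\mathcal{O}}\brac{N_{\max}}$.

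The main obstacle is establishing the KL concentration lemma with the sharp $1/p_\wedge$ dependence rather than a worse power. The nonlinearity of the log-sum-exp objective means the naive Lipschitz constant of $-\alpha\log\brac{\E_p\Fbrac{\exp(-V/\alpha)}}$ degrades severely as $\alpha\to 0$, so the technical work lies in proving a tight lower bound $\alpha^*\gtrsim p_\wedge$ on the optimal dual variable and then showing the Lipschitz constant is exactly $\mathcal{O}(1/p_\wedge)$ on the resulting interval. This is precisely where the KL case departs from the benign TV and $\chi^2$ objectives, and it explains why the threshold in \Cref{thm:kl} must be enlarged to include the $p_\wedge$-dependent term.
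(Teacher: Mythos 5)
Your overall architecture (reduce via \Cref{prop:mlmc} to a model-based estimator with $2^{N_{\max}+1}$ samples, split reward and value terms, prove a KL concentration lemma, combine a high-probability event with the trivial bound $r_{\max}/(1-\gamma)$ on the complement, and expand the variance over levels with $P_N=2^{-N-1}$) matches the paper. But the central ingredient you rely on has a genuine gap: you propose to restrict the dual variable to a compact interval \emph{bounded away from zero} via a claimed lower bound $\alpha^*\gtrsim p_\wedge$, and then run a Lipschitz-plus-covering argument over $\alpha$. No such lower bound holds in general: as $\alpha\to 0$ the KL dual objective $-\alpha\log\brac{\E_{p}\Fbrac{\exp(-V/\alpha)}}-\alpha\sigma$ tends to $\min_{s:\,p(s)>0}V(s)$, and for large uncertainty level $\sigma$ the supremum is attained at (or arbitrarily near) $\alpha=0$. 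Controlling $\alpha^*$ away from zero is exactly what forced the restrictive assumption $p_\wedge\geq \mathcal{O}(1-e^{-\sigma})$ in \citep{wang2023finite,wang2023sample}, which this paper explicitly avoids; so as written your key lemma either fails or silently reintroduces that assumption. The paper's mechanism is different and assumption-free: it only uses the \emph{upper} bound $\alpha^*\leq \frac{r_{\max}}{(1-\gamma)\sigma}$ from \citep{hu2013kullback}, and instead of additive concentration on the exponential moment it applies Hoeffding's inequality to the \emph{relative} kernel error, obtaining $\max_{s'}\lbrac{\frac{\hat p_{s,a,N}(s')-p_{s,a}(s')}{p_{s,a}(s')}}\lesssim \frac{1}{p_\wedge}\sqrt{\frac{\log(2|\mathcal{S}|/\tau)}{N}}$. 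Since $\lbrac{\sum_i p_i x_i}\leq \brac{\max_i\lbrac{p_i/q_i}}\lbrac{\sum_i q_i x_i}$ for nonnegative $x$, this multiplicative bound controls the ratio $\frac{\E_{\hat p}\Fbrac{e^{-V/\alpha}}-\E_{p}\Fbrac{e^{-V/\alpha}}}{\E_{p}\Fbrac{e^{-V/\alpha}}}$ \emph{uniformly in $\alpha$} with no covering net and no behavior to control as $\alpha\to 0$; the bias bound then follows from $|\log(1+x)|\leq 2|x|$ for $|x|\leq\frac{1}{2}$.

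A second, smaller gap sits in your variance step: you assert $\E\Fbrac{(\delta^{\rho_{KL}(\sigma)}_{s,a,N})^2}=\widetilde{\mathcal{O}}\brac{2^{-N}}$ at every level, but the KL concentration is vacuous at small levels, since $\sqrt{\frac{N_{\max}}{2^{N+1}p_\wedge^2}\log\brac{2|\mathcal{S}|^2|\mathcal{A}|}}\leq\frac{1}{2}$ (needed for the $\log(1+x)$ linearization) fails when $2^{N}\lesssim p_\wedge^{-2}\log(2|\mathcal{S}|^2|\mathcal{A}|)\log T$. The paper therefore splits into two cases: for $N\leq \frac{\log(1+p_\wedge^2\log(2|\mathcal{S}|^2|\mathcal{A}|)\log T)}{\log 2}$ it uses the trivial bound on $\delta^2$ together with $\frac{1}{P_N}=2^{N+1}\lesssim 1+p_\wedge^{-2}\log(2|\mathcal{S}|^2|\mathcal{A}|)\log T$, and only for larger $N$ does it invoke the concentration; summing both regimes gives the $\widetilde{\mathcal{O}}(N_{\max})$ variance with the $p_\wedge^{-2}$ dependence. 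This two-regime accounting, not a lower bound on $\alpha^*$, is the actual reason the threshold in \Cref{thm:kl} carries the extra term $\frac{\log(1+p^2_\wedge\log(2|\mathcal{S}|)\log T)}{\log 2}$, so your closing diagnosis attributes the right symptom to the wrong cause.
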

\begin{proof}
    Firstly, we make error decomposition as follows:
    \eqenv{
    &\sup_{s,a}\lbrac{ \E\Fbrac{\widehat {\mathcal{T}}_{N_{\max}}^{\rho_{\chi^2}(\sigma)} (Q)(s,a) } - {\mathcal{T}}^{\rho_{\chi^2}(\sigma)} (Q)(s,a)}
    \\&\myineq{=}{i} \sup_{s,a}\bigg|\E\Fbrac{g^{*\rho_{KL}(\sigma)}(\hat \mu_{s,a,2^{N_{\max}+1}},r_{s,a})}+\gamma\E\Fbrac{f^{*\rho_{KL}(\sigma)}(\hat p_{s,a,2^{N_{\max}+1}},V)}
\\& \qquad   - {g^{*\rho_{KL}(\sigma)}(\mu_{s,a},r_{s,a})}
- \gamma  {f^{*\rho_{KL}(\sigma)}(p_{s,a},V )} \bigg|
\\& \leq \sup_{s,a}\lbrac{\E\Fbrac{g^{*\rho_{KL}(\sigma)}(\hat \mu_{s,a,2^{N_{\max}+1}},r_{s,a})}- {g^{*\rho_{KL}(\sigma)}(\mu_{s,a},r_{s,a})} }\\& \qquad +\gamma\sup_{s,a}\lbrac{\E\Fbrac{f^{*\rho_{KL}(\sigma)}(\hat p_{s,a,2^{N_{\max}+1}},V)}-{f^{*\rho_{KL}(\sigma)}(p_{s,a},V )} }
\\& \leq \E\Fbrac{\sup_{s,a}\lbrac{ {g^{*\rho_{KL}(\sigma)}(\hat \mu_{s,a,2^{N_{\max}+1}},r_{s,a})}- {g^{*\rho_{KL}(\sigma)}(\mu_{s,a},r_{s,a})} } }\\& \qquad +\gamma \E\Fbrac{ \sup_{s,a}\lbrac{{f^{*\rho_{KL}(\sigma)}(\hat p_{s,a,2^{N_{\max}+1}},V )}-{f^{*\rho_{KL}(\sigma)}(p_{s,a},V )} }}, \label{eq:eq70}
    }where $(i)$ follows from \cref{prop:mlmc}.
    
Then, for convenience, we bound the second term in \cref{eq:eq70}. The first term can be bounded similarly.  By \cref{lm:kl},
\eqenv{\label{eqeq71}
    &\lbrac{f^{*\rho_{KL}(\sigma)}\brac{ p_{s,a}, V }-{{f^*}^{\rho_{KL}(\sigma)}\brac{\hat p_{s,a,2^{N_{\max}+1}},   V^* }} }
    \\&= \Bigg|\max_{\alpha\geq0} \varbrac{-\alpha \log\brac{\mathbb E_{p_{s,a}} \Fbrac{exp\brac{-\frac{V(s'_{s,a})}{\alpha}} } }-\alpha \sigma }
    \\&\qquad-\max_{\alpha\geq0} \varbrac{-\alpha \log\brac{\mathbb E_{\hat p_{s,a,2^{N_{\max}+1}}} \Fbrac{exp\brac{-\frac{V(s'_{s,a})}{\alpha}} } }-\alpha \sigma }\Bigg |
    \\& \myineq{\leq}{i} \max_{0\leq\alpha\leq \frac{r_{\max}}{(1-\gamma)\sigma }} \Bigg|-\alpha \log\brac{\mathbb E_{p_{s,a}} \Fbrac{exp\brac{-\frac{V(s'_{s,a})}{\alpha}} } }\\& \qquad+\alpha \log\brac{\mathbb E_{\hat p_{s,a,2^{N_{\max}+1}}} \Fbrac{exp\brac{-\frac{V(s'_{s,a})}{\alpha}} } }  \Bigg|
    \\& {\leq} 
    \max_{0\leq\alpha\leq \frac{r_{\max}}{(1-\gamma)\sigma }}
    \lbrac{\alpha \log\brac{\frac{\mathbb E_{\hat p_{s,a,2^{N_{\max}+1}}} \Fbrac{exp\brac{-\frac{V(s'_{s,a})}{\alpha}} } }{\mathbb E_{p_{s,a}} \Fbrac{exp\brac{-\frac{V(s'_{s,a})}{\alpha}} }}  }  }
    \\& {\leq}
      \max_{0\leq\alpha\leq \frac{r_{\max}}{(1-\gamma)\sigma }} \lbrac{\alpha\log\brac{\frac{\mathbb E_{\hat p_{s,a,2^{N_{\max}+1}}} \Fbrac{exp\brac{-\frac{V(s'_{s,a})}{\alpha}} } -\mathbb E_{p_{s,a}} \Fbrac{exp\brac{-\frac{V(s'_{s,a})}{\alpha}} }}{\mathbb E_{p_{s,a}} \Fbrac{exp\brac{-\frac{V(s'_{s,a})}{\alpha}} }}  +1 }   },
}
where $(i)$ follows from $\boldsymbol{\alpha}^{*\rho_{KL}(\sigma)}(p, V)\leq \frac{\max_{s:p(s)\neq 0}V(s)}{\sigma}\leq \frac{\max_{s,a} Q(s,a)}{\sigma}\leq \frac{r_{\max}}{(1-\gamma)\sigma }$ by \citep{hu2013kullback}. 

Noting that $ \hat p_{s,a,2^{N_{\max}+1}}(s'_{s,a})$ is absolutely continuous on $p_{s,a}(s'_{s,a}) $, then by Hoeffding's inequality we have 
\eqenv{
\prob\brac{\max_{s'_{s,a}}\lbrac{\frac{\hat p_{s,a,2^{N_{\max}+1}}(s'_{s,a})-p_{s,a}(s'_{s,a})  }{p_{s,a}(s'_{s,a})} }\geq \sqrt{\frac{1}{2^{N_{\max}+1}p^2_\wedge} \log \frac{2|\mathcal{S}| }{\tau} } }\leq \tau.
}

Set $\tau=\frac{2^{-(N_{\max}+1)}}{\ssa}$. With probability at least $1-\frac{2^{-(N_{\max}+1)}}{\ssa} $, we have that
\eqenv{
&\lbrac{\frac{\mathbb E_{\hat p_{s,a,2^{N_{\max}+1}}} \Fbrac{exp\brac{-\frac{V(s'_{s,a})}{\alpha}} } -\mathbb E_{p_{s,a}} \Fbrac{exp\brac{-\frac{V(s'_{s,a})}{\alpha}} }}{\mathbb E_{p_{s,a}} \Fbrac{exp\brac{-\frac{V(s'_{s,a})}{\alpha}} }}  }
 \\&\qquad\qquad\qquad\qquad\myineq{\leq}{i}
\max_{s'_{s,a}}\lbrac{\frac{\hat p_{s,a,2^{N_{\max}+1}}(s'_{s,a})-p_{s,a}(s'_{s,a})  }{p_{s,a}(s'_{s,a})} }
\\&\qquad\qquad\qquad\qquad \leq \sqrt{\frac{N_{\max}}{2^{N_{\max}+1}p^2_\wedge} \log \brac{2|\mathcal{S}|^2|\mathcal{A}| }},
} where $(i)$ follows from the fact that
\eqenv{
\lbrac{\sum p_i x_i} = \lbrac{\sum \frac{p_i}{q_i} q_i x_i }\leq  \lbrac{\sum q_i x_i}\max_i\lbrac{\frac{p_i}{q_i}}.
}

Note that if we set $\frac{N_{\max}}{p^2_\wedge} \log \brac{2|\mathcal{S}|^2|\mathcal{A}| }\leq \frac{1}{4}2^{N_{\max}+1} $, it holds that $\sqrt{\frac{N_{\max}}{2^{N_{\max}+1}p^2_\wedge} \log \brac{2|\mathcal{S}|^2|\mathcal{A}| }}\leq \frac{1}{2} $. 
Then, combined with \cref{eqeq71}, we can conclude that
\eqenv{\label{eqeq74}
&\lbrac{{f^*}^{\rho_{KL}(\sigma)}\brac{ p_{s,a}, V(s'_{s,a})}-{{f^*}^{\rho_{KL}(\sigma)}\brac{\hat p_{s,a,2^{N_{\max}+1}},   V^*(s'_{s,a})}}}
\\& {\leq}
    \frac{r_{\max}}{(1-\gamma)\sigma}  \max_{0\leq\alpha\leq \frac{r_{\max}}{(1-\gamma)\sigma }} \lbrac{\log\brac{\frac{\mathbb E_{\hat p_{s,a,2^{N_{\max}+1}}} \Fbrac{exp\brac{-\frac{V }{\alpha}} } -\mathbb E_{p_{s,a}} \Fbrac{exp\brac{-\frac{V }{\alpha}} }}{\mathbb E_{p_{s,a}} \Fbrac{exp\brac{-\frac{V(s'_{s,a})}{\alpha}} }}  +1 }   }
\\& \myineq{\leq}{i}
    \frac{r_{\max}}{(1-\gamma)\sigma} 
    \max_{0\leq\alpha\leq \frac{r_{\max}}{(1-\gamma)\sigma }}
    2\lbrac{\frac{\mathbb E_{\hat p_{s,a,2^{N_{\max}+1}}} \Fbrac{exp\brac{-\frac{V(s'_{s,a})}{\alpha}} } -\mathbb E_{p_{s,a}} \Fbrac{exp\brac{-\frac{V(s'_{s,a})}{\alpha}} }}{\mathbb E_{p_{s,a}} \Fbrac{exp\brac{-\frac{V(s'_{s,a})}{\alpha}} }}  }
    \\& \leq \frac{2r_{\max}}{(1-\gamma)\sigma} 
    \max_{s'_{s,a}}\lbrac{\frac{\hat p_{s,a,2^{N_{\max}+1}}(s'_{s,a})-p_{s,a}(s'_{s,a})  }{p_{s,a}(s'_{s,a})} }\leq \frac{2r_{\max}}{(1-\gamma)\sigma}\sqrt{\frac{N_{\max}}{2^{N_{\max}+1}p^2_\wedge} \log \brac{2|\mathcal{S}|^2 |\mathcal{A}| }},
} where $(i)$ follows from that $|\log(x+1)|\leq 2|x| $ for $|x|\leq \frac{1}{2}$.
Then, according to the Bernoulli's inequality, we have that 
\eqenv{\label{eqkl122}
\brac{1-\frac{2^{-N_{\max}-1}}{\ssa} }^\ssa\geq 1- 2^{-N_{\max}-1}. 
}

Therefore, with probability at least $1-{2^{-{N_{\max}-1}}} $, there exists
\eqenv{\label{eqkl123}
\sup_{s,a} \lbrac{ {f^{*\rho_{KL}(\sigma)}(\hat p_{s,a,2^{N_{\max}+1}},V )}-{f^{*\rho_{KL}(\sigma)}(p_{s,a},V)} }\leq \frac{2r_{\max}}{(1-\gamma)\sigma}\sqrt{\frac{N_{\max}}{2^{N_{\max}+1}p^2_\wedge} \log \brac{2|\mathcal{S}|^2 |\mathcal{A}| }}.
}

Otherwise, with probability at most $2^{-(N_{\max}+1)}$, we can conclude 
that
\eqenv{\label{eqeq75}
\sup_{s,a}\lbrac{{f^*}^{\rho_{KL}(\sigma)}\brac{ p_{s,a}, V }-{{f^*}^{\rho_{KL}(\sigma)}\brac{\hat p_{s,a,2^{N_{\max}+1}},   V^* }}}
    \leq  \max_{s'_{s,a}}V(s'_{s,a})\leq \frac{r_{\max}}{1-\gamma}.
}

Then, consider the expectation, we can get
\eqenv{
   \E&\Fbrac{\sup_{s,a}\lbrac{{f^*}^{\rho_{KL}(\sigma)}\brac{ p_{s,a}, V }-{{f^*}^{\rho_{KL}(\sigma)}\brac{\hat p_{s,a,2^{N_{\max}+1}},   V^* }}}}
   \\& \leq \frac{2r_{\max}}{(1-\gamma)\sigma}\sqrt{\frac{N_{\max}}{2^{N_{\max}+1}p^2_\wedge} \log \brac{2|\mathcal{S}|^2|\mathcal{A}| }}+ 2^{-(N_{\max}+1)}\frac{r_{\max}}{1-\gamma}
   \\&\leq \frac{2r_{\max}}{(1-\gamma)\sigma}{\sqrt{N_{\max}\log \brac{2|\mathcal{S}|^2|\mathcal{A}| } }  }\frac{ 1}{p_\wedge2^\frac{N_{\max}+1}{2}}+2^{-(N_{\max}+1)}\frac{r_{\max}}{1-\gamma} , 
}where we set $C_{KL}=2\sqrt{N_{\max}\log \brac{2|\mathcal{S}|^2|\mathcal{A}| } }  $, then
\eqenv{
\E&\Fbrac{\sup_{s,a}\lbrac{{f^*}^{\rho_{KL}(\sigma)}\brac{ p_{s,a}, V }-{f}^{*\rho_{KL}(\sigma)}\brac{\hat p_{s,a,2^{N_{\max}+1}},   V^* }}}
   \\&\qquad\leq \frac{r_{\max}}{(1-\gamma)\sigma} C_{KL}\frac{ 1}{p_\wedge2^\frac{N_{\max}+1}{2}}+\frac{r_{\max}}{1-\gamma}2^{-(N_{\max}+1)}.
}

Similarly, we can get the bound
    \eqenv{
    \E&\Fbrac{\sup_{s,a}\lbrac{{g^{*\rho_{KL}(\sigma)}(\hat \mu_{s,a,2^{N_{\max}+1}},r_{s,a})}- {g^{*\rho_{KL}(\sigma)}(\mu_{s,a},r_{s,a})} }}
    \\ \qquad\qquad&\leq \frac{r_{\max}C_{KL}}{\sigma p_\wedge} { 2^{-\frac{N_{\max}+1}{2}}}+{r_{\max}}2^{-(N_{\max}+1)}.
    }

Thus, we can get that
    \eqenv{
   \sup_{s,a}\big| \E\Fbrac{\widehat {\mathcal{T}}_{N_{\max}}^{\rho_{KL}(\sigma)} (Q)(s,a) }& - {\mathcal{T}}^{\rho_{KL}(\sigma)} (Q)(s,a)\big|
\\& \leq \E\Fbrac{\sup_{s,a}\lbrac{g^{*\rho_{KL}(\sigma)}(\hat \mu_{s,a,2^{N_{\max}+1}},r_{s,a})- {g^{*\rho_{KL}(\sigma)}(\mu_{s,a},r_{s,a})} }}
\\&\qquad+\gamma\E\Fbrac{\sup_{s,a}\lbrac{f^{*\rho_{KL}(\sigma)}(\hat p_{s,a,2^{N_{\max}+1}},V )-{f^{*\rho_{KL}(\sigma)}(p_{s,a},V )} }}
\\& \leq \brac{\frac{\gamma  r_{\max}}{1-\gamma}+r_{\max} }2^{-\frac{N_{\max}+1}{2}}\brac{2^{-\frac{N_{\max}+1}{2}}+ \frac{C_{KL}}{\sigma p_\wedge} }.
    }


\textbf{Variance: }Next, we consider the variance of the robust Bellman operator. Firstly, we make error decomposition of the robust Bellman operator variance. 
\eqenv{
\text{Var}\brac{\widehat {\mathcal{T}}_{N_{\max}}^{\rho_{KL}(\sigma)} (Q)(s,a) }
&= \text{Var}\brac{\widehat  r^{\rho_{KL}(\sigma)}+ \gamma \widehat{v}^{\rho_{KL}(\sigma)}(Q)(s,a) }
\\&= \text{Var}\brac{\widehat  r^{\rho_{KL}(\sigma)}}
+\gamma^2 \Varr{\widehat{v}^{\rho_{KL}(\sigma)}(Q)(s,a)}.
}
For convenience, we analyze the second term in the above equation. The first term can be bounded similarly. 
\eqenv{
\Varr{\widehat{v}^{\rho_{KL}(\sigma)}(Q)(s,a)}&= 
\E\Fbrac{\brac{\widehat{v}^{\rho_{KL}(\sigma)}(Q)(s,a)}^2  }-\brac{\E\Fbrac{\widehat{v}^{\rho_{KL}(\sigma)}(Q)(s,a) }}^2
\\&\leq \E\Fbrac{\brac{\widehat{v}^{\rho_{KL}(\sigma)}(Q)(s,a)}^2  }.
}
Next, according to the \cref{eq:delta,eq:hatq}, now we compute the expectation of $N_2$ and write a detailed explanation of the variance as follows:
\eqenv{\label{eqeq80}
\E\Fbrac{\brac{\widehat{v}^{\rho_{KL}(\sigma)}(Q)(s,a)}^2  }& = \E\Fbrac{\brac{V(s'_{s,a,0})+ \frac{\delta^{\rho_{KL}(\sigma)}_{s,a,N_2}(Q)(s,a) }{P_{N_2}} }^2}
\\& \leq 2 \E\Fbrac{V(s'_{s,a,0})^2 }+ 2\E \Fbrac{ \brac{\frac{\delta^{\rho_{KL}(\sigma)}_{s,a,N_2}(Q)(s,a) }{P_{N_2}} }^2 }
\\& \leq \frac{2r^2_{\max}}{(1-\gamma)^2\sigma^2}+ 2\sum_{N=0}^{N_{\max}} \E\Fbrac{\brac{\frac{\delta^{\rho_{KL}(\sigma)}_{s,a,N_2}(Q)(s,a) }{P_{N_2}}|N_2=N }^2} P_N
\\& \leq \frac{2r^2_{\max}}{(1-\gamma)^2\sigma^2}+ 2\sum_{N=0}^{N_{\max}} \frac{\E\Fbrac{(\delta^{\rho_{KL}(\sigma)}_{s,a,N}(Q)(s,a))^2} }{P_{N}}.
}
Next, we bound the term $\lbrac{\delta^{\rho_{KL}(\sigma)}_{s,a,N}(Q)(s,a) } $,
\eqenv{
\lbrac{\delta^{\rho_{KL}(\sigma)}_{s,a,N}(Q)(s,a) }&=
\Bigg|\sup_{\alpha\geq 0}\varbrac{f^{\rho_{KL}(\sigma)}(\widehat p_{s,a,2^{N+1}},\alpha,V)} 
     \\&-\frac{1}{2} \sup_{\alpha\geq 0}\varbrac{f^{\rho_{KL}(\sigma)}(\widehat p^E_{s,a,2^{N}},\alpha,V)}
    -\frac{1}{2}\sup_{\alpha\geq 0}\varbrac{f^{\rho_{KL}(\sigma)}(\widehat p^O_{s,a,2^{N}},\alpha,V )}\Bigg|. 
}

Then, we make an error decomposition as follows:
\eqenv{
\lbrac{\delta^{\rho_{KL}(\sigma)}_{s,a,N}(Q)(s,a) }^2&=
\Bigg|\sup_{\alpha\geq 0}\varbrac{f^{\rho_{KL}(\sigma)}(\widehat p_{s,a,2^{N+1}},\alpha,V)} 
     \\&\quad-\frac{1}{2} \sup_{\alpha\geq 0}\varbrac{f^{\rho_{KL}(\sigma)}(\widehat p^E_{s,a,2^{N}},\alpha,V)}
    -\frac{1}{2}\sup_{\alpha\geq 0}\varbrac{f^{\rho_{KL}(\sigma)}(\widehat p^O_{s,a,2^{N}},\alpha,V )}\Bigg|^2
    \\& \leq 3\lbrac{\sup_{\alpha\geq 0}\varbrac{f^{\rho_{KL}(\sigma)}(\widehat p_{s,a,2^{N+1}},\alpha,V)}-{\sup_{\alpha\geq 0}\varbrac{f^{\rho_{KL}(\sigma)}( p_{s,a},\alpha,V)} }  }^2
    \\&\quad+  \frac{3}{4}\lbrac{ \sup_{\alpha\geq 0}\varbrac{f^{\rho_{KL}(\sigma)}(\widehat p^E_{s,a,2^{N}},\alpha,V)}-{\sup_{\alpha\geq 0}\varbrac{f^{\rho_{KL}(\sigma)}(\widehat p_{s,a},\alpha,V)} } }^2
    \\&\quad+\frac{3}{4}\lbrac{\sup_{\alpha\geq 0}\varbrac{f^{\rho_{KL}(\sigma)}(\widehat p^O_{s,a,2^{N}},\alpha,V )}-{\sup_{\alpha\geq 0}\varbrac{f^{\rho_{KL}(\sigma)}( p_{s,a},\alpha,V)} }}^2.\label{eqeq82}
}

\textbf{Case 1: }Combined with the analysis in \cref{eqeq74,eqeq75}, we can conclude that when $N\leq \frac{\log(1+p^2_\wedge\log(2|\mathcal{S}|^2|\mathcal{A}|)\log T )}{\log 2}$, we bound the term $\frac{\E\Fbrac{\lbrac{\delta^{\rho_{KL}(\sigma)}_{s,a,N}(Q)(s,a) }^2 }}{P_N}$ as follows,
\eqenv{
    \lbrac{\delta^{\rho_{KL}(\sigma)}_{s,a,N}(Q)(s,a) }^2\leq \brac{ \frac{r_{\max}}{1-\gamma}}^2,\qquad {\frac{1}{P_N}}= 2^N \leq 1+p^{-2}_\wedge\log(2|\mathcal{S}|^2|\mathcal{A}|)\log T.
}
Hence, we have that
\eqenv{
\frac{\E\Fbrac{\lbrac{\delta^{\rho_{KL}(\sigma)}_{s,a,N}(Q)(s,a) }^2 }}{P_N} \leq  \brac{\frac{r_{\max}}{1-\gamma}}^2\brac{1+p^{-2}_\wedge\log(2|\mathcal{S}|^2|\mathcal{A}|)\log T}.
}

\textbf{Case 2: }When $N>\frac{\log(1+p^2_\wedge\log(2|\mathcal{S}|^2|\mathcal{A}|)\log T )}{\log 2}$, consider the fact $\mathbb P(A\cap B\cap C)\geq 1- \mathbb P(\neg A)-\mathbb P(\neg B)-\mathbb P(\neg C)$, by \cref{eqeq74},
 with probability at least $1-3*2^{-N} $
\eqenv{
\lbrac{\delta^{\rho_{KL}(\sigma)}_{s,a,N}(Q)(s,a) }^2
&\leq 3\brac{C_{KL}\frac{r_{\max}}{p_\wedge(1-\gamma)\sigma} 2^{-\frac{N+1}{2}}}^2
\\& \quad+ \frac{3}{4}\brac{C_{KL}\frac{r_{\max}}{p_\wedge(1-\gamma)\sigma}  2^{-\frac{N}{2}} }^2+ \frac{3}{4}\brac{C_{KL}\frac{r_{\max}}{p_\wedge(1-\gamma)\sigma}  2^{-\frac{N}{2}} }^2
\\& = 3\frac{C^2_{KL} r^2_{\max} }{p_\wedge^2(1-\gamma)^2\sigma^2}2^{-(N+1)}, 
}
 Since $ 0 \leq \sup_{\alpha\geq 0}\varbrac{f^{\rho_{KL}(\sigma)}(q,\alpha,V)}\leq \frac{r_{\max}}{1-\gamma}$ for any distribution $q$, with probability at most $3*2^{-N} $ 
we have that
\eqenv{
 \lbrac{\delta^{\rho_{KL}(\sigma)}_{s,a,N}(Q)(s,a) }^2
 \leq \brac{ \frac{r_{\max}}{1-\gamma}}^2.
}

Above all, we can get that 
\eqenv{
\E\Fbrac{\lbrac{\delta^{\rho_{KL}(\sigma)}_{s,a,N}(Q)(s,a) }^2 }\leq 3\frac{C^2_{KL} r^2_{\max} }{p_\wedge^2(1-\gamma)^2\sigma^2}2^{-(N+1)}+ \brac{ \frac{r_{\max}}{1-\gamma}}^23*2^{-N}.
\label{eqeq86}
}

Combined with \textbf{Case 1} and \textbf{Case 2}, when $\psi=\frac{1}{2}$, $P_N=2^{-N-1}$. Then, we have that 
\eqenv{
\E&\Fbrac{\frac{\lbrac{\delta^{\rho_{KL}(\sigma)}_{s,a,N}(Q)(s,a) }^2 }{P_N}} \\&\leq  \brac{\frac{r_{\max}}{1-\gamma}}^2\brac{1+p^{-2}_\wedge\log(2|\mathcal{S}|^2|\mathcal{A}|)\log T}+\frac{3}{2}\frac{C^2_{KL} r^2_{\max} }{p_\wedge^2(1-\gamma)^2\sigma^2}+ 3\brac{ \frac{r_{\max}}{1-\gamma}}^2.\label{eq:110}
}

Then, by  \cref{eq:110}, we can get the boundary of variance of the robust Bellman operator as follows:
\eqenv{\label{eqeq83}
&\Varr{\widehat{v}^{\rho_{KL}(\sigma)}(Q)(s,a)}\\&\leq \frac{2r^2_{\max}}{(1-\gamma)^2\sigma^2}+ 2\sum_{N=0}^{N_{\max}} \frac{\E\Fbrac{(\delta^{\rho_{KL}(\sigma)}_{s,a,N}(Q)(s,a))^2} }{P_{N}}
\\&\leq \frac{2r^2_{\max}}{(1-\gamma)^2\sigma^2}+4\sum_{N=0}^{N_{\max}}  \brac{ \frac{r_{\max}}{1-\gamma}}^2\brac{4+p^{-2}_\wedge\log(2|\mathcal{S}|^2|\mathcal{A}|)\log T+
 \frac{3}{2}\frac{C^2_{KL}  }{p_\wedge^{2}\sigma^2}}
\\& \leq\brac{2 +4(N_{\max}+1)\brac{4+{\log(2|\mathcal{S}|^2|\mathcal{A}|\log T)}{(1-\gamma)} + \frac{3C^2_{KL}}{2} } }\frac{r^2_{\max}}{p_\wedge^2 (1-\gamma)^2 \sigma^2},
}

Set $C_{\text{var}}= 2 +4(N_{\max}+1)\brac{4+{\log(2|\mathcal{S}|\log T)}{(1-\gamma)} + \frac{3C^2_{KL}}{2} }  $, then 
$$\Varr{\widehat{v}^{\rho_{KL}(\sigma)}(Q)(s,a)}\leq C_{\text{var}}\frac{r^2_{\max}}{p_\wedge^2 (1-\gamma)^2 \sigma^2}. $$

Similarly, we can get the boundary of the variance $\text{Var}\brac{\widehat  r^{\rho_{KL}(\sigma)}}$ as follows:
\eqenv{
\text{Var}\brac{\widehat  r^{\rho_{KL}(\sigma)}}\leq 
C_{\text{var}}\frac{r^2_{\max}}{p_\wedge^2  \sigma^2}.
}

Hence, we can get the robust Bellman operator variance bound:
\eqenv{\label{eqeq88}
\text{Var}\brac{\widehat {\mathcal{T}}_{N_{\max}}^{\rho_{KL}(\sigma)} (Q)(s,a) }
&= \text{Var}\brac{\widehat  r^{\rho_{KL}(\sigma)}}
+\gamma^2 \Varr{\widehat{v}^{\rho_{KL}(\sigma)}(Q)(s,a)}
\nonumber\\&\leq \frac{ C_{\text{var}}}{p_\wedge^2  \sigma^2}\brac{r^2_{\max}+ \frac{\gamma^2 r^2_{\max}}{(1-\gamma)^2}}.
}
   This completes the proof.  
\end{proof}

\begin{lemma}\label{lm:lminfkl}
    For any fixed $Q\in \mathbb R^{|\mathcal{S}||\mathcal{A}|}, s\in \mathcal{S}, a\in \mathcal{A} $, the infinite norm of robust Bellman operator can be bounded as:
    \begin{align}
        \E\Fbrac{\norminf{\widehat {\mathcal{T}}^{\rho_{KL}(\sigma)}_{N_{\max}} (Q)(s,a) }^2 }\leq \widetilde{\mathcal{O }}\brac{{N_{\max}}}.
        \end{align}
\end{lemma}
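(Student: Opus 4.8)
The plan is to follow the same template used for the TV and $\chi^2$ infinite-norm bounds in \Cref{lm:lminftv,lm:lminfchi2}, but to import the two-case analysis on the level number that already appears in the KL variance computation above. Setting $\psi=\frac{1}{2}$ so that $P_N=2^{-N-1}$, I would first write the decomposition
$$\E\Fbrac{\norminf{\widehat {\mathcal{T}}_{N_{\max}}^{\rho_{KL}(\sigma)}(Q)}^2} \leq 4r^2_{\max}+4\gamma^2\frac{r^2_{\max}}{(1-\gamma)^2}+4\E\Fbrac{\sum_{N_1=0}^{N_{\max}} \sup_{s,a}\frac{\brac{\delta^{r,\rho(\sigma)}_{s,a,N_1}}^2}{2^{-N_1-1}}+\gamma^2\sum_{N_2=0}^{N_{\max}}\sup_{s,a}\frac{\brac{\delta^{\rho_{KL}(\sigma)}_{s,a,N_2}(Q)}^2}{2^{-N_2-1}}},$$
exactly as in \cref{eqeq149}, so that it suffices to control each summand $\E\Fbrac{\sup_{s,a}\lbrac{\delta^{\rho_{KL}(\sigma)}_{s,a,N}(Q)}^2/P_N}$ uniformly in $N$.

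For a fixed $N$, I would decompose $\sup_{s,a}\lbrac{\delta^{\rho_{KL}(\sigma)}_{s,a,N}(Q)}^2$ into the full-sample, even-index, and odd-index pieces using $(a-\tfrac12 b-\tfrac12 c)^2\le 3a^2+\tfrac34 b^2+\tfrac34 c^2$, reproducing \cref{eqeq82}. Each of the three pieces is the squared deviation of the empirical KL dual value $f^{*\rho_{KL}(\sigma)}$ from its population counterpart, which is already bounded in \cref{eqeq74,eqeq75} via Hoeffding's inequality applied to the likelihood ratio $\hat p_{s,a}/p_{s,a}$.

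The crucial difference from the TV and $\chi^2$ cases, and the main obstacle, is that the KL concentration bound is effective only once $2^{N+1}$ is large enough that the empirical likelihood-ratio deviation falls below $\frac12$, which is precisely what licenses the linearization $\lbrac{\log(x+1)}\le 2\lbrac{x}$. I would therefore split on the threshold $N^\star=\frac{\log(1+p_\wedge^2\log(2|\mathcal{S}|^2|\mathcal{A}|)\log T)}{\log 2}$ as in the variance proof. For $N\le N^\star$ (Case 1), I use the trivial bound $\lbrac{\delta^{\rho_{KL}(\sigma)}_{s,a,N}(Q)}^2\le\brac{\frac{r_{\max}}{1-\gamma}}^2$ together with $1/P_N=2^N\le 1+p_\wedge^{-2}\log(2|\mathcal{S}|^2|\mathcal{A}|)\log T$; for $N>N^\star$ (Case 2), Hoeffding holds with probability at least $1-3\cdot 2^{-N}$ and yields $\lbrac{\delta^{\rho_{KL}(\sigma)}_{s,a,N}(Q)}^2\le\widetilde{\mathcal{O}}\brac{p_\wedge^{-2}2^{-(N+1)}}$, while the complementary event contributes only $\brac{\frac{r_{\max}}{1-\gamma}}^2\cdot 3\cdot 2^{-N}$. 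Combining both cases and dividing by $P_N$ gives, exactly as in \cref{eq:110}, a bound on $\E\Fbrac{\sup_{s,a}\lbrac{\delta^{\rho_{KL}(\sigma)}_{s,a,N}(Q)}^2/P_N}$ that is independent of $N$ (depending only on $p_\wedge$, $\gamma$, $\sigma$, and logarithmic factors absorbed into $\widetilde{\mathcal{O}}$).

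Finally, summing this uniform-in-$N$ bound over $N=0,\dots,N_{\max}$ produces the factor $(N_{\max}+1)$, and an identical argument applied to the reward deviation $\delta^{r,\rho_{KL}(\sigma)}_{s,a,N_1}$ controls the first sum. Adding the two constant terms $4r^2_{\max}+4\gamma^2 r^2_{\max}/(1-\gamma)^2$ and collecting everything into $\widetilde{\mathcal{O}}$ then yields $\E\Fbrac{\norminf{\widehat {\mathcal{T}}_{N_{\max}}^{\rho_{KL}(\sigma)}(Q)}^2}\le\widetilde{\mathcal{O}}\brac{N_{\max}}$, completing the proof. I expect the only genuine difficulty to be bookkeeping the Case 1 / Case 2 split so that the $p_\wedge^{-2}$ and $\log T$ factors remain inside $\widetilde{\mathcal{O}}$ and do not inflate the $N_{\max}$ dependence.
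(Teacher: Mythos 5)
Your proposal is correct and takes essentially the same route as the paper's own proof: the identical initial decomposition of $\E\Fbrac{\norminf{\widehat {\mathcal{T}}_{N_{\max}}^{\rho_{KL}(\sigma)}(Q)}^2}$, the same three-piece bound on $\sup_{s,a}\lbrac{\delta^{\rho_{KL}(\sigma)}_{s,a,N}(Q)}^2$, and the same two-case split at $N^\star=\frac{\log(1+p_\wedge^2\log(2|\mathcal{S}|^2|\mathcal{A}|)\log T)}{\log 2}$ (trivial bound plus $1/P_N\le 1+p_\wedge^{-2}\log(2|\mathcal{S}|^2|\mathcal{A}|)\log T$ below it, Hoeffding-based likelihood-ratio concentration with failure probability $3\cdot 2^{-N}$ above it), giving a per-$N$ bound independent of $N$ that sums to the $(N_{\max}+1)$ factor. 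The bookkeeping you flag as the only difficulty is handled exactly this way in the paper, with the $p_\wedge^{-2}$, $\sigma^{-2}$, and $\log T$ factors absorbed into $\widetilde{\mathcal{O}}\brac{N_{\max}}$.
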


\begin{proof}[Proof of \Cref{lm:lminfkl}]
We then consider the expectation of infinite norm of robust Bellman operator. Set $\psi=\frac{1}{2}$, and then we make an error decomposition as follows
\eqenv{\label{eqkl149}
\E\Fbrac{\norminf{\widehat {\mathcal{T}}_{N_{\max}}^{\rho_{KL}(\sigma)}(Q)}^2 }\leq 
 4r^2_{\max}+4\gamma^2\frac{r^2_{\max}}{(1-\gamma)^2}+4\E\Fbrac{\sum_{N_1=0}^{N_{\max}} \sup_{s,a}\frac{\brac{\delta^{r,\rho(\sigma)}_{s,a,N_1}
  }^2}{2^{-N_1-1}}+\gamma^2\sum_{N_2=0}^{N_{\max}}\sup_{s,a}\frac{\brac{
\delta^{\rho_{KL}(\sigma)}_{s,a,N_2}(Q)  }^2}{2^{-N_2-1}} }. 
}


For convenience, we analyze the last term in the above equation. 
Consider the term $\sup_{s,a}\lbrac{\delta^{\rho_{KL}(\sigma)}_{s,a,N}(Q)(s,a) }^2 $, 
 we make an error decomposition as follows:
\eqenv{
\sup_{s,a}\lbrac{\delta^{\rho_{KL}(\sigma)}_{s,a,N}(Q)(s,a) }^2&=
\sup_{s,a}\Bigg|\sup_{\alpha\geq 0}\varbrac{f^{\rho_{KL}(\sigma)}(\widehat p_{s,a,2^{N+1}},\alpha,V)} 
        \\& \qquad-\frac{1}{2} \sup_{\alpha\geq 0}\varbrac{f^{\rho_{KL}(\sigma)}(\widehat p^E_{s,a,2^{N}},\alpha,V)}
    -\frac{1}{2}\sup_{\alpha\geq 0}\varbrac{f^{\rho_{KL}(\sigma)}(\widehat p^O_{s,a,2^{N}},\alpha,V )}\Bigg|^2
    \\& \leq  3\sup_{s,a}\lbrac{\sup_{\alpha\geq 0}\varbrac{f^{\rho_{KL}(\sigma)}(\widehat p_{s,a,2^{N+1}},\alpha,V)}-{\sup_{\alpha\geq 0}\varbrac{f^{\rho_{KL}(\sigma)}(p_{s,a},\alpha,V)} }  }^2
    \\&\quad+  \frac{3}{4}\sup_{s,a}\lbrac{ \sup_{\alpha\geq 0}\varbrac{f^{\rho_{KL}(\sigma)}(\widehat p^E_{s,a,2^N},\alpha,V)}-{\sup_{\alpha\geq 0}\varbrac{f^{\rho_{KL}(\sigma)}(p_{s,a},\alpha,V)} } }^2
    \\&\quad+\frac{3}{4}\sup_{s,a}\lbrac{\sup_{\alpha\geq 0}\varbrac{f^{\rho_{KL}(\sigma)}(\widehat p^O_{s,a,2^{N}},\alpha,V )}-{\sup_{\alpha\geq 0}\varbrac{f^{\rho_{KL}(\sigma)}( p_{s,a},\alpha,V)} }}^2
    .\label{eqkl152}
}

\textbf{Case 1:} When $N \leq \frac{\log(1+p_\wedge^2\log(2|\mathcal{S}|^2|\mathcal{A}|)\log T)}{\log 2}$, we can get 
\eqenv{
\sup_{s,a}\lbrac{\delta^{\rho_{KL}(\sigma)}_{s,a,N}(Q)(s,a) }^2\leq \brac{\frac{r_{\max}}{1-\gamma}}^2, \frac{1}{P_N}=2^N \leq 1+ \frac{\log(2|\mathcal{S}|^2|\mathcal{A}|)\log T}{p^2_\wedge}. 
}

Therefore, we have that 
\eqenv{
\frac{\sup_{s,a}\lbrac{\delta^{\rho_{KL}(\sigma)}_{s,a,N}(Q)(s,a) }^2}{P_N}\leq \brac{\frac{r_{\max}}{1-\gamma}}^2 \brac{1+ \frac{\log(2|\mathcal{S}|^2|\mathcal{A}|)\log T}{p^2_\wedge}}. 
}

\textbf{Case 2:} When $N> \frac{\log(1+p_\wedge^2\log(2|\mathcal{S}|^2|\mathcal{A}|)\log T)}{\log 2}$,  combined with the analysis in \cref{eqkl122,eqkl123} and the fact $\mathbb P(A\cap B\cap C)\geq 1- \mathbb P(\neg A)-\mathbb P(\neg B)-\mathbb P(\neg C)$, we can conclude that for any $N\geq 0$, 
 with probability at least $1-3*2^{-N} $
\eqenv{
\sup_{s,a}\lbrac{\delta^{\rho_{KL}(\sigma)}_{s,a,N}(Q)(s,a) }^2
&\leq 3\brac{\frac{C_{KL}r_{\max}}{\sigma(1-\gamma)p_\wedge} 2^{-\frac{N+1}{2}}}^2 + \frac{3}{4}\brac{\frac{C_{KL}r_{\max}}{\sigma(1-\gamma)p_\wedge} 2^{-\frac{N}{2}} }^2+ \frac{3}{4}\brac{\frac{C_{KL}r_{\max}}{\sigma(1-\gamma)p_\wedge} 2^{-\frac{N}{2}} }^2
\\& \leq  6\frac{C^2_{KL} r^2_{\max} }{\sigma^2(1-\gamma)^2p_\wedge^2}2^{-(N+1)}, 
}
 Since $ 0 \leq \sup_{\alpha\geq 0}\varbrac{f^{\rho_{KL}(\sigma)}(q,\alpha,V)}\leq \frac{r_{\max}}{1-\gamma}$ for any distribution $q$, with probability at most $3*2^{-N} $ 
we have that
\eqenv{
\sup_{s,a} \lbrac{\delta^{\rho_{KL}(\sigma)}_{s,a,N}(Q)(s,a) }^2
 \leq \brac{ \frac{r_{\max}}{1-\gamma}}^2.
}

Above all, we can get that 
\eqenv{
\E\Fbrac{\sup_{s,a}\lbrac{\delta^{\rho_{KL}(\sigma)}_{s,a,N}(Q)(s,a) }^2 }&\leq 6\frac{C^2_{KL} r^2_{\max} }{\sigma^2(1-\gamma)^2p_\wedge^2}2^{-(N+1)}+ \brac{ \frac{r_{\max}}{1-\gamma}}^23*2^{-N}
\\&\leq 2\brac{3 \frac{C^2_{KL}}{\sigma^2 p_\wedge^2}+ 3}\brac{ \frac{r_{\max}}{1-\gamma}}^2 2^{-N}. 
}

Combined with \textbf{Case 1} and \textbf{Case 2}, we can get 
\eqenv{\label{eqkl155}
\E\Fbrac{\sup_{s,a}\frac{\lbrac{\delta^{\rho_{KL}(\sigma)}_{s,a,N}(Q)(s,a) }^2}{P_N} }= \brac{6 \frac{C^2_{KL}}{\sigma^2 p_\wedge^2}+ 7+ \frac{\log(2|\mathcal{S}|^2|\mathcal{A}|)\log T}{p^2_\wedge}}\brac{ \frac{r_{\max}}{1-\gamma}}^2.
}

Then, plug \cref{eqkl155} in \cref{eqkl149}, we can get the bound of expectation of infinite norm as follows:
\eqenv{
\E\Fbrac{\sum_{N_2=0}^{N_{\max}}\sup_{s,a}\frac{\brac{
\delta^{\rho_{KL}(\sigma)}_{s,a,N_2}(Q)  }^2}{2^{-N_2-1}} }
&\leq \sum_{N_2=0}^{N_{\max}}\brac{6 \frac{C^2_{KL}}{\sigma^2 p_\wedge^2}+7+ \frac{\log(2|\mathcal{S}|^2|\mathcal{A}|)\log T}{p^2_\wedge}}\brac{ \frac{r_{\max}}{1-\gamma}}^2\\&=\brac{N_{\max}+1}\brac{6 \frac{C^2_{KL}}{\sigma^2 p_\wedge^2}+7+ \frac{\log(2|\mathcal{S}|^2|\mathcal{A}|)\log T}{p^2_\wedge}}\brac{ \frac{r_{\max}}{1-\gamma}}^2. 
}

Similarly, we can get the bound as follows:
\eqenv{
\E\Fbrac{\sum_{N_1=0}^{N_{\max}} \sup_{s,a}\frac{\brac{\delta^{r,\rho(\sigma)}_{s,a,N_1}
  }^2}{2^{-N_1-1}}}&\leq \sum_{N_1=0}^{N_{\max}}\brac{6 \frac{C^2_{KL}}{\sigma^2 p_\wedge^2}+7+ \frac{\log(2|\mathcal{S}|^2|\mathcal{A}|)\log T}{p^2_\wedge}} r_{\max}^2\\&=\brac{N_{\max}+1}\brac{6 \frac{C^2_{KL}}{\sigma^2 p_\wedge^2}+7+ \frac{\log(2|\mathcal{S}|^2|\mathcal{A}|)\log T}{p^2_\wedge}}r^2_{\max}. 
}

Hence, combining \Cref{eqkl149} and the above equations, we can get the robust Bellman operator infinite norm bound:
\eqenv{\label{eqkl62}
\E\Fbrac{\norminf{\widehat {\mathcal{T}}_{N_{\max}}^{\rho_{KL}(\sigma)}(Q)}^2 }\leq 
4\brac{1+\brac{N_{\max}+1}\brac{6 \frac{C^2_{KL}}{\sigma^2 p_\wedge^2}+7+ \frac{\log(2|\mathcal{S}|^2|\mathcal{A}|)\log T}{p^2_\wedge}}}\brac{r^2_{\max}+\gamma^2\brac{ \frac{r_{\max}}{1-\gamma}}^2}.
}
   This completes the proof.  
\end{proof}

\begin{theorem}[Restatement of \cref{thm:kl}]
If we set $\psi=\frac{1}{2}$ and the stepsize as $$\beta_t=\beta=\frac{\log T}{(1-\gamma)T}. $$ Then the output of \cref{alg:example} satisfies that:
 \begin{align}
    \mathbb E &\Fbrac{\mynorm{\widehat Q_T^{\rho_{KL}(\sigma)}-Q^{*\rho_{KL}(\sigma)}}_\infty^2}\nonumber
  \leq\widetilde{\mathcal{O }}\brac{\frac{1}{ p_\wedge^2 (1-\gamma)^5 T}}.
\end{align}
To ensure 
\begin{align}
    \mathbb E \Fbrac{\mynorm{\widehat Q_T^{\rho_{KL}(\sigma)}-Q^{*\rho_{KL}(\sigma)}}_\infty^2}\leq \epsilon^2,\nonumber
\end{align}
the expected total sample complexity $N^{\rho_{KL}(\sigma)}(\epsilon)$ is
 \begin{align}
     N^{\rho_{KL}(\sigma)}(\epsilon) = |\mathcal{S}||\mathcal{A}|N_{\max} T\geq \widetilde{\mathcal{O }}\brac{\frac{|\mathcal{S}||\mathcal{A}|}{ p_\wedge^2 (1-\gamma)^5 \epsilon^2 }}.\nonumber
 \end{align}
\end{theorem}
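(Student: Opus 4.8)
The plan is to mirror the two-part decomposition \eqref{eqa20} already used for the TV case, specializing every estimate to the KL divergence. Writing the update as the stochastic iteration $\widehat Q^{\rho_{KL}(\sigma)}_{t+1}=\widehat Q^{\rho_{KL}(\sigma)}_{t}+\beta_t(\boldsymbol{\bar{\mathcal{T}}}^{\rho_{KL}(\sigma)}_{N_{\max}}(Q^{\rho_{KL}(\sigma)}_{t})-\widehat Q^{\rho_{KL}(\sigma)}_{t}+W_t)$ with martingale noise $W_t=\widehat{\mathcal{T}}^{\rho_{KL}(\sigma)}_{N_{\max}}(Q^{\rho_{KL}(\sigma)}_{t})-\boldsymbol{\bar{\mathcal{T}}}^{\rho_{KL}(\sigma)}_{N_{\max}}(Q^{\rho_{KL}(\sigma)}_{t})$, I would first bound the algorithmic error $\E[\norminf{\widehat Q_T^{\rho_{KL}(\sigma)}-\widehat Q^{*\rho_{KL}(\sigma)}}^2]$ via \Cref{lm:chen}. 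Its three hypotheses are verified exactly as in the TV proof: the $\gamma$-contraction is \Cref{prop:contract}, the zero-mean property $\E[W_t|\mathcal F_t]=0$ holds by construction of the T-MLMC estimator, and the second-moment bound $\E[\norminf{W_t}^2|\mathcal F_t]\le A$ with $B=0$ follows from \Cref{lm:lminfkl} and \eqref{eqkl62}, so that $A=\widetilde{\mathcal O}(N_{\max})$ but now carries the extra $p_\wedge^{-2}$ factor intrinsic to the KL infinity-norm bound. Under the prescribed constant stepsize (chosen so that $\beta\le\frac{(1-\gamma)^2}{128e\log(|\mathcal S||\mathcal A|)}$ for $T$ large and $(1-c_2\beta)^T=\widetilde{\mathcal O}(1/T)$ with $c_2=\frac{1-\gamma}{2}$), the geometric term decays like $1/T$ while $\sum\beta_i^2\prod(1-c_2\beta)\approx\beta/c_2$, giving this first term the order $\widetilde{\mathcal O}(\frac{1}{p_\wedge^2(1-\gamma)^5 T})$.

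For the approximation term $\E[\norminf{\widehat Q^{*\rho_{KL}(\sigma)}-Q^{*\rho_{KL}(\sigma)}}^2]$ I would apply \Cref{lm:a8} to reduce it to $\frac{1}{(1-\gamma)^2}$ times the squared bias $\norminf{\boldsymbol{\bar{\mathcal{T}}}^{\rho_{KL}(\sigma)}_{N_{\max}}(Q^{*\rho_{KL}(\sigma)})-\mathcal T^{\rho_{KL}(\sigma)}(Q^{*\rho_{KL}(\sigma)})}^2$, and then invoke the KL restatement of \Cref{thm:tv1}, whose bias is of order $(\frac{\gamma r_{\max}}{1-\gamma}+r_{\max})2^{-(N_{\max}+1)/2}(2^{-(N_{\max}+1)/2}+\frac{C_{KL}}{\sigma p_\wedge})$. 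Plugging in the prescribed threshold $N_{\max}$, whose first branch $\frac{2\log T}{\log 2}$ forces $2^{-N_{\max}}\le T^{-2}$ and hence makes the squared bias $\widetilde{\mathcal O}(1/T)$, this error is absorbed into the same $\widetilde{\mathcal O}(\frac{1}{p_\wedge^2(1-\gamma)^5 T})$ order. Summing the two contributions through \eqref{eqa20} yields the claimed rate; setting it $\le\epsilon^2$ gives $T\ge\widetilde{\mathcal O}(\frac{1}{p_\wedge^2(1-\gamma)^5\epsilon^2})$, and since $\psi=\tfrac12$ the expected per-iteration sample count is $1+\sum_{n=0}^{N_{\max}}2^{n+1}P_n=N_{\max}+2$, so multiplying by $|\mathcal S||\mathcal A|$ and $T$ produces the stated complexity $\widetilde{\mathcal O}(\frac{|\mathcal S||\mathcal A|}{p_\wedge^2(1-\gamma)^5\epsilon^2})$.

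The main obstacle, and the reason the KL case is genuinely harder than TV or $\chi^2$, lies entirely inside the bias and variance estimates feeding \Cref{lm:lminfkl} and the KL restatement of \Cref{thm:tv1}. Because the KL dual objective involves $\log\E_p[\exp(-V/\alpha)]$ with $\alpha$ possibly as small as $\frac{r_{\max}}{(1-\gamma)\sigma}$, controlling the deviation between the empirical and nominal objectives requires bounding the likelihood ratio $\max_{s'}|\hat p(s')/p(s')-1|$, which is finite only because $\hat p\ll p$ and which injects the $p_\wedge^{-2}$ dependence through Hoeffding's inequality. The delicate point is the second branch of the threshold, $N_{\max}\ge\frac{\log(1+p_\wedge^2\log(2|\mathcal S|)\log T)}{\log 2}$: it is tuned precisely so that $\sqrt{\frac{N_{\max}}{2^{N_{\max}+1}p_\wedge^2}\log(2|\mathcal S|^2|\mathcal A|)}\le\tfrac12$, which in turn licenses the linearization $|\log(1+x)|\le 2|x|$ used in \eqref{eqeq74}. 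This forces the two-case split of the level sum — small levels $N$ bounded crudely by $(\frac{r_{\max}}{1-\gamma})^2$ with $1/P_N$ controlled by the threshold, large levels bounded by the concentration estimate — and getting the bookkeeping of these two regimes to collapse into a single $\widetilde{\mathcal O}(N_{\max})$ variance bound with the correct $p_\wedge^{-2}$ power is where the care is concentrated.
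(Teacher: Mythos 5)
Your proposal is correct and follows essentially the same route as the paper's own proof: the decomposition \eqref{eqa20}, verification of the three hypotheses of \Cref{lm:chen} (contraction via \Cref{prop:contract}, zero-mean noise, second moment via \Cref{lm:lminfkl}), the fixed-point approximation error via \Cref{lm:a8} combined with the KL bias bound, and the identical bookkeeping of the two threshold branches, the Hoeffding likelihood-ratio bound injecting $p_\wedge^{-2}$, and the $\lbrac{\log(1+x)}\leq 2\lbrac{x}$ linearization. The only blemish is a descriptive slip --- the dual variable satisfies $\alpha\leq \frac{r_{\max}}{(1-\gamma)\sigma}$ as an \emph{upper} bound (the difficulty arises when $\alpha$ is small, not when it equals this value) --- which does not affect the argument.
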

\begin{proof}
We consider the stochastic iteration that
\eqenv{
\widehat Q^{\rho_{KL}(\sigma)}_{t+1}=\widehat Q^{\rho_{KL}(\sigma)}_{t}+ \beta_t\brac{\boldsymbol{\bar{\mathcal{T}}}^{\rho_{KL}(\sigma)}_{N_{\max}}\brac{Q^{\rho_{KL}(\sigma)}_{t}}- \widehat Q^{\rho_{KL}(\sigma)}_{t}+ W_t  },
}
where $W_t={\widehat{\mathcal{T}}}^{\rho_{KL}(\sigma)}_{N_{\max}}\brac{Q^{\rho_{KL}(\sigma)}_{t}}-\boldsymbol{\bar{\mathcal{T}}}^{\rho_{KL}(\sigma)}_{N_{\max}}\brac{Q^{\rho_{KL}(\sigma)}_{t}} $. 

Define the filtration $\mathcal{F}_t=\varbrac{Q^{\rho_{KL}(\sigma)}_0, W_0,...,Q^{\rho_{KL}(\sigma)}_{t-1},W_{t-1},Q^{\rho_{KL}(\sigma)}_{t} } $. 
Then, by \cref{thm:kl}, we can get that 
\eqenv{
\E\Fbrac{W_t|\mathcal{F}_t }=0,
}
and  by \cref{lm:lminfkl}, we can get that 
\eqenv{
\E\Fbrac{\norminf{W_t}^2|\mathcal{F}_t }
&\leq {\E\Fbrac{\sup_{s,a}\lbrac{{\widehat{\mathcal{T}}}^{\rho_{KL}(\sigma)}_{N_{\max}}\brac{Q^{\rho_{KL}(\sigma)}_{t}}(s,a)-\boldsymbol{\bar{\mathcal{T}}}^{\rho_{KL}(\sigma)}_{N_{\max}}\brac{Q^{\rho_{KL}(\sigma)}_{t}}(s,a)}^2|\mathcal{F}_t }}
\\& \leq 2{\E\Fbrac{\sup_{s,a} \lbrac{{\widehat{\mathcal{T}}}^{\rho_{KL}(\sigma)}_{N_{\max}}\brac{Q^{\rho_{KL}(\sigma)}_{t}}(s,a)}^2+\sup_{s,a}\lbrac{\boldsymbol{\bar{\mathcal{T}}}^{\rho_{KL}(\sigma)}_{N_{\max}}\brac{Q^{\rho_{KL}(\sigma)}_{t}}(s,a)}^2|\mathcal{F}_t }}
\\&\myineq{\leq}{a} 4\E\Fbrac{\sup_{s,a}2\lbrac{{\widehat{\mathcal{T}}}^{\rho_{KL}(\sigma)}_{N_{\max}}\brac{Q^{\rho_{KL}(\sigma)}_{t}}(s,a)}^2|\mathcal{F}_t }
\\& \myineq{\leq}{b} 16\brac{1+\brac{N_{\max}+1}\brac{6 \frac{C^2_{KL}}{\sigma^2 p_\wedge^2}+7+ \frac{\log(2|\mathcal{S}|^2|\mathcal{A}|)\log T}{p^2_\wedge}}}\brac{r^2_{\max}+\gamma^2\brac{ \frac{r_{\max}}{1-\gamma}}^2}, 
} where $(a) $ follows from that
\eqenv{
\E\Fbrac{\sup_{s,a}\lbrac{\boldsymbol{\bar{\mathcal{T}}}^{\rho_{KL}(\sigma)}_{N_{\max}}\brac{Q^{\rho_{KL}(\sigma)}_{t}}(s,a)}^2|\mathcal{F}_t }&= 
\E\Fbrac{\sup_{s,a}\lbrac{\E\Fbrac{\widehat{\mathcal{T}}^{\rho_{KL}(\sigma)}_{N_{\max}}\brac{Q^{\rho_{KL}(\sigma)}_{t}}(s,a)}}^2\Big|\mathcal{F}_t }
\\& \leq \E\Fbrac{\sup_{s,a}\E\Fbrac{\lbrac{\widehat{\mathcal{T}}^{\rho_{KL}(\sigma)}_{N_{\max}}\brac{Q^{\rho_{KL}(\sigma)}_{t}}(s,a)}^2}\Big|\mathcal{F}_t }
\\& \leq \E\Fbrac{\sup_{s,a} {\lbrac{\widehat{\mathcal{T}}^{\rho_{KL}(\sigma)}_{N_{\max}}\brac{Q^{\rho_{KL}(\sigma)}_{t}}(s,a)}^2}\Big|\mathcal{F}_t }, 
}
and $(b)$ follows from \Cref{lm:lminfkl}.


According to \cref{eq:fixp}, we have that
$$\widehat Q^{*\rho_{KL}(\sigma)}(s,a) = \hatT (\widehat Q^{*\rho_{KL}(\sigma)})(s,a)=\E\Fbrac{\hatt (\widehat Q^{*\rho_{KL}(\sigma)})(s,a)}.$$
Then, apply \cref{lm:chen} \citep{chen2020finite}, set the constant stepsize $\beta_t=\beta= \frac{2\log T}{(1-\gamma)T} $ and $T$ large enough s.t.
$$\beta={\frac{2\log T}{(1-\gamma)T}\leq\frac{(1-\gamma)^2}{128 e  \log(|\mathcal{S}||\mathcal{A}|) }} .$$
We can conclude that
\eqenv{\label{eqeq661}
\E&\Fbrac{\norminf{\widehat Q_{T}^{\rho_{KL}(\sigma)}-\widehat Q^{*\rho_{KL}(\sigma)} }^2}\\&\myineq{\leq}{i} \frac{3}{2} \norminf{\widehat Q_{0}^{\rho_{KL}(\sigma)}-\widehat Q^{*\rho_{KL}(\sigma)} }^2\prod_{j=0}^{T-1} \brac{1-\frac{1-\gamma}{2} \beta_t}  +\frac{16 e \log (|\mathcal{S}||\mathcal{A}|)}{1-\gamma}16r^2_{\max}\brac{1+\frac{\gamma^2}{(1-\gamma)^2}} 
\\&\qquad\cdot\brac{1+\brac{N_{\max}+1}\brac{6 \frac{C^2_{KL}}{\sigma^2 p_\wedge^2}+7+ \frac{\log(2|\mathcal{S}|^2|\mathcal{A}|)\log T}{p^2_\wedge}}}
\sum_{i=0}^{T-1} \beta_i^2 \prod_{t=i+1}^{T-1} (1-
\frac{1-\gamma}{2}\beta_t)
\\& \myineq{\leq}{ii} \frac{3}{2}\frac{r^2_{\max}}{(1-\gamma)^2} \frac{1}{T} +\frac{16 e \log (|\mathcal{S}||\mathcal{A}|)}{1-\gamma}16 r^2_{\max}\brac{1+\brac{N_{\max}+1}\brac{6 \frac{C^2_{KL}}{\sigma^2 p_\wedge^2}+7+ \frac{\log(2|\mathcal{S}|^2|\mathcal{A}|)\log T}{p^2_\wedge}}}\\&\qquad\cdot\brac{1+\frac{\gamma^2}{(1-\gamma)^2}}\frac{4 \log T }{(1-\gamma)^2 T}
, 
} where $(i)$ follows from the \cref{lm:chen}. $(ii)$ follows from $(1-(1-\gamma)\beta/2)^T\leq \frac{1}{T}$.

Set $N_{\max}=\frac{2\log T}{\log 2}$. Then, we make the decomposition and get the bound of $\mathbb E \Fbrac{\mynorm{\widehat Q_T^{\rho_{KL}(\sigma)}-Q^{*\rho_{KL}(\sigma)}}_\infty^2}$ as follows
 \eqenv{
 \mathbb E& \Fbrac{\mynorm{\widehat Q_T^{\rho_{KL}(\sigma)}-Q^{*\rho_{KL}(\sigma)}}_\infty^2}
 \\&\leq 
 2\E\Fbrac{\mynorm{\widehat Q_T^{\rho_{KL}(\sigma)}-\widehat Q^{*\rho_{KL}(\sigma)}}_\infty^2 }+ 2\E\Fbrac{\mynorm{\widehat Q^{*\rho_{KL}(\sigma)}-Q^{*\rho_{KL}(\sigma)}}_\infty^2 }
 \\& \myineq{\leq}{i}
  \frac{2r^2_{\max}}{(1-\gamma)^2 T} +\frac{32 e \log (|\mathcal{S}||\mathcal{A}|)}{1-\gamma}16 r^2_{\max} \brac{1+\brac{N_{\max}+1}\brac{6 \frac{C^2_{KL}}{\sigma^2 p_\wedge^2}+7+ \frac{\log(2|\mathcal{S}|^2|\mathcal{A}|)\log T}{p^2_\wedge}}}\\& \qquad\cdot\brac{1+\frac{\gamma^2}{(1-\gamma)^2}}\frac{4 \log T }{(1-\gamma)^2 T}
+\frac{2}{1-\gamma}\brac{\brac{\frac{\gamma  r_{\max}}{1-\gamma}+r_{\max} }2^{-\frac{N_{\max}+1}{2}}\brac{2^{-\frac{N_{\max}+1}{2}}+ \frac{C_{KL}}{\sigma p_\wedge} }}^2
\\& \myineq{\leq}{ii}  \frac{2r^2_{\max}}{(1-\gamma)^2 T} +\frac{32 e \log (|\mathcal{S}||\mathcal{A}|)}{1-\gamma}16 r^2_{\max} \brac{1+\brac{N_{\max}+1}\brac{6 \frac{C^2_{KL}}{\sigma^2 p_\wedge^2}+7+ \frac{\log(2|\mathcal{S}|^2|\mathcal{A}|)\log T}{p^2_\wedge}}}\\&\qquad\cdot\brac{1+\frac{\gamma^2}{(1-\gamma)^2}}\frac{4 \log T }{(1-\gamma)^2 T}
 +\frac{2}{1-\gamma}\brac{\brac{\frac{\gamma  r_{\max}}{1-\gamma}+r_{\max} }\brac{\frac{1}{T}+ \frac{C_{KL}}{\sigma p_\wedge} }}^2\frac{1}{T}
\\&= \widetilde{\mathcal{O }}\brac{\frac{1}{(1-\gamma)^5p^2_\wedge\sigma^2 T}},
 }where $(i)$ follows from \cref{lm:a8,thm:tv1}. $(ii)$ follows from $2^{\frac{\log T}{\log 2}}\leq \frac{1}{T} $. 

When $\mathbb E \Fbrac{\mynorm{\widehat Q_T^{\rho_{KL}(\sigma)}-Q^{*,\rho_{KL}(\sigma)}}_\infty^2}\leq \epsilon^2 $, the iteration $T\geq \widetilde{\mathcal{O }}\brac{(1-\gamma)^{-5}\epsilon^{-2}p^{-2}_\wedge\sigma^{-2}}$. When $\psi=\frac{1}{2}$, the expected sample size per iteration is $N_{\max}+2$. Above all, the total sample complexity is $\widetilde{\mathcal{O }}\brac{|\mathcal{S}||\mathcal{A}|(1-\gamma)^{-5}\epsilon^{-2} p^{-2}_\wedge\sigma^{-2}}$. 

This completes the proof.     
\end{proof}

\section{Proof of Lemmas and Propositions}\label{sec:proof of lemma}
\begin{proof}[Proof of \cref{lm:a8}]

\eqenv{
&\norminf{\widehat Q^{*\rho(\sigma)}- Q^{*\rho(\sigma)}}\\&= \norminf{\hatT\brac{\widehat Q^{*\rho(\sigma)}}-{\mathcal{T}}^{\rho(\sigma)}\brac{Q^{*\rho(\sigma)}}  }
\\& \leq \norminf{\hatT\brac{\widehat Q^{*\rho(\sigma)}}-\boldsymbol{\bar{\mathcal{T}}}_{\max}^{\rho(\sigma)}\brac{Q^{*\rho(\sigma)}} }
+ \norminf{\hatT\brac{ Q^{*\rho(\sigma)}}-\mathcal{T}^{\rho(\sigma)}\brac{Q^{*\rho(\sigma)}}}
\\& \myineq{\leq}{i} \gamma\norminf{\widehat Q^{*\rho(\sigma)}-Q^{*\rho(\sigma)} }+ \norminf{\hatT\brac{ Q^{*\rho(\sigma)}}-{\mathcal{T}}^{\rho(\sigma)}\brac{Q^{*\rho(\sigma)}}},
} where $(i)$ follows from \cref{prop:contract}. 
\end{proof}

\begin{proof}[Proof of \cref{prop:mlmc}]
    Here we recall the definition of $\delta^{\rho(\sigma)}_{s,a,N_2}(Q)$ that
    \eqenv{
    \delta^{\rho(\sigma)}_{s,a,N_2}(Q):&=\sup_{\alpha\geq 0}\varbrac{f^{\rho(\sigma)}(\widehat p_{s,a,2^{N_2+1}},\alpha,V )} 
    \\&-\frac{1}{2} \sup_{\alpha\geq 0}\varbrac{f^{\rho(\sigma)}(\widehat p^E_{s,a,2^{N_2}},\alpha,V)}
    -\frac{1}{2}\sup_{\alpha\geq 0}\varbrac{f^{\rho(\sigma)}(\widehat p^O_{s,a,2^{N_2}},\alpha,V )}. 
    }
    Then, we recall that
    \begin{align}
        f^{*\rho(\sigma)}(\hat p_{s,a,n},V):=\sup_{\alpha\geq 0} \varbrac{f^{\rho(\sigma)}(\hat p_{s,a,n}, \alpha, V) }.
    \end{align}  
    Thus, we can get that
    \eqenv{
    \E\Fbrac{\delta^{Q,\rho(\sigma)}_{s,a,N_2}|N_2}&= 
    \mathbb E\Fbrac{f^{*\rho(\sigma)}(\hat p_{s,a,2^{N_2+1}},V )|N_2}\\&\qquad-\frac{1}{2}\mathbb E\Fbrac{f^{*\rho(\sigma)}(\hat p^O_{s,a,2^{N_2+1}},V )|N_2}-\frac{1}{2}\mathbb E\Fbrac{f^{*\rho(\sigma)}(\hat p^E_{s,a,2^{N_2+1}},V )|N_2}
    \\&=E\Fbrac{f^{*\rho(\sigma)}(\hat p_{s,a,2^{N_2+1}},V )|N_2}-E\Fbrac{f^{*\rho(\sigma)}(\hat p_{s,a,2^{N_2}},V )|N_2}
    . 
    }
    Take the expectation of the random variable $N_2\sim \text{Geo}(\psi)$,
    we can obtain that
    \eqenv{
    \E&\Fbrac{\widehat{v}^{\rho(\sigma)}(Q(s,a)) }\\&=\E\Fbrac{V(s'_{s,a,0})+\frac{\delta^{Q,\rho(\sigma)}_{s,a,N_2} }{P_{N_2}}}
    \\&= \E[V(s'_{s,a,0})]+\E\Fbrac{ \frac{\delta^{Q,\rho(\sigma)}_{s,a,N_2} }{P_{N_2}} }
    \\&\myineq{=}{i} \E[V(s'_{s,a,0})] + \sum_{N=0}^{N_{\max}} \E\Fbrac{\frac{\delta^{Q,\rho(\sigma)}_{s,a,N_2} }{P_{N_2}}|N_2=N }\prob\brac{N}  + \sum_{N=N_{\max}+1}^\infty  \E\Fbrac{\frac{\delta^{Q,\rho(\sigma)}_{s,a,N_2} }{P_{N_2}}|N_2=N }\prob\brac{N}
    \\& \myineq{=}{ii}E\Fbrac{f^{*\rho(\sigma)}(\hat p_{s,a,2^{0}},V )} + \sum_{N=0}^{N_{\max}} \E\Fbrac{\delta^{Q,\rho(\sigma)}_N }
    \\&  \myineq{=}{iii} E\Fbrac{f^{*\rho(\sigma)}(\hat p_{s,a,2^{0}},V )} 
     + \sum_{N=0}^{N_{\max}} E\Fbrac{f^{*\rho(\sigma)}(\hat p_{s,a,2^{N+1}},V )}-E\Fbrac{f^{*\rho(\sigma)}(\hat p_{s,a,2^{N}},V )}
    \\&= E\Fbrac{f^{*\rho(\sigma)}(\hat p_{s,a,2^{N_{\max}+1}},V )},
    } where $(i)$ and $(ii)$ follows from the \cref{eq:hatq}; $(iii)$ follows from \cref{def:3.1}. 

    This completes the proof. 
\end{proof}

\begin{proof}[Proof of \Cref{prop:contract}]
    For any $Q,Q'\in \mathbb R^{|\mathcal{S}||\mathcal{A}|} $, we have that
    \eqenv{\label{eq:eq39}
    \hatT(& Q)(s,a)-\hatT( Q')(s,a)\\&=  \E\Fbrac{g^{*\rho(\sigma)}(\hat \mu_{s,a,2^{N_{\max}+1}},r_{s,a})+ \gamma  f^{*\rho(\sigma)}(\hat p_{s,a,2^{N_{\max}+1}},V )  }
\\&\qquad - \E\Fbrac{g^{*\rho(\sigma)}(\hat \mu_{s,a,2^{N_{\max}+1}},r_{s,a})+ \gamma  f^{*\rho(\sigma)}(\hat p_{s,a,2^{N_{\max}+1}},V' )  }
\\& = \gamma\brac{  \E\Fbrac{f^{*\rho(\sigma)}(\hat p_{s,a,2^{N_{\max}+1}},V )}  -\E\Fbrac{  f^{*\rho(\sigma)}(\hat p_{s,a,2^{N_{\max}+1}},V' )  } }
\\& =\gamma\E_{\hat p_{s,a,2^{N_{\max}+1}}}\Fbrac{\inf_{\rho(q,\hat p_{s,a,2^{N_{\max}+1}})\leq \sigma } \E_{q}[V' (s'_{s,a})]-\inf_{\rho(q,\hat p_{s,a,2^{N_{\max}+1}})\leq \sigma } \E_{q}[  V (s'_{s,a})]  }
\\&=\gamma \E_{\hat p_{s,a,2^{N_{\max}+1}}}\Bigg[\inf_{\rho(q,\hat p_{s,a,2^{N_{\max}+1}})\leq \sigma } \E_{q}[  \max_{a'}Q' (s'_{s,a},a')]
\\& \qquad\qquad-\inf_{\rho(q,\hat p_{s,a,2^{N_{\max}+1}})\leq \sigma } \E_{q}[  \max_{a'}Q (s'_{s,a},a')]  \Bigg]
    }
    Hence, consider the infinite norm of both sides \cref{eq:eq39}, we can get 
    \eqenv{
    &\norminf{\hatT (Q)-\hatT(Q') }
    \\&\leq \max_{s,a}\lbrac{\hatT( Q)(s,a)-\hatT( Q')(s,a)  }
    \\& =\gamma\max_{s,a}\Bigg|\E_{\hat p_{s,a,2^{N_{\max}+1}}}\Bigg[\inf_{\rho(q,\hat p_{s,a,2^{N_{\max}+1}})\leq \sigma } \E_{q}[  \max_{a'}Q' (s'_{s,a},a')]\\& \qquad\qquad\qquad\qquad\qquad\qquad\qquad\qquad-\inf_{\rho(q,\hat p_{s,a,2^{N_{\max}+1}})\leq \sigma } \E_{q}[  \max_{a'}Q (s'_{s,a},a')]  \Bigg] \Bigg|
    \\& \leq \gamma\max_{s,a}\Bigg|\E_{\hat p_{s,a,2^{N_{\max}+1}}}\Bigg[\sup_{\rho(q,\hat p_{s,a,2^{N_{\max}+1}})\leq \sigma } \E_{q}[  \max_{a'}Q' (s'_{s,a},a')
    -\max_{a'}Q (s'_{s,a},a')] \Bigg] \Bigg |
    \\& \leq \gamma\max_{s,a}\max_{s'_{s,a}}\lbrac{\max_{a'}Q' (s'_{s,a},a')-\max_{a'}Q (s'_{s,a},a') }
    \\& \leq \gamma\max_{s'} \max_{a'}\lbrac{Q(s',a')-Q'(s',a')}
    \\&= \gamma \norminf{Q-Q'}.
    }

\end{proof}

\begin{proof}[Proof of \Cref{lm:tvlm}] \citep{shi2023curious}
    Firstly, for a fixed $\alpha$, by Bernstein's inequality, we has that with probability at least $1-\delta$,
    \eqenv{\label{eq:eq128}
    \lbrac{\mathbb E_{p_{s,a}}\Fbrac{(V(s'_{s,a}))_\alpha}-\E_{\hat p_{s,a,N}}\Fbrac{(V(s'_{s,a}))_\alpha}}
    \leq \sqrt{\frac{2\log\brac{\frac{2}{\delta}}}{N}}\sqrt{\text{Var}_{p_{s,a}}((V(s'_{s,a}))_\alpha)}+\frac{2r_{\max}\log\brac{\frac{2}{\delta}}}{3N(1-\gamma)}. 
    }

    Then, the term $\max_{0\leq \alpha\leq \max_{s'_{s,a}} V(s'_{s,a}) } \lbrac{\mathbb E_{p_{s,a}}\Fbrac{(V(s'_{s,a}))_\alpha}-\E_{\hat p_{s,a,N}}\Fbrac{(V(s'_{s,a}))_\alpha}}$ is $1$-Lipschitz w.r.t. $\alpha$ for any $V$ obeying $\norminf{V}\leq \frac{r_{\max}}{1-\gamma}$.  In addition, we construct an $\epsilon_1$-net $N_{\epsilon_1}$ over $[0, \frac{r_{\max}}{1-\gamma}]$ whose size satisfies $|N_{\epsilon_1}|\leq \frac{3r_{\max}}{\epsilon_1(1-\gamma)}$ \citep{vershynin2018high}. By union bound and \Cref{eq:eq128}, with probability at least $1-{\delta}$, we have that for all $\alpha\in N_{\epsilon_1}$,  
    \eqenv{
    \lbrac{\mathbb E_{p_{s,a}}\Fbrac{(V(s'_{s,a}))_\alpha}-\E_{\hat p_{s,a,N}}\Fbrac{(V(s'_{s,a}))_\alpha}}
    \leq \sqrt{\frac{2\log\brac{\frac{2|N_{\epsilon_1}|}{\delta}}}{N}}\sqrt{\text{Var}_{p_{s,a}}(V)}+\frac{2r_{\max}\log\brac{\frac{2|N_{\epsilon_1}|}{\delta}}}{3N(1-\gamma)}.
    }

    Then, we have that 
    \eqenv{
   \max_{0\leq \alpha\leq \max_{s'_{s,a}} V(s'_{s,a}) }& \lbrac{\mathbb E_{p_{s,a}}\Fbrac{(V(s'_{s,a}))_\alpha}-\E_{\hat p_{s,a,N}}\Fbrac{(V(s'_{s,a}))_\alpha}}
  \\& \myineq{\leq}{a} \epsilon_1+ \sup_{\alpha\in N_{\epsilon_1}} \lbrac{\mathbb E_{p_{s,a}}\Fbrac{(V(s'_{s,a}))_\alpha}-\E_{\hat p_{s,a,N}}\Fbrac{(V(s'_{s,a}))_\alpha}}
   \\& \myineq{\leq }{b} \epsilon_1+\sqrt{\frac{2\log\brac{\frac{2 |N_{\epsilon_1}|}{\delta}}}{N}}\sqrt{\text{Var}_{p_{s,a}}(V)}+\frac{2r_{\max}\log\brac{\frac{2 |N_{\epsilon_1}|}{\delta}}}{3N(1-\gamma)}
   \\& \myineq{\leq}{c} \sqrt{\frac{2\log\brac{\frac{2 |N_{\epsilon_1}|}{\delta}}}{N}}\sqrt{\text{Var}_{p_{s,a}}(V)}+\frac{2r_{\max}\log\brac{\frac{ |N_{\epsilon_1}|}{\delta}}}{N(1-\gamma)}
   \\& \myineq{\leq}{d} 2\sqrt{\frac{\log\brac{\frac{2 N}{\delta}}}{N}}\norminf{V}+\frac{2r_{\max}\log\brac{\frac{1}{\delta}}}{N(1-\gamma)}
   \\&\myineq{\leq}{e} 3r_{\max}\sqrt{\frac{\log\brac{\frac{2 N}{\delta}}}{(1-\gamma)^2 N} },
    } 
    where $(a)$ follows from that the parameter $\alpha^*= \arg\max_{\alpha}\lbrac{\mathbb E_{p_{s,a}}\Fbrac{(V(s'_{s,a}))_\alpha}-\E_{\hat p_{s,a,N}}\Fbrac{(V(s'_{s,a}))_\alpha} }$ falls into a $\epsilon_1$ balls centered around some point inside $N_{\epsilon_1}$. $(b)$ follows from  \Cref{eq:eq128}. $(c)$ follows from taking $\epsilon_1=\frac{r_{\max}\log\brac{\frac{2 |N_{\epsilon_1}|}{\delta}}}{3N(1-\gamma)}$. $(d)$ follows from that $\lbrac{N_{\epsilon_1}}\leq \frac{3}{\epsilon_1(1-\gamma)}\leq 9N$. $(e)$ follows from the fact that $\norminf{V}\leq\frac{r_{\max}}{1-\gamma}$ and $N\geq \log\brac{\frac{18  N}{\delta}}$. 
    This completes the proof.
\end{proof}

\begin{proof}[Proof of \Cref{lm:chi2lm}] \citep{shi2023curious}
    Firstly, we do error decomposition as follows
    \eqenv{
    &\max_{0\leq \alpha\leq \max_{s'_{s,a}} V(s'_{s,a}) }\lbrac{\E_{p_{s,a}}\Fbrac{(V(s'_{s,a}))_\alpha}-\sqrt{\sigma \text{Var}_{p_{s,a}}((V(s'_{s,a}))_\alpha) }-\E_{p_{s,a}}\Fbrac{(V(s'_{s,a}))_\alpha}+\sqrt{\sigma \text{Var}_{\hat p_{s,a,N}}((V(s'_{s,a}))_\alpha) }   }
    \\& \leq \max_{0\leq \alpha\leq \max_{s'_{s,a}} V(s'_{s,a}) }\lbrac{\E_{p_{s,a}}\Fbrac{(V(s'_{s,a}))_\alpha}-\E_{\hat p_{s,a,N}}\Fbrac{(V(s'_{s,a}))_\alpha} }
    \\&\qquad+\max_{0\leq \alpha\leq \max_{s'_{s,a}} V(s'_{s,a}) }\sqrt{\sigma}\lbrac{\sqrt{ \text{Var}_{p_{s,a}}((V(s'_{s,a}))_\alpha) }-\sqrt{ \text{Var}_{\hat p_{s,a,N}}((V(s'_{s,a}))_\alpha) }}.\label{eq:chi2lm}
    }
    Then, consider the first terms in \Cref{eq:chi2lm}. By Bernstein's inequality, for fixed $\alpha$,with probability at least $1-\delta$, we have that 
    \eqenv{
    \max_{0\leq \alpha\leq \max_{s'_{s,a}} V(s'_{s,a}) }\lbrac{\E_{p_{s,a}}\Fbrac{(V(s'_{s,a}))_\alpha}-\E_{\hat p_{s,a,N}}\Fbrac{(V(s'_{s,a}))_\alpha} }\leq 2r_{\max}\sqrt{\frac{\log \brac{\frac{2  N}{\delta}}}{(1-\gamma)^2 N}}. 
    }

Next, consider the second term in \Cref{eq:chi2lm}. According to the Lemma 6 in \citep{panaganti2022sample}, with probability at least $1-\delta$, we have that
\eqenv{
\lbrac{\sqrt{ \text{Var}_{p_{s,a}}((V(s'_{s,a}))_\alpha) }-\sqrt{ \text{Var}_{\hat p_{s,a,N}}((V(s'_{s,a}))_\alpha) }}\leq \sqrt{\frac{2\log \brac{\frac{2}{\delta}}}{(1-\gamma)^2 N}}.
}

Next, we prove the Lipschitz property of the above term. 
\eqenv{\label{eq:eq134}
&\lbrac{\sqrt{ \text{Var}_{p_{s,a}}((V(s'_{s,a}))_{\alpha_1}) }-\sqrt{ \text{Var}_{\hat p_{s,a,N}}((V(s'_{s,a}))_{\alpha_1}) }}-\lbrac{\sqrt{ \text{Var}_{p_{s,a}}((V(s'_{s,a}))_{\alpha_2}) }-\sqrt{ \text{Var}_{\hat p_{s,a,N}}((V(s'_{s,a}))_{\alpha_2}) }}
\\& \leq \lbrac{\sqrt{ \text{Var}_{p_{s,a}}((V(s'_{s,a}))_{\alpha_1}) }-\sqrt{ \text{Var}_{\hat p_{s,a,N}}((V(s'_{s,a}))_{\alpha_1}) }-\sqrt{ \text{Var}_{p_{s,a}}((V(s'_{s,a}))_{\alpha_2}) }+\sqrt{ \text{Var}_{\hat p_{s,a,N}}((V(s'_{s,a}))_{\alpha_2}) }}
\\& \leq \lbrac{\sqrt{ \text{Var}_{p_{s,a}}((V(s'_{s,a}))_{\alpha_1}) }-\sqrt{ \text{Var}_{ p_{s,a}}((V(s'_{s,a}))_{\alpha_2}) }}+\lbrac{\sqrt{ \text{Var}_{\hat p_{s,a,N}}((V(s'_{s,a}))_{\alpha_1}) }-\sqrt{ \text{Var}_{\hat p_{s,a,N}}((V(s'_{s,a}))_{\alpha_2}) }}
\\& \myineq{\leq}{a} \sqrt{\lbrac{ \text{Var}_{p_{s,a}}((V(s'_{s,a}))_{\alpha_1})  - \text{Var}_{ p_{s,a}}((V(s'_{s,a}))_{\alpha_2}) }}+\sqrt{\lbrac{ \text{Var}_{\hat p_{s,a,N}}((V(s'_{s,a}))_{\alpha_1}) - \text{Var}_{\hat p_{s,a,N}}((V(s'_{s,a}))_{\alpha_2}) }}
\\& \myineq{\leq}{b} 2\sqrt{2(\alpha_1+\alpha_2)\lbrac{\alpha_1-\alpha_2}}
\leq 4\sqrt{\frac{r_{\max}|\alpha_1-\alpha_2|}{1-\gamma}}, 
}where $(a)$ follows from $|\sqrt{x}-\sqrt{y}|\leq \sqrt{|x-y|}$ and $(b)$ follows from that 
\eqenv{
&\lbrac{ \text{Var}_{p_{s,a}}((V(s'_{s,a}))_{\alpha_1})  - \text{Var}_{ p_{s,a}}((V(s'_{s,a}))_{\alpha_2}) }
\\& = \lbrac{\E_{p_{s,a}}\Fbrac{ ((V(s'_{s,a}))_{\alpha_1})^2 }-
\brac{\E_{p_{s,a}}\Fbrac{ (V(s'_{s,a}))_{\alpha_1} } }^2-\E_{p_{s,a}}\Fbrac{ ((V(s'_{s,a}))_{\alpha_2})^2 } +\brac{\E_{p_{s,a}}\Fbrac{ (V(s'_{s,a}))_{\alpha_2}} }^2}
\\& \leq \lbrac{\E_{p_{s,a}}\Fbrac{ ((V(s'_{s,a}))_{\alpha_1})^2 - ((V(s'_{s,a}))_{\alpha_2})^2 }  }+\lbrac{\brac{\E_{p_{s,a}}\Fbrac{ (V(s'_{s,a}))_{\alpha_1} } }^2-\brac{\E_{p_{s,a}}\Fbrac{ (V(s'_{s,a}))_{\alpha_2}} }^2 }
\\& \myineq{\leq}{a} 2(\alpha_1+\alpha_2)|\alpha_1-\alpha_2|,
} where $(a)$ follows from $(V(s'_{s,a}))_{\alpha}\leq \alpha $.

To prove the union bound, we also construct an $\epsilon_2$-net $N_{\epsilon_2} $ over $\Fbrac{0, \frac{r_{\max}}{1-\gamma}}$ \citep{vershynin2018high}.  With probability at least $1-\delta$, we have that
\eqenv{
\max_{0\leq \alpha\leq \max_{s'_{s,a}} V(s'_{s,a}) }&\sqrt{\sigma}\lbrac{\sqrt{ \text{Var}_{p_{s,a}}((V(s'_{s,a}))_\alpha) }-\sqrt{ \text{Var}_{\hat p_{s,a,N}}((V(s'_{s,a}))_\alpha) }}
\\& \myineq{\leq}{a} 4\sqrt{\frac{r_{\max} \epsilon_2}{1-\gamma}}+\sup_{\alpha\in N_{\epsilon_2}} \lbrac{\sqrt{ \text{Var}_{p_{s,a}}((V(s'_{s,a}))_\alpha) }-\sqrt{ \text{Var}_{\hat p_{s,a,N}}((V(s'_{s,a}))_\alpha) }}
\\& \myineq{\leq}{b} 4\sqrt{\frac{r_{\max} \epsilon_2}{1-\gamma}}+
r_{\max}\sqrt{\frac{2\log\brac{\frac{2 |N_{\epsilon_2}|}{\delta}} }{(1-\gamma)^2 N}}
\\& \myineq{\leq}{c} 2r_{\max}\sqrt{\frac{2\log\brac{\frac{2 |N_{\epsilon_2}|}{\delta}} }{(1-\gamma)^2 N}}
\\& \myineq{\leq}{d} 2 r_{\max}\sqrt{\frac{2\log\brac{\frac{24 N}{\delta}} }{(1-\gamma)^2 N}},
}
where $(a)$ follows from the property of $N_{\epsilon_2}$. $(b)$ follows from \Cref{eq:eq134}. $(c)$ follows from taking $\epsilon_2=\frac{r_{\max}\log\brac{\frac{2 |N_{\epsilon_2}|}{\delta} } }{8N(1-\gamma)} $. $(d)$ follows from that $\lbrac{N_{\epsilon_2}}\leq \frac{3}{\epsilon_2(1-\gamma)}\leq 24N$. 

\end{proof}
This completes the proof.

\end{document}